\PassOptionsToPackage{table}{xcolor}
\documentclass{article}

\usepackage[T1]{fontenc}
\usepackage{iclr2027_conference,times}

\usepackage{amsmath,amsfonts,bm}

\def\eqref#1{(\ref{#1})}

\def\1{\bm{1}}

\DeclareMathAlphabet{\mathsfit}{\encodingdefault}{\sfdefault}{m}{sl}
\SetMathAlphabet{\mathsfit}{bold}{\encodingdefault}{\sfdefault}{bx}{n}

\newcommand{\R}{\mathbb{R}}

\usepackage{microtype}
\usepackage{graphicx}
\usepackage{xcolor}

\usepackage{tikz}

\definecolor{contribred}{RGB}{190,105,100}

\newcommand{\contribnum}[1]{%
  \tikz[baseline=(char.base)]{
    \node[
      circle,
      draw=contribred,
      text=contribred,
      line width=0.6pt,
      inner sep=0.7pt,
      font=\scriptsize\bfseries
    ] (char) {#1};
  }%
}

\newif\ifshowrevisions
\showrevisionsfalse

\usepackage{booktabs}
\usepackage{tabularx}
\usepackage{array}
\usepackage{longtable}
\usepackage{float}
\usepackage{placeins}
\usepackage{amsmath}
\usepackage{amssymb}
\usepackage{mathrsfs}
\usepackage{mathtools}
\usepackage{amsthm}
\usepackage{enumitem}
\setlist[itemize]{label=$\bullet$}
\usepackage{url}

\theoremstyle{plain}
\newtheorem{theorem}{Theorem}[section]
\newtheorem{proposition}[theorem]{Proposition}
\newtheorem{lemma}[theorem]{Lemma}
\newtheorem{corollary}[theorem]{Corollary}
\theoremstyle{definition}
\newtheorem{definition}[theorem]{Definition}

\theoremstyle{remark}
\newtheorem{remark}[theorem]{Remark}

\usepackage[bookmarks=false]{hyperref}
\hypersetup{hidelinks}
\usepackage[capitalize,noabbrev]{cleveref}
\crefname{theorem}{theorem}{theorems}
\Crefname{theorem}{Theorem}{Theorems}
\crefname{proposition}{proposition}{propositions}
\Crefname{proposition}{Proposition}{Propositions}
\crefname{lemma}{lemma}{lemmas}
\Crefname{lemma}{Lemma}{Lemmas}
\crefname{corollary}{corollary}{corollaries}
\Crefname{corollary}{Corollary}{Corollaries}
\crefname{definition}{definition}{definitions}
\Crefname{definition}{Definition}{Definitions}
\crefname{assumption}{assumption}{assumptions}
\Crefname{assumption}{Assumption}{Assumptions}
\crefname{remark}{remark}{remarks}
\Crefname{remark}{Remark}{Remarks}

\providecommand{\R}{}
\renewcommand{\R}{\mathbb{R}}

\newcommand{\Z}{\mathbb{Z}}

\newcommand{\rank}{\operatorname{rank}}
\newcommand{\Id}{I}
\newcommand{\OU}{\mathcal{O}}

\newcommand{\bq}{\boldsymbol q}
\newcolumntype{Y}{>{\raggedright\arraybackslash}X}

\title{On Parameters of Nonlinear Scalar Dynamics\\
from Video: Invariants, Calibration,\\
and Identifiability}

\author{%
\textbf{Wenjie Wang}$^{1,*}$ \quad
\textbf{Yuanyuan Wang}$^{2,*}$ \quad
\textbf{Zixiang Jiang}$^{1}$\\
\textbf{Shaoan Xie}$^{3}$ \quad
\textbf{Mingming Gong}$^{1,2,\dagger}$\\[0.5em]
\normalfont $^{1}$University of Melbourne\\
\normalfont $^{2}$Mohamed bin Zayed University of Artificial Intelligence\\
\normalfont $^{3}$Carnegie Mellon University
}

\iclrfinalcopy
\hypersetup{
  pdftitle={On Parameters of Nonlinear Scalar Dynamics from Video: Invariants, Calibration, and Identifiability},
  pdfauthor={Wenjie Wang, Yuanyuan Wang, Zixiang Jiang, Shaoan Xie, Mingming Gong}
}

\begin{document}
\maketitle
\fancyhead[L]{\fontsize{8}{10}\selectfont
On Parameters of Nonlinear Scalar Dynamics from Video:
Invariants, Calibration, and Identifiability}
\makeatletter
\begingroup
\renewcommand{\thefootnote}{}%
\long\def\@makefntext#1{\noindent#1}%
\footnotetext{Preprint.\quad
  $^{*}$Equal contribution.\quad $^{\dagger}$Corresponding author.}
\endgroup
\makeatother

\begin{abstract}
Physical parameter estimation from video aims to recover the parameters of a known family of governing dynamical equations from pixel observations. 
Existing identifiability theory for this setting
has focused on linear time-invariant (LTI) second-order systems, leaving open
what can be identified for nonlinear scalar dynamics.
We develop an identifiability theory for nonlinear scalar second-order ODEs, organized by how their velocity dependence interacts with changes of the learned state coordinate. Under a shared non-collapsed state map and explicit same-state velocity-coverage conditions, we show that parameter identifiability depends on the ODE family: some parameters are uniquely identifiable, while in other families only invariant parameter combinations are identifiable or external physical calibration is required. For laws that are at most linear in velocity, compatibility forces affine coordinate alignment, yielding explicit parameter relations, invariants, and calibration conditions. This affine conclusion extends to broader finite velocity-feature families when coordinate curvature can be separated from the declared velocity dependence. For families admitting a squared-velocity term, nonlinear coordinate ambiguity can remain; a law-derived normalization instead enables affine comparison between canonical laws. Experiments on synthetic systems and real pendulum and free-fall videos support the predicted parameter relations, coverage effects, and calibration requirements.
\end{abstract}

\section{Introduction}
\label{sec:intro}

Physical parameter estimation from video aims to recover the parameters of a
known family of governing dynamical equations from pixel observations.
Recent video models increasingly seek to predict and interact with the
physical world \citep{bruce2024genie,assran2025vjepa2}, but predictive
success alone does not establish which governing physical parameters are
identifiable from visual observations. A more direct line of work therefore
asks the inverse question: given a prescribed dynamical model, can its
governing parameters be recovered from video? Physics-as-inverse-graphics
methods estimate latent physical states and parameters using known
differential equations \citep{jaques2020physics}, while more recent
encoder-based approaches estimate parameters of known continuous dynamics
directly from video \citep{garcia2025learning}. IRIS further systematizes
this setting by pairing real videos with governing equations and independently
measured physical parameters, enabling evaluation of parameter recovery and
identifiability \citep{khanbayov2026iris}. These developments establish
physical parameter recovery from video as a concrete scientific-inference
problem. Identifiability guarantees have begun to emerge in restricted
settings, most notably for linear time-invariant dynamics, while corresponding
results for nonlinear scalar systems remain substantially less understood.
We discuss broader connections to video prediction, quantitative sensing,
and physics-oriented evaluation in
Appendix~\ref{app:related-work-extended}.

For scalar linear time-invariant (LTI) second-order ODEs,
\citet{wang2026physics} establish affine coordinate alignment and exact
coefficient identification under shared-state-map and trajectory-coverage
assumptions. LTI is a favorable special case: the remaining
family-preserving affine transformations leave its coefficients unchanged.
But this no longer holds in general for nonlinear dynamics. 
For example, in the
cubic Duffing law
$z''+\delta z'+\alpha z+\beta z^3=0$, rescaling the state by
$\hat z=\lambda z$ leaves $\delta$ and $\alpha$ invariant under this
transformation but maps $\beta$ to $\beta/\lambda^2$. Thus even affine
coordinate ambiguity can induce parameter ambiguity, while some
velocity-dependent families can admit broader nonlinear coordinate
ambiguity. This motivates our central question:
\emph{for nonlinear scalar dynamics, which parameter information remains
identifiable from video, and when is external physical calibration sufficient
to resolve the remaining ambiguity?}

Answering this question requires more than fitting a richer ODE. The learned
state coordinate and the dynamical parameters must be analyzed jointly,
because multiple coordinate--parameter descriptions may remain compatible
with the same observed dynamics. Two issues then become central. First,
passive videos constrain this ambiguity only through the states and velocities
realized by the observed trajectories, making identifiability dependent on
motion coverage as well as on the equation form. Second, external physical
measurements may resolve residual parameter ambiguity, but their sufficiency
must be characterized rather than replaced by an arbitrary normalization of
the learned representation.
We therefore develop a unified identifiability analysis for nonlinear scalar
second-order ODEs, organized by how the velocity structure of the governing
law interacts with coordinate changes. The physical and fitted dynamics
belong to the same family fixed before training, and a shared encoder provides
the scalar coordinate used for ODE fitting. Under a common non-collapsed
$C^2$ state map and explicit same-state physical-velocity coverage, we
characterize compatible parameter sets, identifiable invariants, and
sufficient conditions for physical calibration.

\textbf{Contributions.}
\contribnum{1} We extend identifiability theory from LTI dynamics to nonlinear
scalar second-order ODEs with nonlinear state dependence and structured
velocity dependence. We characterize when learned coordinates must be affine
and how the remaining coordinate freedom transforms physical parameters;
for polynomial-velocity families in which a squared-velocity term can absorb
coordinate curvature, we instead obtain affine comparison after a
law-derived canonical normalization. Under explicit representation
conditions, we further establish realizability of the semilinear compatible
parameter sets, yielding necessary and sufficient identification criteria.
\contribnum{2}  We turn the theory into a constructive tool for family-specific parameter
analysis. 
Coefficient-weight rules expose invariant parameter combinations,
rank conditions give sufficient local criteria for physical calibration,
and an operational audit connects the required family, shared-map, coverage,
canonical-coordinate, parameter-comparison, and calibration conditions to
empirical diagnostics.
\contribnum{3} We validate the theory on synthetic systems and real pendulum
and free-fall videos, supporting the predicted parameter relations, coverage effects, and calibration behavior.


\section{Problem Setup and Identification Targets}
\label{sec:setup}

\textbf{Observations.}
We observe $M$ passive video clips as timestamped frames
$\mathcal D=\{(t_{m,k},x_k^{(m)})\}$ at known recorded times.
For the structural analysis, each clip samples an ideal frame path
$x^{(m)}(t)$ and has an unobserved physical trajectory
$z^{(m)}\in C^2(I^{(m)};\mathbb R)$.
Clips may have different initial conditions; no controlled input is applied
or observed. Full sampling notation is given in Appendix~\ref{app:notation}.


\textbf{Declared dynamics.}
Before fitting, we fix a parameter space $\Theta$ and a declared family of
laws $F_\xi:D_\xi\times\mathbb R\rightarrow\mathbb R$, $\xi\in\Theta$,
where each $D_\xi$ is an open state interval. The physical trajectories
share one unknown parameter $\theta\in\Theta$: each satisfies
$z''=F_\theta(z,z')$ on its interval, so all realized states lie in
$D_\theta$.
The primary class studied in Section~\ref{sec:scalar-theory} is the semilinear family
\begin{equation}
    z''+a_\xi(z)z'+b_\xi(z)=0,
    \qquad \xi\in\Theta,
    \label{eq:semilinear}
\end{equation}
with continuous $a_\xi,b_\xi$ on $D_\xi$; equivalently,
$F_\xi(u,v)=-a_\xi(u)v-b_\xi(u)$.
Section~\ref{sec:canonical-theory} treats velocity-dependent extensions;
the notation $F_\xi$ does not extend the guarantees to arbitrary
nonlinear families.

\textbf{Direct encoder and shared state map.}
A single encoder $E_\phi$, shared across all clips, maps each frame to a
scalar latent, giving sampled outputs $\hat z_k^{(m)}=E_\phi(x_k^{(m)})$
and ideal paths $\hat z^{(m)}(t)=E_\phi(x^{(m)}(t))$. The learned latent
state $\hat z$ is not assumed to equal the physical state $z$;
Definition~\ref{def:state-consistency} below is the hypothesis relating
the two.

Write $R_z=\bigcup_{m=1}^M z^{(m)}(I^{(m)})$ for the realized
physical-state range; the domain condition above gives
$R_z\subset D_\theta$. Fix a nonempty open interval $U\subset R_z$.
All coverage assumptions and identification claims are stated on $U$:
the data constrain the declared laws only at states the videos visit.

\begin{definition}[Latent-state consistency]
\label{def:state-consistency}
The encoder trajectories are latent-state-consistent on $U$ if there
exists one map $f\in C^2(U)$, common to all clips, such that
\begin{equation}
    \hat z^{(m)}(t)
    =
    f\!\left(z^{(m)}(t)\right)
    \quad
    \text{for every $(m,t)$ with }z^{(m)}(t)\in U.
    \label{eq:state-consistency}
\end{equation}
The map is non-collapsed on $U$ when $f'\not\equiv0$ there; $f'$ may
still vanish at individual states.
\end{definition}

Definition~\ref{def:state-consistency} adapts the state-consistency
hypothesis of \citet{wang2026physics} to multiple clips and the window
$U$. Controlled capture, such as a static camera, fixed exposure and
white balance, near-constant lighting, and a uniform background,
supports this hypothesis in practice; it remains a modeling assumption rather than a
certified property.

\textbf{Identification target.}
A fitted parameter $\eta\in\Theta$ specifies a law in the same declared
family, imposed directly on $\hat z$, with $f(U)\subset D_\eta$.
At interior points with $z^{(m)}(t)\in U$, its continuous-time residual is
$\mathcal R_{\eta,\phi}^{(m)}(t):=
\hat z^{(m)\prime\prime}(t)-F_\eta(\hat z^{(m)}(t),\hat z^{(m)\prime}(t))$.
A triple $(E_\phi,f,\eta)$ is an \emph{admissible report on $U$} when
it satisfies Definition~\ref{def:state-consistency}, the common map is
non-collapsed, all domain conditions hold, and the residual vanishes at
the covering points: the interior trajectory points that realize the
velocity coverage stated in each result.
Our target is any parameter functional $\mathcal J$ satisfying
$\mathcal J(\eta)=\mathcal J(\theta)$ for every admissible report; taking
$\mathcal J$ to be the identity gives full parameter identification.
When dynamics leave residual freedom, we ask which external physical
calibrations identify further quantities.

\textbf{Fitting objective.}
In practice, $(\phi,\eta)$ is estimated from the sampled frames: within
each clip, time derivatives of $\hat z$ are formed by three-point
centered differences, giving a sampled residual
$\widehat r_{m,k}(\phi,\eta)$ at interior indices
(Appendix~\ref{app:sampled_definitions}). The fitting objective is
\begin{equation}
\label{eq:training_objective}
\mathcal L_{\mathrm{total}}(\phi,\eta)
=\frac{1}{N_{\mathrm{int}}}\sum_{m=1}^M\sum_{k=1}^{T_m-1}
\widehat r_{m,k}(\phi,\eta)^2+\lambda_{\mathrm{var}}\mathcal L_{\mathrm{var}},
\end{equation}
where $N_{\mathrm{int}}=\sum_{m=1}^M(T_m-1)$, $\lambda_{\mathrm{var}}>0$,
and the per-clip variance-floor penalty $\mathcal L_{\mathrm{var}}$
discourages a collapsed, near-constant encoder output.
The identifiability results in this paper concern the ideal
continuous-time setting; in practice we minimize this objective or the
variants stated with each experiment, and
Appendix~\ref{app:residual_accounting} quantifies the discretization and
sampling gap between the two.

\section{The semilinear core: orbits, invariants, and anchors}
\label{sec:scalar-theory}

We study parameter identification in the semilinear family
\eqref{eq:semilinear}, under the setup of Section~\ref{sec:setup}.
Trajectory coverage first restricts the learned coordinate to an affine map;
the declared family then determines invariant quantities and how physical
anchors can further constrain compatible parameters.

\subsection{Coordinate Alignment and Compatible Parameters}

\paragraph{Velocity coverage.}
The videos have \emph{three-slope level coverage} on a nonempty open interval
$U$ if at least three pairwise-distinct physical velocities are realized
at interior trajectory points at every $u\in U$. The points may come from one clip
or several clips; the requirement concerns velocities at the same physical
state, not the number of videos or repeated observations alone.

At the covering points, where $\hat z^{(m)}=f(z^{(m)})$ by
Definition~\ref{def:state-consistency}, the physical and fitted laws are
\begin{align}
 z^{(m)\prime\prime}(t)+a_\theta(z^{(m)}(t))z^{(m)\prime}(t)+b_\theta(z^{(m)}(t))&=0, \label{eq:true-semilinear-main}\\
 \hat z^{(m)\prime\prime}(t)+a_\eta(\hat z^{(m)}(t))\hat z^{(m)\prime}(t)+b_\eta(\hat z^{(m)}(t))&=0. \label{eq:latent-semilinear-main}
\end{align}

\begin{theorem}[Three-slope affine collapse]
\label{thm:semilinear-affine-collapse}
Let $U\subset R_z$ be a nonempty open interval and suppose the
videos have three-slope level coverage on $U$.  Assume latent-state consistency
through one $f\in C^2(U)$ with $f(U)\subset D_\eta$ and
$f'\not\equiv0$ on $U$.  Let the coefficient functions in
\eqref{eq:true-semilinear-main}--\eqref{eq:latent-semilinear-main} be continuous
on their respective domains, and assume the physical and encoder-induced paths
satisfy those two laws at all covering points.  Then
$f(u)=\lambda u+\tau$ on $U$ for constants $\lambda\neq0$ and
$\tau\in\R$; that is, $\hat z^{(m)}(t)=\lambda\,z^{(m)}(t)+\tau$ whenever
$z^{(m)}(t)\in U$. Moreover, for every $u\in U$,
\begin{equation}
\label{eq:compat-a-main}
 a_\eta(\lambda u+\tau)=a_\theta(u),
 \qquad
 b_\eta(\lambda u+\tau)=\lambda b_\theta(u)
 .
\end{equation}
\end{theorem}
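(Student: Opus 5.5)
Substitute the chain rule into the fitted law and compare it with the physical law at each covering point. Where $z^{(m)}(t)=u\in U$ and $\hat z=f(z)$, we have $\hat z'=f'(u)z'$ and $\hat z''=f''(u)(z')^2+f'(u)z''$. Use \eqref{eq:true-semilinear-main} to replace $z''=-a_\theta(u)z'-b_\theta(u)$. Then \eqref{eq:latent-semilinear-main} becomes, with $v=z'$,
\begin{equation*}
f''(u)v^2+\bigl(f'(u)a_\eta(f(u))-f'(u)a_\theta(u)\bigr)v+\bigl(b_\eta(f(u))-f'(u)b_\theta(u)\bigr)=0 .
\end{equation*}
This is a polynomial of degree at most two in $v$ whose coefficients depend only on the state $u$. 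By three-slope level coverage it vanishes at three pairwise-distinct velocities $v$ at every $u\in U$, so all three coefficients vanish identically on $U$:
\begin{equation*}
f''\equiv0,\qquad f'(u)\bigl(a_\eta(f(u))-a_\theta(u)\bigr)=0,\qquad b_\eta(f(u))=f'(u)b_\theta(u).
\end{equation*}
This step needs $f\in C^2$, and it needs the points to be interior so that derivatives exist.

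Next, since $U$ is an interval and $f''\equiv0$ there, $f(u)=\lambda u+\tau$. Because $f'\not\equiv0$, we get $\lambda\neq0$. Then $f'=\lambda$ never vanishes, so the middle relation can be divided by $\lambda$ to give $a_\eta(\lambda u+\tau)=a_\theta(u)$. The last relation reads $b_\eta(\lambda u+\tau)=\lambda b_\theta(u)$. These are exactly \eqref{eq:compat-a-main}, and the trajectory statement $\hat z^{(m)}=\lambda z^{(m)}+\tau$ follows from Definition~\ref{def:state-consistency}.

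The only delicate point is the coefficient-extraction step. We must make sure the three distinct velocities at a given $u$ may come from different clips and times, yet produce the same polynomial. They do, because every coefficient depends only on $u$ through $f,f',f'',a_\theta,b_\theta,a_\eta\circ f$, and $f$ is common to all clips. The rest is immediate. Note that the argument never uses the non-collapse assumption before $f''\equiv0$ is established; non-collapse is used only afterwards, to exclude $\lambda=0$, which in turn makes $f'$ nowhere zero and permits the division in the $a$-relation.
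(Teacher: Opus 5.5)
Your proposal is correct and follows the paper's proof essentially step for step. You substitute the chain rule to obtain the quadratic residual in $v$, conclude that its coefficients vanish because it has three distinct roots, deduce from $f''\equiv0$ that $f$ is affine, and use non-collapse to get $\lambda\neq0$ before dividing the damping identity. The one detail the paper states more explicitly is why the composition can be differentiated: $\hat z=f\circ z$ holds on a whole time neighborhood of each covering point, because $U$ is open and $z$ is continuous.
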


Fix a covering point with state $z^{(m)}(t)=u$ and velocity
$z^{(m)\prime}(t)=v$. Substituting $\hat z^{(m)}=f(z^{(m)})$ into
\eqref{eq:latent-semilinear-main} and eliminating $z^{(m)\prime\prime}$
with \eqref{eq:true-semilinear-main} gives
\begin{equation}
\label{eq:semilinear-residual-main}
0=f''(u)v^2
 +f'(u)\bigl(a_\eta(f(u))-a_\theta(u)\bigr)v
 +b_\eta(f(u))-f'(u)b_\theta(u).
\end{equation}
Three distinct realized velocities are roots of this quadratic polynomial,
so its coefficients vanish. In particular, $f''=0$ throughout $U$;
non-collapse then gives a nonzero affine slope. The remaining identities
are \eqref{eq:compat-a-main}, which we call the \emph{orbit equations}.
The full proof is in Appendix~\ref{app:proofs-scalar}.
Three slopes are sharp for a family-uniform affine-collapse guarantee over
unrestricted continuous semilinear coefficients
(Proposition~\ref{prop:two-slope-sharpness}); narrower families
may permit conclusions from fewer slopes.

\paragraph{From coordinates to parameters.}
Affine alignment does not alone determine $\eta$: the transformed law must
also match a member of the declared coefficient family on the transported
interval.
Call $(\lambda,\tau)\in\R^*\times\R$ an \emph{affine witness} for
$\eta\in\Theta$ when $\lambda U+\tau\subset D_\eta$ and
\eqref{eq:compat-a-main} holds on $U$. Define
\[
\OU_U(\theta)
 :=\{\eta\in\Theta:\eta\text{ admits an affine witness on }U\}.
\]
This is the \emph{family-compatible outer set} on $U$; it need not be a
group orbit or an equivalence class for general interval-local
coefficients.

\begin{theorem}[Affine compatibility and singleton identification]
\label{thm:orbit-characterization}
Fix a video collection whose physical trajectories are generated by $\theta$
in the semilinear family of Section~\ref{sec:setup}, with three-slope level
coverage on a nonempty open
interval $U\subset R_z$. Every admissible report $(E_\phi,f,\eta)$ on
$U$ satisfies $\eta\in\OU_U(\theta)$. Thus $\OU_U(\theta)=\{\theta\}$
implies $\eta=\theta$ for every such report.
\end{theorem}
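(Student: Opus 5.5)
The plan is to reduce everything to Theorem~\ref{thm:semilinear-affine-collapse}, checking that an admissible report supplies exactly that theorem's hypotheses.

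Fix an admissible report $(E_\phi,f,\eta)$ on $U$. By definition it satisfies latent-state consistency (Definition~\ref{def:state-consistency}) through one common $f\in C^2(U)$ with $f'\not\equiv0$. It also satisfies the domain condition $f(U)\subset D_\eta$, and the continuous-time residual $\mathcal R^{(m)}_{\eta,\phi}$ vanishes at the covering points. The covering points are those realizing the three-slope level coverage on $U$. At each of them the physical law \eqref{eq:true-semilinear-main} holds because the trajectories are generated by $\theta$. A vanishing residual is exactly the fitted law \eqref{eq:latent-semilinear-main} for $\hat z^{(m)}=E_\phi(x^{(m)})$. The coefficients $a_\theta,b_\theta,a_\eta,b_\eta$ are continuous on their domains by the declaration of the semilinear family. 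So all hypotheses of Theorem~\ref{thm:semilinear-affine-collapse} hold. It yields constants $\lambda\neq0$ and $\tau$ with $f(u)=\lambda u+\tau$ on $U$, together with the orbit equations \eqref{eq:compat-a-main} for every $u\in U$.

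It remains to check that $(\lambda,\tau)$ is an affine witness for $\eta$. We have $(\lambda,\tau)\in\R^*\times\R$, and \eqref{eq:compat-a-main} holds on $U$. The containment $\lambda U+\tau=f(U)\subset D_\eta$ is precisely the admissibility domain condition. Hence $\eta\in\OU_U(\theta)$. The singleton statement is then immediate: if $\OU_U(\theta)=\{\theta\}$, every admissible report has $\eta\in\{\theta\}$.

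There is no real obstacle here. The only care needed is bookkeeping: the residual must vanish at the same interior points that realize the three distinct velocities at each $u\in U$, so that the pointwise argument inside Theorem~\ref{thm:semilinear-affine-collapse} applies at every $u$. The definition of an admissible report builds in exactly this.
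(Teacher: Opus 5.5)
Your proposal is correct and follows the paper's proof: apply Theorem~\ref{thm:semilinear-affine-collapse} to the admissible report to obtain $f(u)=\lambda u+\tau$ with $\lambda\neq0$ and the orbit equations, conclude $\eta\in\OU_U(\theta)$, and read off the singleton case. You also check explicitly that $\lambda U+\tau=f(U)\subset D_\eta$, which is the domain half of the affine-witness definition; the paper's one-line argument leaves this implicit.
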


Any functional constant on $\mathcal O_U(\theta)$ is therefore identifiable.
Under explicit representation assumptions, the next result shows that this
outer set is also realizable.

\begin{proposition}[Realizability of affine-compatible parameters]
\label{prop:affine-realizability}
Under the semilinear setup and coverage assumptions of
Theorem~\ref{thm:orbit-characterization}, let $\mathscr E$ be an encoder class.
Assume:
\begin{enumerate}[label=(\roman*),nosep,leftmargin=1.6em]
\item there exists $E_0\in\mathscr E$ such that
$E_0(x^{(m)}(t))=z^{(m)}(t)$ whenever $z^{(m)}(t)\in U$;
\item $E\in\mathscr E$ implies $\lambda E+\tau\in\mathscr E$ for every
$\lambda\ne0$ and $\tau\in\mathbb R$.
\end{enumerate}
For every $\eta\in\mathcal O_U(\theta)$ and each of its affine witnesses
$(\lambda,\tau)$, the encoder $E_\eta=\lambda E_0+\tau$ realizes an
admissible report with $f(u)=\lambda u+\tau$. Consequently,
\begin{equation}
\{\eta\in\Theta:\exists\,(E,f,\eta)\text{ admissible on }U,
\ E\in\mathscr E\}
=\mathcal O_U(\theta).
\label{eq:realizable-compatible-set}
\end{equation}
\end{proposition}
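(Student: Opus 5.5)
The proof has two inclusions. The inclusion ``$\subseteq$'' in \eqref{eq:realizable-compatible-set} is immediate from Theorem~\ref{thm:orbit-characterization}: every admissible report on $U$, in particular one whose encoder lies in $\mathscr E$, has $\eta\in\mathcal O_U(\theta)$. So the work is the constructive half. Given $\eta\in\mathcal O_U(\theta)$ with witness $(\lambda,\tau)$, we show that $(E_\eta,f,\eta)$ with $E_\eta=\lambda E_0+\tau$ and $f(u)=\lambda u+\tau$ is an admissible report. This gives ``$\supseteq$'', since $E_\eta\in\mathscr E$ by assumption (ii).

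We check the conditions of admissibility one at a time.

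\textbf{State consistency and non-collapse.} By (i), whenever $z^{(m)}(t)\in U$,
\[
\hat z^{(m)}(t)=E_\eta(x^{(m)}(t))=\lambda z^{(m)}(t)+\tau=f(z^{(m)}(t)).
\]
So Definition~\ref{def:state-consistency} holds with one common $f\in C^2(U)$. Non-collapse follows from $f'\equiv\lambda\neq0$.

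\textbf{Domain condition.} We need $f(U)=\lambda U+\tau\subset D_\eta$, which is part of the definition of an affine witness.

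\textbf{Residual.} This is the only step needing a calculation. At a covering point with $z^{(m)}(t)=u\in U$, velocity $v$ and acceleration $z''$, we have $\hat z'=\lambda v$ and $\hat z''=\lambda z''$. Then
\[
\mathcal R=\lambda z''+a_\eta(\lambda u+\tau)\lambda v+b_\eta(\lambda u+\tau).
\]
Applying the orbit equations \eqref{eq:compat-a-main} gives
\[
\mathcal R=\lambda\bigl(z''+a_\theta(u)v+b_\theta(u)\bigr)=0
\]
by \eqref{eq:true-semilinear-main}. This is just \eqref{eq:semilinear-residual-main} with $f''=0$ and both bracketed coefficients vanishing.

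\textbf{Main obstacle.} The main subtlety is bookkeeping rather than analysis. The residual must vanish at all covering points, which are interior points of $I^{(m)}$ with $z^{(m)}(t)\in U$. Differentiating $\hat z^{(m)}$ there requires the identity $\hat z^{(m)}=\lambda z^{(m)}+\tau$ on a neighborhood of $t$, not only at the point $t$. This holds because $U$ is open and $z^{(m)}$ is continuous, so $z^{(m)}(s)\in U$ for all $s$ near $t$, and hence $\hat z^{(m)}$ is $C^2$ near $t$ with the stated derivatives.

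Once these conditions are verified for every witness, both inclusions hold and \eqref{eq:realizable-compatible-set} follows.
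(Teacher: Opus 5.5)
Your proposal is correct and follows the paper's proof essentially step for step. The forward inclusion comes from Theorem~\ref{thm:orbit-characterization}. For the reverse inclusion, both arguments take $E_\eta=\lambda E_0+\tau$, check that it induces a common non-collapsed affine state map with $f(U)\subset D_\eta$, use openness of $U$ and continuity of $z^{(m)}$ to differentiate on a time neighborhood, and then write the fitted residual as $\lambda$ times the physical residual via the orbit equations. The only cosmetic difference is that the paper verifies the fitted law at every interior point with $z^{(m)}(t)\in U$, which includes the covering points, whereas you check the covering points directly.
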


\begin{corollary}[Exact identification criterion]
\label{cor:realizable-invariants}
Under Proposition~\ref{prop:affine-realizability}, a parameter functional
$\mathcal J$ is identified across admissible reports in $\mathscr E$ if and
only if it is constant on $\mathcal O_U(\theta)$. In particular, the full
parameter is identified if and only if $\mathcal O_U(\theta)=\{\theta\}$.
\end{corollary}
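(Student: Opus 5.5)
The plan is to reduce the corollary to the set equality \eqref{eq:realizable-compatible-set} from Proposition~\ref{prop:affine-realizability}, together with the fact that $\theta$ itself lies in $\mathcal O_U(\theta)$. Write $\mathcal S$ for the left-hand side of \eqref{eq:realizable-compatible-set}, the set of parameters reported by some admissible report with encoder in $\mathscr E$. By definition, $\mathcal J$ is identified across admissible reports in $\mathscr E$ exactly when $\mathcal J(\eta)=\mathcal J(\theta)$ for every $\eta\in\mathcal S$. By the proposition this reads: $\mathcal J(\eta)=\mathcal J(\theta)$ for every $\eta\in\mathcal O_U(\theta)$.

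First I check that $\theta\in\mathcal O_U(\theta)$, using the identity witness $(\lambda,\tau)=(1,0)$. Then $\lambda U+\tau=U\subset R_z\subset D_\theta$, and the orbit equations \eqref{eq:compat-a-main} reduce to the tautologies $a_\theta=a_\theta$ and $b_\theta=b_\theta$ on $U$. With this in hand the two directions are immediate.

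If $\mathcal J$ is constant on $\mathcal O_U(\theta)$, then its constant value must be $\mathcal J(\theta)$, since $\theta$ belongs to that set. So $\mathcal J(\eta)=\mathcal J(\theta)$ for all $\eta\in\mathcal S=\mathcal O_U(\theta)$. Only the inclusion $\mathcal S\subset\mathcal O_U(\theta)$ is needed here, and it already follows from Theorem~\ref{thm:orbit-characterization}. Conversely, suppose $\mathcal J$ is identified and take any $\eta\in\mathcal O_U(\theta)$. The realizability half of Proposition~\ref{prop:affine-realizability} produces the admissible report $(\lambda E_0+\tau,\lambda u+\tau,\eta)$ with encoder in $\mathscr E$, so $\mathcal J(\eta)=\mathcal J(\theta)$. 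Hence $\mathcal J$ is constant on $\mathcal O_U(\theta)$.

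For the full-parameter statement I take $\mathcal J=\mathrm{id}_\Theta$. The identity map is constant on a set containing $\theta$ exactly when that set equals $\{\theta\}$.

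The only point needing care is making sure the corollary's notion of ``identified'' quantifies over the same reports as $\mathcal S$, namely admissible reports on $U$ with $E\in\mathscr E$. Otherwise the argument is bookkeeping once the proposition is granted.
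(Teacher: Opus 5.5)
Your proposal is correct and matches the paper's proof essentially line for line. The identity witness puts $\theta$ in $\mathcal O_U(\theta)$, the containment from Theorem~\ref{thm:orbit-characterization} gives the ``if'' direction, the construction $\lambda E_0+\tau$ from Proposition~\ref{prop:affine-realizability} gives the ``only if'' direction (which the paper phrases contrapositively), and taking $\mathcal J=\mathrm{id}_\Theta$ yields the full-parameter claim.
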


In other words, Theorem~\ref{thm:orbit-characterization} shows that every
admissible report must lie in $\mathcal O_U(\theta)$, but this alone does
not imply that every point in the set is achievable by an encoder.
Proposition~\ref{prop:affine-realizability} supplies the converse under
assumptions (i)--(ii): if the physical coordinate can be represented and
the encoder class is closed under affine output transformations, then every
$\eta\in\mathcal O_U(\theta)$ is realized by an admissible encoder.
Hence, under these representation assumptions, $\mathcal O_U(\theta)$ is
exactly the parameter ambiguity left by the videos, and a quantity is
identifiable precisely when it is constant on this set. Without these
assumptions, only the containment result of
Theorem~\ref{thm:orbit-characterization} is guaranteed.

\subsection{Parameter Invariants}
\label{subsec:orbit-calculus-main}

Once translation is excluded, the degree of each monomial in a polynomial
family sets its response to scaling; this is a calculation within that
subclass, not an assumption on all semilinear laws.

\textbf{Translation lock.}
A polynomial family is \emph{translation locked} on a parameter stratum
when every compatible affine witness has $\tau=0$. A structural top gap
suffices: the maximal declared degree $n\ge1$ in the damping or restoring
support has a nonzero coefficient at $\theta$ while degree $n-1$ is
structurally absent (Lemma~\ref{lem:translation-lock});
Appendix~\ref{app:helmholtz-branches} illustrates the unlocked case on
the Helmholtz family.

Once $\tau=0$, the parameter transformation is diagonal in the
polynomial coefficients.

\begin{proposition}[Weight rule for a translation-locked polynomial family]
\label{thm:weight-calculus}
Let $U$ be a nonempty open interval and consider a translation-locked
polynomial family with fixed structural supports $I_a,I_b$.  Write the physical
law as $a_\theta(u)=\sum_{i\in I_a}A_i(\theta)u^i$ and
$b_\theta(u)=\sum_{j\in I_b}B_j(\theta)u^j$.  Assign weights
$w(A_i)=i$ and $w(B_j)=j-1$.  For any compatible report
$\eta\in\OU_U(\theta)$ with scale witness $u\mapsto\lambda u$,
\begin{equation}
\label{eq:weight-scaling}
 A_i(\eta)=\lambda^{-w(A_i)}A_i(\theta),
 \qquad
 B_j(\eta)=\lambda^{-w(B_j)}B_j(\theta).
\end{equation}
Every weight-zero coefficient, and every well-defined product or ratio whose
net weight is zero, is therefore invariant across compatible reports.
\end{proposition}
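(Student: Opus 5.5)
The plan is to read the coefficient relations directly off the orbit equations \eqref{eq:compat-a-main} of Theorem~\ref{thm:semilinear-affine-collapse}, specialised to a scale witness. Since $\eta\in\OU_U(\theta)$, there is an affine witness $(\lambda,\tau)$ with $\lambda U+\tau\subset D_\eta$ satisfying \eqref{eq:compat-a-main} on $U$. Translation lock forces $\tau=0$, so
\[
a_\eta(\lambda u)=a_\theta(u),\qquad b_\eta(\lambda u)=\lambda\,b_\theta(u)\qquad\text{for all }u\in U .
\]
Because the family has fixed structural supports $I_a,I_b$, the fitted law has the form $a_\eta(x)=\sum_{i\in I_a}A_i(\eta)x^i$ and $b_\eta(x)=\sum_{j\in I_b}B_j(\eta)x^j$.

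Substituting these expansions gives two polynomial identities in $u$, valid on the open interval $U$:
\[
\sum_{i\in I_a}A_i(\eta)\lambda^i u^i=\sum_{i\in I_a}A_i(\theta)u^i,\qquad
\sum_{j\in I_b}B_j(\eta)\lambda^j u^j=\lambda\sum_{j\in I_b}B_j(\theta)u^j .
\]
A polynomial that vanishes on a nonempty open interval is identically zero. Hence the coefficients of each monomial agree:
\[
A_i(\eta)\lambda^i=A_i(\theta),\qquad B_j(\eta)\lambda^j=\lambda B_j(\theta).
\]
Since $\lambda\neq0$, we may divide, which yields $A_i(\eta)=\lambda^{-i}A_i(\theta)$ and $B_j(\eta)=\lambda^{-(j-1)}B_j(\theta)$. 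These are exactly \eqref{eq:weight-scaling} with weights $w(A_i)=i$ and $w(B_j)=j-1$.

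For the invariance claim, weight-zero coefficients satisfy $A_i(\eta)=\lambda^0A_i(\theta)=A_i(\theta)$, and likewise for $B_1$. For a monomial $P=\prod_k c_k^{e_k}$ in the coefficients, with integer exponents and every factor with a negative exponent nonzero at $\theta$ so that $P$ is well defined, \eqref{eq:weight-scaling} gives $P(\eta)=\lambda^{-\sum_k e_k w(c_k)}P(\theta)$. A nonzero factor at $\theta$ stays nonzero at $\eta$, because the two differ by a nonzero power of $\lambda$. So when the net weight $\sum_k e_k w(c_k)$ is zero, $P(\eta)=P(\theta)$ for every compatible report.

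The only delicate point is making sure that translation lock, as defined, delivers $\tau=0$ for the specific witness at hand. If $\eta$ has several witnesses, I would phrase the statement per scale witness, which is how it is worded. Uniqueness of $\lambda$ is not needed for this result. Everything else is coefficient matching, which is routine because the supports are fixed and $U$ has nonempty interior.
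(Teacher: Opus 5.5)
Your proposal is correct and follows essentially the same route as the paper's proof. Both set $\tau=0$ by translation lock, match coefficients in $a_\eta(\lambda u)=a_\theta(u)$ and $b_\eta(\lambda u)=\lambda b_\theta(u)$ over the fixed supports on the open interval $U$, and read off the weight-$\lambda^{-\sum_k p_k w(C_k)}$ transformation of rational monomials. Your remark that a factor nonzero at $\theta$ stays nonzero at $\eta$ is a small detail the paper leaves implicit.
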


\noindent\textbf{Example: Duffing invariants, ambiguity, and calibration.}
Returning to cubic Duffing,
$z''+\delta z'+\alpha z+\beta z^3=0$ with $\beta\ne0$,
translation is locked and every compatible report satisfies
\[
 (\delta_\eta,\alpha_\eta,\beta_\eta)
   =(\delta_\theta,\alpha_\theta,\beta_\theta/\lambda^2).
\]
Hence $\delta$, $\alpha$, and $\operatorname{sign}(\beta)$ are invariant,
whereas the magnitude of $\beta$ depends on the latent scale.
Importantly, this is a realizable ambiguity rather than only an outer
possibility: by Proposition~\ref{prop:affine-realizability}, every allowed
scale is realized by an encoder on the same videos, so $|\lambda|\ne1$
produces genuinely different cubic coefficients even under exact
continuous compatibility. For quintic Duffing with $\beta\gamma\ne0$,
$\gamma_\eta=\gamma_\theta/\lambda^4$, making
$\gamma/\beta^2$ invariant. A single known nonzero unsigned physical
displacement between states in $U$ fixes $|\lambda|$ and therefore
calibrates these scale-dependent coefficients, while coordinate reflection
may remain. Thus the physical parameters can be calibrated without selecting
a unique latent coordinate. Table~\ref{tab:regimes} summarizes the
corresponding conclusions for the other families.

\subsection{Physical Calibration and Family-Specific Recovery}
\label{sec:regimes}
An external anchor supplies physical information not fixed by the ODE fit.
For a translation-locked family, a signed pairing of
$u_\star\in U\setminus\{0\}$ with $\hat z_\star=f(u_\star)$ fixes
$\lambda=\hat z_\star/u_\star$ and hence all coefficients through
\eqref{eq:weight-scaling}. An unsigned amplitude pairing fixes only
$|\lambda|$; reflection remains if both sign branches are admitted, and only
odd-weight coefficients change under it. 
Variance normalization is not a physical anchor, and anchors outside $U$
require a separately justified extension of the witness.

For general families, an anchor differential with rank equal to the dimension
of the regular lifted witness--parameter set gives sufficient local
calibration (Proposition~\ref{prop:anchor-rank}); disconnected branches require
separate checks. Table~\ref{tab:regimes} distinguishes parameter recovery from full
coordinate calibration.
All conclusions are relative to $U$ and the stated parameterization.
For the polynomial and real-analytic examples, the coefficient identities
extend across the connected common domain.

\begin{table}[t]
\centering
\small
\caption{Family-specific coordinate ambiguity, identifiable parameter
information, and sufficient physical calibration under the assumptions of
Theorem~\ref{thm:semilinear-affine-collapse}. Details are given in
Appendix~\ref{app:regime-proofs}.}
\label{tab:regimes}
\setlength{\tabcolsep}{3.5pt}
\renewcommand{\arraystretch}{0.65}
\begin{tabularx}{\linewidth}{@{}>{\raggedright\arraybackslash}p{0.37\linewidth}
 >{\raggedright\arraybackslash}p{0.17\linewidth}
 >{\raggedright\arraybackslash}p{0.17\linewidth}Y@{}}
\toprule
\textbf{Declared family} & \textbf{Coordinate gauge} &
\textbf{Invariant information} & \textbf{Sufficient parameter calibration} \\
\midrule
\textbf{LTI}\newline
$z''+\delta z'+\alpha z=0$ &
$\lambda u+\tau$,\newline$\alpha\tau=0$ &
$\delta,\alpha$ & None \\
\addlinespace
\textbf{Angular pendulum}\newline
$z''+\delta z'+\omega^2\sin z=0$ ($\omega>0$) &
$\sigma u+2\pi k$ &
$\delta,\omega$ & None \\
\addlinespace
\textbf{Normalized Van der Pol}\newline
$z''+\mu(z^2-1)z'+z=0$ ($\mu\neq0$) &
$\sigma u$ &
$\mu$ & None \\
\addlinespace
\textbf{Cubic Duffing}\newline
$z''+\delta z'+\alpha z+\beta z^3=0$ ($\beta\neq0$)&
$\lambda u$ &
$\delta,\alpha,$\newline$\operatorname{sign}(\beta)$ &
One unsigned amplitude anchor for $\beta$ \\
\addlinespace
\textbf{Quintic Duffing}\newline
$\begin{gathered}
 z''+\delta z'+\alpha z\\[-0.1em]
 {}+\beta z^3+\gamma z^5=0 
\end{gathered}$ ($\beta\gamma\neq0$) &
$\lambda u$ &
$\delta,\alpha,$\newline$\operatorname{sign}(\beta),$\newline$\gamma/\beta^2$ &
One unsigned amplitude anchor for $\beta,\gamma$ \\
\bottomrule
\end{tabularx}
\end{table}

\subsection{Approximate Residuals}
\label{subsec:robustness}
Theorem~\ref{thm:near-level-stability} bounds the affine mismatch by
residual size, level-matching error, and physical-velocity design
conditioning. These are deterministic transfers, not pixel-to-parameter
rates. Appendix~\ref{app:proofs-robust} gives the bounds and shows why
encoder-coordinate conditioning is no substitute for physical-unit
conditioning; Appendix~\ref{app:sampled-interface} covers discretization
and approximate state consistency.

\section{Beyond Semilinearity: Extending Parameter Identification}
\label{sec:canonical-theory}

The semilinear core allows nonlinear state dependence but only the velocity
features $1$ and $v$. To extend its coordinate-to-parameter argument,
consider the chain rule
\[
 \hat z''=f'(z)z''+\underbrace{f''(z)(z')^2}_{\text{coordinate curvature}}.
\]
If the declared velocity features exclude $v^2$, sufficient coverage again
forces raw affine alignment. If the family admits $v^2$, we instead compare
laws after a law-derived coordinate change, under the polynomial-velocity
hypotheses of Section~\ref{sec:connection-boundary}. Both routes constrain
the original ODE parameters; unique recovery remains family-dependent.

\subsection{When Raw Affine Alignment Still Holds}
\label{subsec:feature-separation}

Consider a fixed velocity-feature space $\mathsf V$ and laws
\begin{equation}
\label{eq:feature-law}
 z''=F_\xi(z,z'),
 \qquad F_\xi(u,\cdot)\in\mathsf V
 \quad (u\in D_\xi).
\end{equation}
The space $\mathsf V$ is finite-dimensional, consists of continuous functions,
contains $1$, and is \emph{dilation stable}:
$\varphi(\alpha\,\cdot)\in\mathsf V$ for every $\varphi\in\mathsf V$ and
$\alpha\in\R$, including zero. This closure keeps a rescaled velocity law
within the same feature space.

\begin{theorem}[Curvature-column exclusion]
\label{thm:feature-separation}
Retain the interval, latent-state-consistency, non-collapse, and domain
assumptions of Theorem~\ref{thm:semilinear-affine-collapse}.
Replace its two semilinear laws by \eqref{eq:feature-law}, satisfied exactly
at the covering points, and its three-slope coverage by the rank condition
below. Suppose $\mathsf V$ has the properties above and $v^2\notin\mathsf V$.
For any basis $\varphi_1,\ldots,\varphi_K$, require at every $u\in U$
realized physical velocities $v_1(u),\ldots,v_{N_u}(u)$ at interior points
such that
\begin{equation}
\label{eq:feature-rank-matrix}
 M_u:=\bigl[\,\varphi_1(v_i)\ \cdots\ \varphi_K(v_i)\ v_i^2\,\bigr]_{i=1}^{N_u},
 \qquad \rank M_u=K+1.
\end{equation}
Then $f(u)=\lambda u+\tau$ on $U$, with $\lambda\ne0$ and $\tau\in\R$, and
\begin{equation}
\label{eq:feature-orbit-equation}
 F_\eta(\lambda u+\tau,\lambda v)=\lambda F_\theta(u,v)
 \quad (u\in U,\ v\in\R).
\end{equation}
\end{theorem}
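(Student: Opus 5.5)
We mirror the proof of Theorem~\ref{thm:semilinear-affine-collapse}, replacing the quadratic-in-$v$ polynomial by an expansion in the basis $\varphi_1,\ldots,\varphi_K$ augmented by the curvature column $v^2$.

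\textbf{Step 1: pointwise identity.} Fix $u\in U$ and a covering point with $z^{(m)}(t)=u$ and $z^{(m)\prime}(t)=v$. The chain rule gives $\hat z'=f'(u)v$ and $\hat z''=f'(u)z''+f''(u)v^2$. Substituting $z''=F_\theta(u,v)$ and $\hat z''=F_\eta(f(u),f'(u)v)$ yields
\[
 F_\eta\bigl(f(u),f'(u)v\bigr)-f'(u)F_\theta(u,v)-f''(u)v^2=0 .
\]
Dilation stability with $\alpha=f'(u)$ puts $v\mapsto F_\eta(f(u),f'(u)v)$ in $\mathsf V$; this includes $\alpha=0$, which is why that closure was required. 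The map $v\mapsto f'(u)F_\theta(u,v)$ is also in $\mathsf V$. Hence the function
\[
 g_u(v):=F_\eta\bigl(f(u),f'(u)v\bigr)-f'(u)F_\theta(u,v)
\]
equals $\sum_k c_k(u)\varphi_k(v)$ for some coefficients $c_k(u)$.

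\textbf{Step 2: rank kills the coefficients.} The identity now reads $\sum_k c_k(u)\varphi_k(v_i)-f''(u)v_i^2=0$ at every realized velocity $v_i(u)$. This says the vector $(c_1(u),\ldots,c_K(u),-f''(u))$ lies in the kernel of $M_u$. Since $\rank M_u=K+1$, this vector is zero. So $f''(u)=0$ and every $c_k(u)=0$, i.e.\ $g_u\equiv0$ as a function on all of $\R$, not merely at the sampled velocities. This is where $v^2\notin\mathsf V$ is essential: it makes the augmented column set genuinely $(K+1)$-dimensional, which is what lets the rank hypothesis be meaningful and separate curvature from the declared velocity dependence.

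\textbf{Step 3: affine form and orbit equation.} Since $f''\equiv0$ on the interval $U$, $f$ is affine, $f(u)=\lambda u+\tau$. Non-collapse ($f'\not\equiv0$) then forces $\lambda\neq0$. Substituting $f'(u)=\lambda$ into $g_u\equiv0$ gives $F_\eta(\lambda u+\tau,\lambda v)=\lambda F_\theta(u,v)$ for all $v\in\R$ and $u\in U$, which is \eqref{eq:feature-orbit-equation}. The domain condition $f(U)\subset D_\eta$ guarantees the left side is defined.

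The main subtlety is the logic at Step 2. Membership of $g_u$ in $\mathsf V$ holds pointwise in $u$, so no continuity in $u$ is needed, and the rank condition applies separately at each $u$. The one thing to check carefully is that the conclusion $g_u\equiv0$ rests on linear independence of the basis $\varphi_1,\ldots,\varphi_K$ as functions. That independence is implied by the full column rank of $M_u$, so nothing beyond the stated hypotheses is required.
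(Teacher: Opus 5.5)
Your proposal is correct and follows essentially the same route as the paper. The paper's proof likewise writes the chain-rule residual $f''(u)v^2+f'(u)F_\theta(u,v)-F_\eta(f(u),f'(u)v)$ as an element of $\mathsf V\oplus\operatorname{span}\{v^2\}$, using dilation stability including the zero factor. Full column rank of $M_u$ then kills the coefficient vector, which gives $f''=0$ and the orbit identity for all $v$. One small remark: your kernel formulation does not actually need $v^2\notin\mathsf V$ as a separate input, because $\rank M_u=K+1$ already implies it.
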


Full rank separates $f''(u)v^2$ from the declared features; the proof is in
Appendix~\ref{app:connection-proofs}. For
$\mathsf V=\operatorname{span}\{1,v\}$ with basis $(1,v)$, the design is
$N_u\times3$; three distinct velocities select a nonsingular Vandermonde
submatrix and recover the semilinear result.

\paragraph{Odd quadratic drag.}
For $F_\xi(u,v)=c_{0,\xi}(u)+c_{1,\xi}(u)v+c_{d,\xi}(u)v|v|$,
any four distinct velocities including a positive and a negative value
give full rank. Affine compatibility then implies
\[
 c_{d,\eta}(\lambda u+\tau)=\frac{c_{d,\theta}(u)}{|\lambda|}.
\]
One-sided velocities cannot separate $v|v|$ from $v^2$; the rank proof and
an exact one-sided construction, with its family restriction, are given in
Appendix~\ref{app:connection-proofs}.

Whenever the theorem applies, \eqref{eq:feature-orbit-equation}, with
$\lambda U+\tau\subset D_\eta$, defines a family-compatible outer set and
yields the same route to invariants and calibration as
Section~\ref{sec:scalar-theory}. Orbit terminology applies to restricted laws
with transported domains; the parameter projection at fixed $U$ need not be
an equivalence class for general interval-local families.
\vspace{-2mm}

\subsection{When Affine Comparison Requires Normalization}
\label{sec:connection-boundary}
\label{subsec:connection-slice}
\label{subsec:canonical-reports}

A squared-velocity channel can absorb coordinate curvature, so raw
affine alignment need not hold; the remedy is a corrective coordinate
computed from each fitted law.

\textbf{A coordinate determined by the law.}
Consider the fixed polynomial-velocity family
\begin{equation}
\label{eq:poly-velocity-law}
 z''=F_\xi(z,z'):=\sum_{r=0}^{p}c_{r,\xi}(z)(z')^r,
 \qquad \xi\in\Theta.
\end{equation}
The finite degree $p$ and the coefficient forms are declared before
fitting; $c_{r,\xi}$ are continuous on $D_\xi$, missing channels are
zero-padded, polynomial dependence is required in velocity. 
The key point is that each law determines its own corrective
coordinate: for each $\xi$, choose $u_{\xi,\star}\in D_\xi$ and let
$\psi_\xi$ solve
\begin{equation}
\label{eq:normalizer-ode}
 \psi_\xi''+c_{2,\xi}\psi_\xi'=0,
 \qquad \psi_\xi'>0,
\end{equation}
with $\psi_\xi(u_{\xi,\star})=0$ and $\psi_\xi'(u_{\xi,\star})=1$.
Then $\psi_\xi$ maps $D_\xi$ diffeomorphically onto
$J_\xi:=\psi_\xi(D_\xi)$, and in the coordinate $q=\psi_\xi(z)$ the
squared-velocity channel vanishes. We call the transformed equation the
\emph{canonical law} of $\xi$; its coefficients $\bar c_{r,\xi}$ are
explicit (Appendix~\ref{app:connection-proofs}). Normalization is a
calculation from the fitted law, not physical calibration.
The model case is a pendulum with angle $q$ observed through the
projection $y=\sin q$ on $|q|<\pi/2$~(Figure~\ref{fig:exp-projection-geometry}): the projected law has
$c_2(y)=-y/(1-y^2)$, and the normalizer is exactly
$\psi(y)=\arcsin y$ up to affine, so the canonical coordinate is the
angle itself (Appendix~\ref{app:connection-details}). The narrow angular-pendulum
family itself excludes the projected law, so this example does not
establish ambiguity within that family.

\textbf{Parameter identification after normalization.}
Comparing each law in its own canonical coordinate restores the affine
picture of Section~\ref{sec:scalar-theory}. Call the canonical laws of
$\theta$ and $\eta$ \emph{affine-compatible} on $\psi_\theta(U)$ if one
affine map $A(q)=\lambda q+\tau$, $\lambda\ne0$, carries
$\psi_\theta(U)$ into $J_\eta$ and matches their coefficients; the exact
identities are \eqref{eq:normalized-affine-orbit} in
Appendix~\ref{app:connection-proofs}. Projecting to the declared
parameters gives the canonical family-compatible outer set
\begin{equation}
\label{eq:canonical-parameter-orbit}
 \mathcal O_U^{\mathrm{can}}(\theta)
 :=\{\eta\in\Theta:\text{the canonical laws are affine-compatible on }
 \psi_\theta(U)\}.
\end{equation}
This is a set of \emph{original parameter values} $\eta$: normalization
changes how laws are compared, not the inference target. The
transported-domain and equivalence-class caveats of
Section~\ref{subsec:feature-separation} apply here.

\begin{theorem}[Canonical compatibility containment for passive video]
\label{thm:canonical-report-orbit}
Let $U\subset R_z$ be a nonempty open interval, with the
coefficient continuity and domains of \eqref{eq:poly-velocity-law}.
Set $d_v=\max\{2,p\}$. Assume at least $d_v+1$ pairwise-distinct physical
velocities are realized at interior trajectory points at every $u\in U$.
Then every admissible report of Section~\ref{sec:setup} satisfies
$\eta\in\mathcal O_U^{\mathrm{can}}(\theta)$ and
\begin{equation}
\label{eq:canonical-encoder-affine}
 \psi_\eta\!\left(\hat z^{(m)}(t)\right)
 =A\!\left(\psi_\theta(z^{(m)}(t))\right)
 =\lambda\psi_\theta(z^{(m)}(t))+\tau
\end{equation}
on every covered frame in $U$, for one affine $A$ common to all clips.
\end{theorem}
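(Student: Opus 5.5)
We first pass both laws to their canonical coordinates and then repeat the argument of Theorem~\ref{thm:semilinear-affine-collapse} there. Set $q^{(m)}=\psi_\theta(z^{(m)})$ and $\hat q^{(m)}=\psi_\eta(\hat z^{(m)})$. By construction of \eqref{eq:normalizer-ode}, each of these paths satisfies a polynomial-velocity law of degree $p$ with no squared-velocity channel, namely the canonical laws of $\theta$ and $\eta$. This follows from the chain rule $q''=\psi'z''+\psi''(z')^2$: the $r=2$ term becomes $(\psi'c_2+\psi'')(z')^2$, which vanishes. The remaining canonical coefficients are $\bar c_{r}(\psi(u))=c_r(u)\,\psi'(u)^{1-r}$ for $r\neq2$, where for $r=0$ we use $\psi'>0$.

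Next, define the composite map $g:=\psi_\eta\circ f\circ\psi_\theta^{-1}$ on $\psi_\theta(U)$. It is $C^2$ because $f\in C^2$ and $\psi_\xi$ is a $C^2$ diffeomorphism, the latter because $\psi'=\exp(-\int c_2)$ with $c_2$ continuous. Latent-state consistency then gives $\hat q^{(m)}=g(q^{(m)})$ at covered frames, with one $g$ common to all clips. At a covering point with canonical state $s$ and canonical velocity $w=\psi_\theta'(u)v$, we substitute into the $\eta$-canonical law and eliminate $q''$ using the $\theta$-canonical law. This gives
\[
0=g''(s)w^2+g'(s)\sum_{r\ne2}\bar c_{r,\theta}(s)w^r-\sum_{r\ne2}\bar c_{r,\eta}(g(s))\,(g'(s)w)^r ,
\]
which is a polynomial in $w$ of degree at most $d_v=\max\{2,p\}$. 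Distinct physical velocities at the same $u$ map to distinct canonical velocities, because $\psi_\theta'(u)>0$. Hence $d_v+1$ distinct roots force every coefficient to vanish. The $w^2$ coefficient gives $g''\equiv0$ on the interval $\psi_\theta(U)$, so $g(s)=\lambda s+\tau$.

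It remains to show $\lambda\neq0$. Since $f'=g'\cdot\psi_\theta'/\psi_\eta'(f)$, non-collapse of $f$ excludes $g'\equiv0$, so $\lambda\ne0$. The remaining coefficient identities, $\lambda\bar c_{r,\theta}(s)=\lambda^r\bar c_{r,\eta}(\lambda s+\tau)$, are exactly the affine-compatibility identities \eqref{eq:normalized-affine-orbit}. Moreover $A(\psi_\theta(U))=\psi_\eta(f(U))\subset J_\eta$, since $f(U)\subset D_\eta$. This gives $\eta\in\mathcal O_U^{\mathrm{can}}(\theta)$, and \eqref{eq:canonical-encoder-affine} is simply $\hat q=g(q)$ restated.

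The main obstacle is regularity and coverage bookkeeping rather than algebra. We must check that the identity is needed only at covering points yet yields $g''=0$ everywhere on $\psi_\theta(U)$; this holds because coverage is assumed at every $u\in U$. We must also confirm that the degree bound is $d_v$ even when $p<2$, since the $g''w^2$ term always appears. Finally, we must handle the chart dependence on $u_{\xi,\star}$: different base points change $\psi_\xi$ only by a positive affine map, so they are absorbed into $A$.
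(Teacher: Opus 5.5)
Your proof is correct, but it takes a different route from the paper's.

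\textbf{The paper's route (raw-first).} Proposition~\ref{prop:poly-coefficient-matching} matches coefficients of the raw residual $f''(u)v^2+f'(u)F_\theta(u,v)-F_\eta(f(u),f'(u)v)$. This yields \eqref{eq:raw-compat-r}--\eqref{eq:raw-compat-2}. There the $v^2$ coefficient gives the connection law $c_{2,\eta}(f(u))f'(u)^2=f''(u)+f'(u)c_{2,\theta}(u)$, not $f''=0$. Theorem~\ref{thm:connection-slice} then computes $g''$ for $g=\psi_\eta\circ f\circ\psi_\theta^{-1}$ explicitly in \eqref{eq:g-second-proof}. That computation shows $g''$ is a positive multiple of the connection-law bracket, so $g''=0$. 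The remaining identities are then transported to \eqref{eq:normalized-affine-orbit}.

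\textbf{Your route (canonical-first).} You normalize first and match coefficients directly on the canonical paths. Neither canonical law has a $w^2$ channel, so the $w^2$ coefficient of your residual is exactly $g''(s)$. Affine collapse then follows just as in the curvature-column exclusion of Theorem~\ref{thm:feature-separation}, applied to the monomial library without $w^2$. You never need the second-derivative identity \eqref{eq:g-second-proof}.

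\textbf{What each buys.} Your argument is shorter and makes the role of normalization transparent. The paper's decomposition delivers more than the containment:
\begin{itemize}
\item the raw identities hold without non-collapse, and they feed Corollary~\ref{cor:linear-damping-invariant};
\item Theorem~\ref{thm:connection-slice} is a data-free equivalence with a bijective witness correspondence and a converse;
\item that equivalence is reused for the canonical lifted set and the raw anchor witness \eqref{eq:canonical-raw-witness-lifted}.
\end{itemize}

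One small slip: in $\bar c_{r}=\psi'^{\,1-r}c_r$, nonvanishing of $\psi'$ matters for the negative powers $r\ge3$, not for $r=0$.
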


The proof combines passive coefficient matching and exact raw--canonical
compatibility (Proposition~\ref{prop:poly-coefficient-matching} and
Theorem~\ref{thm:connection-slice}, Appendix~\ref{app:connection-proofs}).
Functionals constant on this outer set are identifiable; a singleton
guarantees $\eta=\theta$. Constant viscous damping is already invariant
before normalization (Corollary~\ref{cor:linear-damping-invariant}).
Physical anchors constrain $f=\psi_\eta^{-1}\circ A\circ\psi_\theta$,
so a raw amplitude anchor need not be a canonical scale constraint.
Local anchor counting proceeds as in Section~\ref{sec:regimes}
(Proposition~\ref{prop:anchor-rank}).


Table~\ref{tab:result-map} summarizes the two theoretical routes and their
conditions, and Figure~\ref{fig:nonlinear-training-pipeline} summarizes the
workflow from joint training to parameter recovery. These results cover the
stated feature-space and polynomial-velocity settings rather than arbitrary
scalar ODEs; Appendix~\ref{app:web-tool} provides an automated tool for analyzing
new declared families under the same framework.

\section{Experiments}
\label{sec:experiments}\label{page:experiments-start}

We test our theoretical predictions using synthetic and real videos.
A shared encoder maps each frame to a scalar latent $z$; the encoder and
dynamical coefficients are learned jointly without state-coordinate targets.
Section~\ref{sec:exp-orbit} tests coordinate ambiguity and invariant parameter
information; Section~\ref{sec:exp-boundary} tests the roles of velocity
coverage and canonical normalization; Section~\ref{sec:exp-real} examines
physical parameter recovery through anchor calibration on three real cases.


\noindent\textbf{Evaluation metrics.} \label{sec:exp-metrics}
We use three metrics:
(1) {coordinate NRMSE} $e_{\rm map}$ measures held-out coordinate recovery
after a non-test-fitted alignment;
(2) {parameter relative error} $e_\theta$ measures recovery at the
corresponding parameter-orbit point or in calibrated physical units; and
(3) {dynamics relative RMSE} $e_{\rm dyn}$ measures agreement of the
transformed law with reference acceleration or force.
Definitions and reporting conventions are in
Appendix~\ref{app:metric-conventions}, with interpretation limits in
Appendix~\ref{app:operational-details}.

\subsection{Coordinate ambiguity and parameter information}
\label{sec:exp-orbit}

To test the parameter information predicted in Section~\ref{sec:scalar-theory},
we simulate the ODE forms in Table~\ref{tab:regimes} and the quadratic/Helmholtz
family (Appendix~\ref{app:helmholtz-branches}), rendering pendulum or
spring-oscillator videos (Figure~\ref{fig:exp-synth-videos}). Each family has
fixed physical coefficients (Table~\ref{tab:exp-synth-actions}) and rendering,
with ten initial-condition (IC) collections crossed with ten optimizer seeds,
giving 100 fits. IC collections vary the initial physical states; optimizer
seeds control model initialization and training sampling. Allowed coordinate maps
fitted on training references are frozen for held-out coordinate and
parameter comparisons. Appendix~\ref{app:exp-synth-protocol} gives the
observations, clip splits, and training details.


\begin{figure}[!t]
\centering
\vspace{-2mm}
\includegraphics[width=0.98\linewidth]{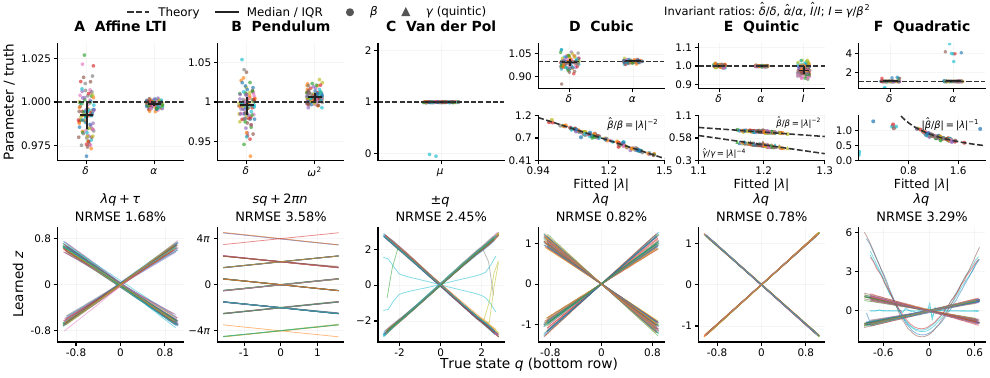}
\vspace{-3mm}
\caption{\textbf{Coordinate maps explain parameter orbits.}
Top: invariant ratios and coefficient orbits; bottom: learned coordinates
and allowed affine maps. Colors match across rows; D--F reuse
training-fitted scales for testing.}
\label{fig:exp-orbit}
\vspace{-3mm}
\end{figure}

Figure~\ref{fig:exp-orbit} supports the predicted link between coordinate
ambiguity and parameter recovery. Affine LTI, pendulum, and normalized
Van der Pol retain their respective affine, sign/periodic, and sign gauges
while recovering nearly invariant coefficients.
Cubic and quintic coefficients follow
$\hat\beta/\beta\simeq|\lambda|^{-2}$ and
$\hat\gamma/\gamma\simeq|\lambda|^{-4}$, while $\delta,\alpha$ and the
quintic ratio $\gamma/\beta^2$ remain nearly invariant
(Eq.~\eqref{eq:weight-scaling}); quadratic follows
$\hat\beta/\beta\simeq\lambda^{-1}$ on its observed origin branch.
Using the same training-fitted $\lambda$ for coordinate and parameter-orbit
comparisons gives median coordinate NRMSE of $0.78$--$3.58\%$ and parameter
relative error of $0.31$--$2.56\%$ (Table~\ref{tab:app-exp-synth-cohort}).
To examine ODE-family misspecification, we also fit affine-LTI models to the
same observations and ICs with matched training settings: the declared
nonlinear family gives lower test dynamics error $e_{\rm dyn}$ in 92--100 of 100
paired fits per family (Appendix~\ref{app:exp-synth-adequacy}).

\subsection{From velocity coverage to canonical coordinates}
\label{sec:exp-boundary}

Using synthetic videos, we test Section~\ref{sec:canonical-theory}'s two
routes: velocity coverage for raw affine alignment, and canonical
normalization for laws with a squared-velocity channel.

\noindent\textbf{Velocity coverage and parameter recovery.}
We test Section~\ref{subsec:feature-separation} on spring-cart videos governed by
$q''=-kq-\delta q'-\kappa q'|q'|$, with
$(k,\delta,\kappa)=(0.8,0.18,0.22)$, and jointly learn an encoder and
$z''=c_0(z)+c_1(z)z'+c_d(z)z'|z'|$.
Full rank of $[1,v,v|v|,v^2]$ predicts affine coordinates and the drag
relation $c_d=-\kappa/|\lambda|$. All conditions use 24 training clips and 5,000 updates; observations,
protocol details, and additional results are provided in
Appendix~\ref{app:exp-synth-coverage}.

\begin{table}[t]
\vspace{-3mm}
\centering\small
\setlength{\tabcolsep}{4pt}
\caption{\textbf{Accurate coordinates do not ensure accurate drag coefficients.}
Ten IC collections $\times$ five optimizer seeds per condition.
Rank is the minimum for $[1,v,v|v|,v^2]$ over shared positions.
$K$ counts speed magnitudes; $e_\theta$ tests the scale-corrected mean drag coefficient.}
\label{tab:exp-coverage}
\begin{tabular}{@{}lccc@{}}
\toprule
Velocity design & Rank & Coordinate $e_{\rm map}$ (\%) & Parameter $e_\theta$ (\%)\\
\midrule
One-way, $K=6$ & $3/4$ & 1.18 [1.08, 1.40] & 44.97 [15.07, 59.87]\\
Two-way, $K=1$ & $2/4$ & 1.08 [0.91, 1.34] & 6.95 [4.85, 8.65]\\
Two-way, $K=2$ & $4/4$ & 1.07 [0.93, 1.22] & 1.34 [0.85, 2.60]\\
\bottomrule
\end{tabular}
\vspace{-3mm}
\end{table}

In Table~\ref{tab:exp-coverage}, coordinate NRMSE stays near $1\%$
while parameter errors differ substantially. One-way motion gives
$v|v|=v^2$, so six speeds still cannot separate drag from coordinate curvature. Two directions and two magnitudes give full rank
and reduce median drag error to $1.34\%$. Accurate coordinate fitting
alone therefore does not ensure accurate parameter recovery;
Figure~\ref{fig:exp-coverage-orbit} reports all six coverage conditions.


{\setlength{\intextsep}{4pt}
\begin{table}[t]
\vspace{-4mm}
\centering
\caption{\textbf{Law-derived normalization.} Median [Q25,Q75] over three fits.}
\label{tab:exp-normalization}
\small
\setlength{\tabcolsep}{10pt}
\begin{tabular}{@{}ccc@{}}
\toprule
Raw coordinate error $e_{\rm map}$ (\%) &
Canonical coordinate error $e_{\rm map}$ (\%) &
Restoring-force error $e_{\rm dyn}$ (\%)\\
\midrule
5.633 [5.629, 5.635] &
0.305 [0.303, 0.305] &
0.193 [0.183, 0.199]\\
\bottomrule
\end{tabular}
\vspace{-2mm}
\end{table}}

\noindent\textbf{Canonical normalization.}
We test whether the fitted-law normalization in Section~\ref{sec:connection-boundary}
removes coordinate distortion under the projected observation $y=\sin q$
(Figure~\ref{fig:exp-projection-geometry}). One encoder and law are jointly
trained on three equal-duration videos, released near $45^\circ,60^\circ,70^\circ$.
Raw and Canonical use the same held-out videos and matched affine readouts
fitted on training references (Appendix~\ref{app:exp-synth-canonical}).
From the result, normalization reduces coordinate NRMSE from $5.633\%$ to $0.305\%$
while recovering the restoring force to $0.193\%$ relative RMSE
(Table~\ref{tab:exp-normalization}). Figure~\ref{fig:exp-normalization-residuals}
shows the removal of systematic distortion. Appendix~\ref{app:exp-synth-canonical-results}
reports detailed results, followed by additional velocity-coverage and
repeated-observation controls in Appendix~\ref{app:norm-velocity-controls}.

\subsection{Physical parameter recovery from real videos}
\label{sec:exp-real}\label{subsec:exp-real}
We use three real-video cases---real pendulum, side-view fall, and overhead
fall---to test invariant coefficients, affine parameter orbits, and canonical
normalization, respectively, and then obtain physical parameter estimates.
The real pendulum recordings come
from IRIS~\citep{khanbayov2026iris}; the side-view and overhead falls are
self-captured. Each case contains multiple videos from repeated recordings
under multiple initial conditions. All three cases
use encoder-only training with an ODE-residual loss and no frame-reconstruction
target. Observations, data details, training procedures, full evaluation,
and additional results and discussion are provided in
Appendix~\ref{app:exp-real}.

Table~\ref{tab:exp-real-parameters} summarizes physical parameter recovery
after anchor calibration. \textit{(1) Pendulum.} With known gravity, the nonlinear pendulum
coefficient yields a length of $0.5150$ m against the IRIS reference
$0.50$ m, reducing median parameter error from the matched LTI model's
$12.92\%$ to $2.92\%$. This comparison shows the estimation cost of
replacing the nonlinear restoring law by its small-angle approximation.
\textit{(2) Side fall.} A ball-diameter anchor converts the scale-corrected
acceleration to $\hat g=9.370\,\mathrm{m/s^2}$, with median error
$5.10\%$. 
\textit{(3) Overhead fall.}
The learned curvature coefficient is
$\hat\rho=2.072\pm0.024$, close to the projective value $\rho=2$.
Canonical normalization reduces median coordinate NRMSE from
$15.85\%$ to $5.57\%$; a fixed post-fit release-height calibration
then gives $\hat g=10.146\,\mathrm{m/s^2}$, with median error $3.35\%$.
These results illustrate how invariant recovery, affine parameter orbits,
and canonical normalization combine with external calibration in
real-video settings.

\begin{table}[t]
\vspace{-2mm}
\centering\small
\caption{\textbf{Real-video coordinate and physical parameter recovery.}
Estimates are mean $\pm$ SD; errors are median [Q25,Q75].
$L$ is in m and $g$ in $\mathrm{m/s^2}$.}
\label{tab:exp-real-parameters}
\setlength{\tabcolsep}{4pt}
\resizebox{\linewidth}{!}{\begin{tabular}{@{}llcccc@{}}
\toprule
Experiment & Parameter & Reference & Estimate & $e_\theta$ (\%) & $e_{\rm map}$ (\%)\\
\midrule
Pendulum (nonlinear) & $L$ & 0.50 & $0.5150\pm0.0012$ & 2.92 [2.82, 3.17] & 5.04 [4.38, 5.83]\\
Pendulum (LTI) & $L_{\rm eff}$ & 0.50 & $0.5646\pm0.0005$ & 12.92 [12.83, 12.97] & 7.13 [5.46, 7.45]\\
Side fall & $g$ & 9.81 & $9.370\pm0.771$ & 5.10 [3.21, 7.42] & 3.75 [3.22, 5.14]\\
Overhead fall & $g$ & 9.81 & $10.146\pm0.029$ & 3.35 [3.26, 3.39] & 5.57 [5.57, 5.60]\\
\bottomrule
\end{tabular}}
\vspace{-3mm}
\end{table}

\label{page:experiments-end}

\section{Discussion and limitations}
\label{sec:discussion}

We developed an identifiability analysis beyond LTI for the nonlinear scalar
families studied here. 
It characterizes compatible parameter values,
identifies invariant combinations, and gives sufficient conditions for
recovering further parameter information through physical calibration.
Synthetic and real-video experiments support the predicted parameter
relations, coverage effects, and calibration behavior. The central
implication is that parameter identification need not determine a unique
state coordinate: residual coordinate freedom can leave the quantities of
interest unchanged. At the studied quadratic-velocity boundary, canonical
normalization enables law comparison, while physical calibration supplies
separate information about the original parameters.

\noindent\textbf{Limitations.}
The proposed analysis covers prescribed scalar autonomous ODE families and the stated
velocity structures on regular coordinate intervals. It does not yet extend
to coupled multidimensional systems, where finite compact smooth trajectories
leave off-trajectory coordinate freedom that the scalar argument does not
resolve. Appendix~\ref{app:multidim-proofs} explains this boundary, why the
scalar setting remains informative, and the gap between structural
identification and empirical parameter recovery.



\bibliographystyle{iclr2027_conference}
\bibliography{refs}

\begin{thebibliography}{41}
\providecommand{\natexlab}[1]{#1}
\providecommand{\url}[1]{\texttt{#1}}
\expandafter\ifx\csname urlstyle\endcsname\relax
  \providecommand{\doi}[1]{doi: #1}\else
  \providecommand{\doi}{doi: \begingroup \urlstyle{rm}\Url}\fi

\bibitem[Ahuja et~al.(2022)Ahuja, Hartford, and Bengio]{ahuja2022properties}
Kartik Ahuja, Jason Hartford, and Yoshua Bengio.
\newblock Properties from mechanisms: An equivariance perspective on
  identifiable representation learning.
\newblock In \emph{International Conference on Learning Representations}, 2022.

\bibitem[Assran et~al.(2025)Assran, Bardes, Fan, Garrido, Howes, Komeili,
  Muckley, Rizvi, Roberts, Sinha, Zholus, Arnaud, Gejji, Martin, Hogan, Dugas,
  Bojanowski, Khalidov, Labatut, Massa, Szafraniec, Krishnakumar, Li, Ma,
  Chandar, Meier, LeCun, Rabbat, and Ballas]{assran2025vjepa2}
Mido Assran, Adrien Bardes, David Fan, Quentin Garrido, Russell Howes, Mojtaba
  Komeili, Matthew Muckley, Ammar Rizvi, Claire Roberts, Koustuv Sinha, Artem
  Zholus, Sergio Arnaud, Abha Gejji, Ada Martin, Francois~Robert Hogan, Daniel
  Dugas, Piotr Bojanowski, Vasil Khalidov, Patrick Labatut, Francisco Massa,
  Marc Szafraniec, Kapil Krishnakumar, Yong Li, Xiaodong Ma, Sarath Chandar,
  Franziska Meier, Yann LeCun, Michael Rabbat, and Nicolas Ballas.
\newblock {V-JEPA}~2: Self-supervised video models enable understanding,
  prediction and planning.
\newblock \emph{arXiv preprint arXiv:2506.09985}, 2025.

\bibitem[Bai et~al.(2023)Bai, Sezen, Yilmaz, and Qin]{bai2023bridge}
Yongsheng Bai, Halil Sezen, Alper Yilmaz, and Rongjun Qin.
\newblock Bridge vibration measurements using different camera placements and
  techniques of computer vision and deep learning.
\newblock \emph{Advances in bridge engineering}, 4\penalty0 (1):\penalty0 25,
  2023.

\bibitem[Baumgartner et~al.(2026)Baumgartner, Lei, Watson, and
  Posner]{baumgartner2026disentangling}
Markus~W. Baumgartner, Anson Lei, Joe Watson, and Ingmar Posner.
\newblock Disentangling dynamical systems: Causal representation learning meets
  local sparse attention.
\newblock In \emph{Proceedings of the Fifth Conference on Causal Learning and
  Reasoning}, volume 323 of \emph{Proceedings of Machine Learning Research},
  pp.\  119--165. PMLR, 2026.

\bibitem[Borgqvist et~al.(2026)Borgqvist, Browning, Ohlsson, and
  Baker]{borgqvist2026framing}
Johannes~G. Borgqvist, Alexander~P. Browning, Fredrik Ohlsson, and Ruth~E.
  Baker.
\newblock Framing local structural identifiability and observability in terms
  of parameter-state symmetries.
\newblock \emph{arXiv preprint arXiv:2603.11387}, 2026.

\bibitem[Bruce et~al.(2024)Bruce, Dennis, Edwards, Parker-Holder, Shi, Hughes,
  Lai, Mavalankar, Steigerwald, Apps, et~al.]{bruce2024genie}
Jake Bruce, Michael~D Dennis, Ashley Edwards, Jack Parker-Holder, Yuge Shi,
  Edward Hughes, Matthew Lai, Aditi Mavalankar, Richie Steigerwald, Chris Apps,
  et~al.
\newblock Genie: Generative interactive environments.
\newblock In \emph{Forty-first international conference on machine learning},
  2024.

\bibitem[Brunton et~al.(2016)Brunton, Proctor, and
  Kutz]{brunton2016discovering}
Steven~L. Brunton, Joshua~L. Proctor, and J.~Nathan Kutz.
\newblock Discovering governing equations from data by sparse identification of
  nonlinear dynamical systems.
\newblock \emph{Proceedings of the National Academy of Sciences}, 113\penalty0
  (15):\penalty0 3932--3937, 2016.
\newblock \doi{10.1073/pnas.1517384113}.

\bibitem[Champion et~al.(2019)Champion, Lusch, Kutz, and
  Brunton]{champion2019data}
Kathleen Champion, Bethany Lusch, J.~Nathan Kutz, and Steven~L. Brunton.
\newblock Data-driven discovery of coordinates and governing equations.
\newblock \emph{Proceedings of the National Academy of Sciences}, 116\penalty0
  (45):\penalty0 22445--22451, 2019.
\newblock \doi{10.1073/pnas.1906995116}.

\bibitem[Chen et~al.(2018)Chen, Rubanova, Bettencourt, and
  Duvenaud]{chen2018neural}
Ricky T.~Q. Chen, Yulia Rubanova, Jesse Bettencourt, and David~K. Duvenaud.
\newblock Neural ordinary differential equations.
\newblock In \emph{Advances in Neural Information Processing Systems},
  volume~31, 2018.

\bibitem[D{\'\i}az-Seoane et~al.(2023)D{\'\i}az-Seoane, Rey~Barreiro, and
  Villaverde]{diazseoane2023strike}
Sandra D{\'\i}az-Seoane, Xabier Rey~Barreiro, and Alejandro~F Villaverde.
\newblock Strike-goldd 4.0: user-friendly, efficient analysis of structural
  identifiability and observability.
\newblock \emph{Bioinformatics}, 39\penalty0 (1):\penalty0 btac748, 2023.

\bibitem[Dong et~al.(2023)Dong, Goodbrake, Harrington, and
  Pogudin]{dong2023differential}
R.~Dong, C.~Goodbrake, H.~Harrington, and G.~Pogudin.
\newblock Differential elimination for dynamical models via projections with
  applications to structural identifiability.
\newblock \emph{SIAM Journal on Applied Algebra and Geometry}, 7\penalty0
  (1):\penalty0 194--235, 2023.
\newblock \doi{10.1137/22M1469067}.

\bibitem[Foo et~al.(2023)Foo, Heyd, and Merker]{foo2023normal}
Wei~Guo Foo, Julien Heyd, and Jo{\"e}l Merker.
\newblock Normal forms of second-order ordinary differential equations
  {$y_{xx}=J(x,y,y_x)$} under fibre-preserving maps.
\newblock \emph{Complex Analysis and its Synergies}, 9:\penalty0 10, 2023.
\newblock \doi{10.1007/s40627-023-00121-x}.

\bibitem[Garcia et~al.(2025)Garcia, Warchocki, van Gemert, Brinks, and
  Tomen]{garcia2025learning}
Alejandro~Casta{\~n}eda Garcia, Jan Warchocki, Jan van Gemert, Daan Brinks, and
  Nergis Tomen.
\newblock Learning physics from video: Unsupervised physical parameter
  estimation for continuous dynamical systems.
\newblock In \emph{Proceedings of the Computer Vision and Pattern Recognition
  Conference}, pp.\  27924--27933, 2025.

\bibitem[Gonz{\'a}lez~Laiz et~al.(2025)Gonz{\'a}lez~Laiz, Schmidt, and
  Schneider]{laiz2025self}
Rodrigo Gonz{\'a}lez~Laiz, Tobias Schmidt, and Steffen Schneider.
\newblock Self-supervised contrastive learning performs non-linear system
  identification.
\newblock In \emph{International Conference on Learning Representations}, 2025.

\bibitem[Hofherr et~al.(2023)Hofherr, Koestler, Bernard, and
  Cremers]{hofherr2023neural}
Florian Hofherr, Lukas Koestler, Florian Bernard, and Daniel Cremers.
\newblock Neural implicit representations for physical parameter inference from
  a single video.
\newblock In \emph{Proceedings of the IEEE/CVF Winter Conference on
  Applications of Computer Vision}, pp.\  2093--2103, 2023.

\bibitem[Hong et~al.(2019)Hong, Ovchinnikov, Pogudin, and Yap]{hong2019sian}
Hoon Hong, Alexey Ovchinnikov, Gleb Pogudin, and Chee Yap.
\newblock Sian: software for structural identifiability analysis of ode models.
\newblock \emph{Bioinformatics}, 35\penalty0 (16):\penalty0 2873--2874, 2019.
\newblock \doi{10.1093/bioinformatics/bty1069}.

\bibitem[Jakubczyk(1980)]{jakubczyk1980existence}
Bronis{\l}aw Jakubczyk.
\newblock Existence and uniqueness of realizations of nonlinear systems.
\newblock \emph{SIAM Journal on Control and Optimization}, 18\penalty0
  (4):\penalty0 455--471, 1980.
\newblock \doi{10.1137/0318034}.

\bibitem[Jaques et~al.(2020)Jaques, Burke, and Hospedales]{jaques2020physics}
Miguel Jaques, Michael Burke, and Timothy Hospedales.
\newblock Physics-as-inverse-graphics: Unsupervised physical parameter
  estimation from video.
\newblock In \emph{International Conference on Learning Representations}, 2020.

\bibitem[Jaques et~al.(2022)Jaques, Asenov, Burke, and
  Hospedales]{jaques2022vision}
Miguel Jaques, Martin Asenov, Michael Burke, and Timothy Hospedales.
\newblock Vision-based system identification and 3d keypoint discovery using
  dynamics constraints.
\newblock In \emph{Proceedings of the 4th Annual Learning for Dynamics and
  Control Conference}, volume 168 of \emph{Proceedings of Machine Learning
  Research}, pp.\  316--329, 2022.

\bibitem[Jatavallabhula et~al.(2021)Jatavallabhula, Macklin, Golemo, Voleti,
  Petrini, Weiss, Considine, Parent-L{\'e}vesque, Xie, Erleben,
  et~al.]{jatavallabhula2021gradsim}
Krishna~Murthy Jatavallabhula, Miles Macklin, Florian Golemo, Vikram Voleti,
  Linda Petrini, Martin Weiss, Breandan Considine, J{\'e}r{\^o}me
  Parent-L{\'e}vesque, Kevin Xie, Kenny Erleben, et~al.
\newblock {gradSim}: Differentiable simulation for system identification and
  visuomotor control.
\newblock In \emph{International Conference on Learning Representations}, 2021.

\bibitem[Joseph et~al.(2026)Joseph, Garrido, Balestriero, Kowal, Fel,
  Bakhtiari, Richards, and Rabbat]{joseph2026interpreting}
Sonia Joseph, Quentin Garrido, Randall Balestriero, Matthew Kowal, Thomas Fel,
  Shahab Bakhtiari, Blake Richards, and Mike Rabbat.
\newblock Interpreting physics in video world models.
\newblock \emph{arXiv preprint arXiv:2602.07050}, 2026.

\bibitem[Kandukuri et~al.(2022)Kandukuri, Achterhold, Moeller, and
  Stueckler]{kandukuri2022physical}
Rama~Krishna Kandukuri, Jan Achterhold, Michael Moeller, and Joerg Stueckler.
\newblock Physical representation learning and parameter identification from
  video using differentiable physics.
\newblock \emph{International Journal of Computer Vision}, 130\penalty0
  (1):\penalty0 3--16, 2022.

\bibitem[Kang et~al.(2025)Kang, Yue, Lu, Lin, Zhao, Wang, Huang, and
  Feng]{kang2024far}
Bingyi Kang, Yang Yue, Rui Lu, Zhijie Lin, Yang Zhao, Kaixin Wang, Gao Huang,
  and Jiashi Feng.
\newblock How far is video generation from world model: A physical law
  perspective.
\newblock In \emph{Proceedings of the 42nd International Conference on Machine
  Learning}, volume 267 of \emph{Proceedings of Machine Learning Research},
  pp.\  28991--29017, 2025.

\bibitem[Kanko et~al.(2021)Kanko, Laende, Strutzenberger, Brown, Selbie,
  DePaul, Scott, and Deluzio]{kanko2021markerless}
Robert~M Kanko, Elise~K Laende, Gerda Strutzenberger, Marcus Brown, W~Scott
  Selbie, Vincent DePaul, Stephen~H Scott, and Kevin~J Deluzio.
\newblock Assessment of spatiotemporal gait parameters using a deep learning
  algorithm-based markerless motion capture system.
\newblock \emph{Journal of biomechanics}, 122:\penalty0 110414, 2021.

\bibitem[Khanbayov et~al.(2026)Khanbayov, Barhdadi, Serpedin, and
  Kurban]{khanbayov2026iris}
Rasul Khanbayov, Mohamed~Rayan Barhdadi, Erchin Serpedin, and Hasan Kurban.
\newblock Iris: A real-world benchmark for inverse recovery and identification
  of physical dynamic systems from monocular video.
\newblock In \emph{European Conference on Computer Vision}, pp.\  366--383.
  Springer, 2026.

\bibitem[Kossovskiy \& Zaitsev(2018)Kossovskiy and
  Zaitsev]{kossovskiy2018normal}
Ilya Kossovskiy and Dmitri Zaitsev.
\newblock Normal form for second order differential equations.
\newblock \emph{Journal of Dynamical and Control Systems}, 24\penalty0
  (4):\penalty0 541--562, 2018.
\newblock \doi{10.1007/s10883-017-9380-9}.

\bibitem[Kruglikov(2009)]{kruglikov2009point}
Boris Kruglikov.
\newblock Point classification of second-order {ODEs}: {Tresse} classification
  revisited and beyond.
\newblock In Boris Kruglikov, Valentin Lychagin, and Eldar Straume (eds.),
  \emph{Differential Equations: Geometry, Symmetries and Integrability},
  volume~5 of \emph{Abel Symposia}, pp.\  199--221. Springer, Berlin,
  Heidelberg, 2009.
\newblock \doi{10.1007/978-3-642-00873-3_10}.

\bibitem[Lusch et~al.(2018)Lusch, Kutz, and Brunton]{lusch2018deep}
Bethany Lusch, J.~Nathan Kutz, and Steven~L. Brunton.
\newblock Deep learning for universal linear embeddings of nonlinear dynamics.
\newblock \emph{Nature Communications}, 9:\penalty0 4950, 2018.
\newblock \doi{10.1038/s41467-018-07210-0}.

\bibitem[Muratore \& Mathis(2026)Muratore and Mathis]{muratore2026extracting}
Paolo Muratore and Mackenzie~Weygandt Mathis.
\newblock Extracting governing equations from latent dynamics via multi-view
  contrastive learning.
\newblock \emph{arXiv preprint arXiv:2606.13260}, 2026.

\bibitem[Mustafa(2015)]{mustafa2015position}
Omar Mustafa.
\newblock Position-dependent mass lagrangians: Nonlocal transformations,
  {Euler--Lagrange} invariance and exact solvability.
\newblock \emph{Journal of Physics A: Mathematical and Theoretical},
  48\penalty0 (22):\penalty0 225206, 2015.
\newblock \doi{10.1088/1751-8113/48/22/225206}.

\bibitem[Olver(1995)]{olver1995equivalence}
Peter~J. Olver.
\newblock \emph{Equivalence, Invariants and Symmetry}.
\newblock Cambridge University Press, Cambridge, 1995.
\newblock ISBN 978-0-521-47811-3.
\newblock \doi{10.1017/CBO9780511609565}.

\bibitem[Paliathanasis et~al.(2025)Paliathanasis, Moyo, and
  Leach]{paliathanasis2025geometric}
A.~Paliathanasis, S.~Moyo, and P.~G.~L. Leach.
\newblock A geometric interpretation for the algebraic properties of
  second-order ordinary differential equations.
\newblock \emph{Mathematical Methods in the Applied Sciences}, 48:\penalty0
  6912--6917, 2025.
\newblock \doi{10.1002/mma.10726}.

\bibitem[Raue et~al.(2009)Raue, Kreutz, Maiwald, Bachmann, Schilling,
  Klingmüller, and Timmer]{raue2009structural}
A.~Raue, C.~Kreutz, T.~Maiwald, J.~Bachmann, M.~Schilling, U.~Klingmüller, and
  J.~Timmer.
\newblock Structural and practical identifiability analysis of partially
  observed dynamical models by exploiting the profile likelihood.
\newblock \emph{Bioinformatics}, 25\penalty0 (15):\penalty0 1923--1929, 08
  2009.
\newblock ISSN 1367-4803.
\newblock \doi{10.1093/bioinformatics/btp358}.

\bibitem[Rubanova et~al.(2019)Rubanova, Chen, and Duvenaud]{rubanova2019latent}
Yulia Rubanova, Ricky T.~Q. Chen, and David~K. Duvenaud.
\newblock Latent ordinary differential equations for irregularly-sampled time
  series.
\newblock In \emph{Advances in Neural Information Processing Systems},
  volume~32, 2019.

\bibitem[Sussmann(1976)]{sussmann1976existence}
H{\'e}ctor~J. Sussmann.
\newblock Existence and uniqueness of minimal realizations of nonlinear
  systems.
\newblock \emph{Mathematical Systems Theory}, 10\penalty0 (1):\penalty0
  263--284, 1976.
\newblock \doi{10.1007/BF01683278}.

\bibitem[Tragoudaras et~al.(2025)Tragoudaras, Zhang, Cherniavskii, Vozikis,
  Nijdam, Prinzhorn, Bodracska, Sebe, Zadaianchuk, and
  Gavves]{tragoudaras2025evaluating}
Antonios Tragoudaras, Chenyu Zhang, Daniil Cherniavskii, Antonios Vozikis,
  Thijmen Nijdam, Derck~WE Prinzhorn, Mark Bodracska, Nicu Sebe, Andrii
  Zadaianchuk, and Efstratios Gavves.
\newblock Evaluating newtonian mechanics in video generative models with real
  physical systems.
\newblock \emph{arXiv preprint arXiv:2504.02918}, 2025.

\bibitem[Uhlrich et~al.(2023)Uhlrich, Falisse, Kidzi{\'n}ski, Muccini, Ko,
  Chaudhari, Hicks, and Delp]{uhlrich2023opencap}
Scott~D Uhlrich, Antoine Falisse, {\L}ukasz Kidzi{\'n}ski, Julie Muccini,
  Michael Ko, Akshay~S Chaudhari, Jennifer~L Hicks, and Scott~L Delp.
\newblock Opencap: Human movement dynamics from smartphone videos.
\newblock \emph{PLoS computational biology}, 19\penalty0 (10):\penalty0
  e1011462, 2023.

\bibitem[Villaverde \& Massonis(2021)Villaverde and
  Massonis]{villaverde2021scaling}
Alejandro~F. Villaverde and Gemma Massonis.
\newblock On testing structural identifiability by a simple scaling method:
  Relying on scaling symmetries can be misleading.
\newblock \emph{PLOS Computational Biology}, 17\penalty0 (10):\penalty0
  e1009032, 2021.
\newblock \doi{10.1371/journal.pcbi.1009032}.

\bibitem[Wang et~al.(2026)Wang, Wang, Zhang, and Gong]{wang2026physics}
Yuanyuan Wang, Wenjie Wang, Kun Zhang, and Mingming Gong.
\newblock Physics from video: Identifiability of time-invariant second-order
  odes under minimal trajectory conditions.
\newblock In \emph{International Conference on Machine Learning}, 2026.

\bibitem[Xu \& Brownjohn(2018)Xu and Brownjohn]{xu2018vision}
Yan Xu and James~MW Brownjohn.
\newblock Review of machine-vision based methodologies for displacement
  measurement in civil structures.
\newblock \emph{Journal of Civil Structural Health Monitoring}, 8\penalty0
  (1):\penalty0 91--110, 2018.

\bibitem[Yao et~al.(2024)Yao, Muller, and Locatello]{yao2024marrying}
Dingling Yao, Caroline Muller, and Francesco Locatello.
\newblock Marrying causal representation learning with dynamical systems for
  science.
\newblock In \emph{Advances in Neural Information Processing Systems},
  volume~37, pp.\  71705--71736, 2024.
\newblock \doi{10.52202/079017-2290}.

\end{thebibliography}

\appendix

\section{Notation and Domain Conventions}
\label{app:notation}

\subsection{Notation}
\label{app:notation_index}

Table~\ref{tab:notation_index} collects the main symbols. Primes on
continuous-time paths denote time derivatives; primes on a state map or
normalizer denote derivatives with respect to its scalar argument.
The operators $D_{1,m},D_{2,m}$ denote numerical differences instead.

\begingroup
\small
\setlength{\LTleft}{0pt}
\setlength{\LTright}{0pt}
\renewcommand{\arraystretch}{1.12}
\begin{longtable}{@{}>{\raggedright\arraybackslash}p{0.27\linewidth}
                    >{\raggedright\arraybackslash}p{\dimexpr0.73\linewidth-2\tabcolsep\relax}@{}}
\caption{Notation index.}\label{tab:notation_index}\\
\toprule
Symbol & Meaning \\
\midrule
\endfirsthead
\multicolumn{2}{l}{\tablename~\thetable\ (continued)}\\
\toprule
Symbol & Meaning \\
\midrule
\endhead
\midrule
\multicolumn{2}{r}{Continued on next page}\\
\endfoot
\bottomrule
\endlastfoot
$\mathcal D$ & Observed collection of timestamped video frames. \\
$m,M,I^{(m)},T_m$ & Clip index, number of clips, underlying time interval,
and final frame index; clip $m$ has $T_m+1$ frames. \\
$t_{m,k},\Delta t_m$ & Recorded timestamp and known uniform within-clip
sampling interval. All dynamical derivatives use this time coordinate. \\
$x^{(m)}(t),x_k^{(m)}$ & Ideal frame path and its recorded sample at $t_{m,k}$. \\
$z^{(m)}(t)$ & Unobserved data-generating scalar physical coordinate. \\
$E_\phi$ & Shared per-frame scalar encoder. Its output
enters the ODE residual directly. \\
$\hat z^{(m)}(t),\hat z_k^{(m)}$ & Ideal encoder trajectory and its sampled values. \\
$\Theta;\ \xi,\theta,\eta$ & Declared parameter space; a generic member,
the physical parameter, and a fitted encoder-coordinate parameter. \\
$F_\xi,D_\xi$ & Declared right-hand-side law and its open state domain;
$z''=F_\xi(z,z')$. The form of $F_\xi$ is specialized by each result. \\
$a_\xi,b_\xi$ & Semilinear coefficients in
$z''+a_\xi(z)z'+b_\xi(z)=0$. \\
$c,\hat c$ & Physical and reported equilibrium coordinates in the affine-LTI
family; distinct from the residual-coefficient vector $c(u)$. \\
$f$ & Analysis-only shared state map, $f\in C^2(U)$; $\hat z=f(z)$ on
the stated interval. \\
$R_z$ & Union of realized physical state values, without velocity information; $R_z\subset D_\theta$. \\
$U$ & Nonempty open physical-state interval on which coverage and
identification claims are asserted. \\
$K=[u_-,u_+]\Subset U$ & Nondegenerate compact interval contained in $U$,
used for quantitative stability; $K$ also denotes a basis size or a
speed-magnitude count where stated locally. \\
$\mathcal R_{\eta,\phi}^{(m)}(t)$ & Continuous-time fitted-law residual. \\
$\mathcal J(\xi)$ & Parameter functional whose agreement across admissible
reports is the identification target. \\
$D_{1,m},D_{2,m}$ & Three-point centered first- and second-difference
operators within clip $m$. \\
$\widehat r_{m,k},N_{\mathrm{int}}$ & Sampled residual and total number
$\sum_m(T_m-1)$ of interior sample indices. \\
$\mathcal L_{\mathrm{ODE}},\mathcal L_{\mathrm{var}},\mathcal L_{\mathrm{total}}$
& Residual loss, variance-floor penalty, and total sampled fitting objective. \\
$p,d_v,c_{r,\xi}$ & Declared velocity degree, augmented degree
$d_v=\max\{2,p\}$, and continuous coefficient functions in the
polynomial-velocity specialization. \\
$\mathsf V$ & Fixed finite-dimensional velocity-feature space, with
constants and dilation stability where required. \\
$\boldsymbol\phi_{\rm sl}(v)$ & Semilinear augmented feature vector
$(v^2,v,1)^\top$. \\
$\Phi_u,\sigma_{\min}(\Phi_u),\sigma_\star$ & Physical-velocity design,
its smallest singular value, and a positive lower bound. For semilinear
matching, rows are $\boldsymbol\phi_{\rm sl}(v_i)^\top$. Raw conditioning is
scale-dependent. \\
$M_u$ & Augmented feature design in the curvature-exclusion theorem,
including the extra $v^2$ column. \\
$\psi_\xi,J_\xi$ & Boundary-only normalizer satisfying
$\psi_\xi''+c_{2,\xi}\psi_\xi'=0$ and its image
$J_\xi=\psi_\xi(D_\xi)$. \\
$\bar c_{r,\xi}$ & Coefficients of the canonical law with quadratic
channel set to zero. \\
$A(q)=\lambda q+\tau$ & Affine witness between canonical laws; the raw
witness is $\psi_\eta^{-1}\circ A\circ\psi_\theta$. \\
$g$ & Canonicalized state map
$g=\psi_\eta\circ f\circ\psi_\theta^{-1}$; distinct from the gravity
constant $g$ in the real-video experiments. \\
$\mathcal O_U(\theta)$ & Family-compatible affine parameter outer set
on $U$, obtained by projecting out the affine witness. \\
$\mathcal O_U^{\mathrm{can}}(\theta)$ & Canonical compatible parameter
outer set, restricted back to the declared $\Theta$. \\
$\mathcal S_U(\theta)$ & Lifted affine witness--parameter set retaining
$(\lambda,\tau,\eta)$ rather than only $\eta$. \\
$\mathcal S_U^{\mathrm{can}}(\theta)$ & Canonical lifted set; raw physical
anchors are evaluated through the law-dependent inverse normalizer. \\
$d_{\mathcal S}$ & Local manifold dimension used by the anchor-rank criterion. \\
$I_a,I_b;\ A_i,B_j$ & Structural polynomial supports and coefficients
of $u^i$ in damping and $u^j$ in restoring terms. \\
$w(A_i)=i,\ w(B_j)=j-1$ & Scaling weights in the translation-locked
semilinear polynomial family. \\
$\varepsilon,h,V,\delta_v$ & Residual tolerance, level mismatch,
physical-velocity bound, and pairwise velocity separation; $h$ also
denotes a height where stated locally. \\
$\mathcal T_{\mathrm{sel}},\varepsilon_{\mathrm{sc}}$ & Selected clip--time
pairs and uniform value/first-/second-derivative error for approximate
continuous state consistency. \\
$c(u),L_c$ & Semilinear residual-coefficient vector and its Lipschitz constant. \\
$c_0,\Lambda$ & Bounds $0<c_0\le |f'|\le\Lambda$ used only in
quantitative results; $c_0$ here is not $c_{0,\xi}$. \\
$\xi_j,\sigma_0$ & Additional sample perturbation and its bound in
Appendix~\ref{app:residual_accounting}; $\xi_j$ is distinct from a law index $\xi$. \\
$\delta_0,\delta_1,\delta_2$ & Local value, first-difference, and
second-difference errors relative to a smooth reference path. \\
\end{longtable}
\endgroup

\subsection{Domain and Regularity Conventions}
\label{app:domain_conventions}

Each declared law is evaluated only on its stated domain. A shared map
used on $U\subset R_z$ must satisfy $f(U)\subset D_\eta$.
An affine witness $u\mapsto\lambda u+\tau$ must map $U$ into $D_\eta$;
a canonical affine witness must map $\psi_\theta(U)$ into $J_\eta$.
These inclusions apply to all displayed coefficient identities and
inverse-normalizer expressions. Sampled residuals likewise require their
central encoded states to lie in the fitted coefficient domain.

A law identity proved on $U$ is only a restricted-law statement unless
a polynomial or real-analytic identity theorem is explicitly invoked
on the connected common domain. No such extension is assumed for
general continuous or merely smooth coefficients. Quantitative results
state their compact intervals, bounds, and additional regularity locally.
In particular, the $C^4$ condition used for centered-difference error
bounds is not a standing hypothesis of the exact identification theory.
Here $K\Subset U$ means that $K$ is compact and contained in the open
interval $U$.

Latent-state consistency in Definition~\ref{def:state-consistency} is
imposed along the continuous trajectories
whenever the physical state lies in $U$, whereas exact fitted-law
compatibility is required only at the covering points specified by each
result. The local condition is local in the physical coordinate, not a
restriction to parameters near $\theta$.

For general interval-local coefficients, the fixed-$U$ parameter
compatibility projection need not be symmetric and is therefore called an
outer set. Literal law orbits carry their domains along with the coordinate
change. Where a global law action is defined, intersecting an ambient law
orbit with the declared family provides an orbit-based interpretation.
These distinctions do not change the report-containment conclusions.

For semilinear laws, the right-hand-side convention is
$F_\xi(u,v)=-a_\xi(u)v-b_\xi(u)$. In polynomial-velocity notation this
means $c_{0,\xi}=-b_\xi$, $c_{1,\xi}=-a_\xi$, and all higher channels
zero. Missing polynomial channels are zero-padded when matching two laws.

\section{Extended related work}
\label{app:related-work-extended}
\label{sec:related}

We organize the literature around the distinction between predicting visual
motion, estimating dynamical parameters, and establishing which parameter
information is identifiable. These objectives connect the application
background in the Introduction to the representation, dynamical, and
calibration assumptions used in our analysis.

\noindent\textbf{Video prediction, physical evaluation, and measurement.}
Genie learns action-controllable environments from videos without action
labels \citep{bruce2024genie}, while V-JEPA~2 learns predictive video
representations and uses an action-conditioned model for robotic planning
\citep{assran2025vjepa2}. Physics-oriented evaluations examine a different
aspect of these models: whether their generated motion respects physical
principles \citep{kang2024far,tragoudaras2025evaluating}. In particular, Morpheus
uses real physical settings and conservation-law-based metrics to evaluate
Newtonian consistency. Work on interpreting video encoders also studies
whether physical variables can be decoded from learned representations
\citep{joseph2026interpreting}. Prediction, physical consistency, and variable
decodability are relevant to physical understanding, but none alone implies
unique recovery of a prescribed equation's parameters. Our task concerns
parameter information determined by observations and a declared dynamical
family, rather than the physical plausibility of generated videos.

Vision-based sensing provides a practical motivation for quantitative
inference from video. \citet{xu2018vision} organize structural-displacement
measurement around camera calibration, target tracking, and conversion to
physical displacement; \citet{bai2023bridge} investigate camera placements
and visual methods for bridge-vibration measurement. In biomechanics,
\citet{kanko2021markerless} evaluate video-based spatiotemporal gait
measurements, and OpenCap combines smartphone videos, learned motion
estimation, and musculoskeletal simulation to estimate human movement
kinematics and dynamics \citep{uhlrich2023opencap}. These works illustrate
the value of geometric and physical calibration, rather than implying that
all physical quantities follow from pixels alone. Our scalar results do
not cover their full structural or multibody models; they address which
parameter quantities dynamics constraints can determine and what external
measurements add when the state coordinate is learned.

\noindent\textbf{Physical parameter estimation from video.}
Physics-as-inverse-graphics estimates latent states and physical parameters
using known differential equations and a generative observation model
\citep{jaques2020physics}. Differentiable-physics approaches couple visual
inference to simulation \citep{kandukuri2022physical}; gradSim additionally
combines differentiable multiphysics simulation and rendering for
pixel-based system identification and control
\citep{jatavallabhula2021gradsim}. Other methods use dynamics constraints
for three-dimensional keypoint discovery \citep{jaques2022vision} or combine
neural implicit appearance models with parameterized ODEs for single-video
inference \citep{hofherr2023neural}. More directly related to our fitting
interface, \citet{garcia2025learning} estimate parameters of known continuous
governing equations without frame prediction and introduce Delfys75 for
real-video evaluation. IRIS expands this evaluation setting with controlled
real videos, governing equations, independently measured parameters, and
protocols addressing accuracy, identifiability, extrapolation, robustness,
and equation selection \citep{khanbayov2026iris}. These methods already
address nonlinear physical models. Our contribution is not to introduce
nonlinear video-based estimation, but to characterize its parameter
identification guarantees in the stated shared-encoder setting. We fix the
family in advance, so governing-equation selection is outside our task.

\noindent\textbf{Learning coordinates and governing equations.}
Sparse identification of nonlinear dynamics selects active terms from a
candidate equation library \citep{brunton2016discovering}. SINDy
autoencoders jointly learn coordinates and sparse equations
\citep{champion2019data}, while learned Koopman embeddings seek coordinates
with a simpler dynamical evolution \citep{lusch2018deep}. Neural ODE and
latent ODE models provide continuous-time learning architectures
\citep{chen2018neural,rubanova2019latent}. These approaches motivate treating
representation and dynamics jointly, but optimizing predictive fit or
sparsity does not by itself specify which physical coefficients must agree
across compatible coordinates. We instead fix the equation family and
analyze its parameter transformations. A sparse or normalized
representation is not automatically a physically calibrated one.

\noindent\textbf{Identifiable dynamical representations and the LTI case.}
Nonlinear realization theory studies state descriptions up to isomorphism
under its minimality and regularity assumptions
\citep{sussmann1976existence,jakubczyk1980existence}. In representation
learning, \citet{ahuja2022properties} characterize ambiguity through
transformations compatible with known mechanisms or mechanism classes.
\citet{yao2024marrying} study time-invariant, trajectory-specific physical
parameters and connect their identification to causal representation
learning; \citet{baumgartner2026disentangling} use local causal structure in
parameter disentanglement. DynCL obtains latent-system identification
through contrastive learning under stochastic-transition assumptions
\citep{laiz2025self}. DYSCO studies affine recovery from noisy multi-view
observations under asymptotic assumptions and also examines sparse symbolic
recovery along affine coefficient orbits \citep{muratore2026extracting}.
These are substantive nonlinear identification results, not merely
predictive models. Our complementary question concerns deterministic
compatibility of a shared scalar frame encoder with a fixed second-order
ODE family, using realized same-state velocity coverage.

The closest direct video-based starting point is
\citet{wang2026physics}, which establishes affine coordinate alignment and
coefficient identification for scalar LTI second-order ODEs. We extend the
structural analysis to nonlinear state and structured velocity dependence,
where affine alignment need not fix every coefficient and canonical-law
comparison may be needed. We distinguish family-compatible outer sets from
the parameter reports realizable by an encoder class. For semilinear laws,
true-coordinate representability and closure under invertible affine output
transformations make these sets exactly realizable
(Proposition~\ref{prop:affine-realizability}), yielding necessary and
sufficient identification criteria. This existence result is distinct from
statistical consistency or convergence of a learning objective. The
LTI-specific damping-regime and stochastic finite-sample results remain
separate contributions of \citet{wang2026physics}.

\noindent\textbf{Structural identifiability, invariants, and calibration.}
Structural identifiability asks whether a specified state--output model
uniquely determines parameters from ideal observations, including when some
states are unobserved. SIAN and STRIKE-GOLDD provide computational tests for
their respective model classes
\citep{hong2019sian,diazseoane2023strike}; differential elimination offers
another algebraic route \citep{dong2023differential}.
\citet{borgqvist2026framing} connect locally identifiable combinations to
invariants of output-preserving parameter--state symmetries.
\citet{villaverde2021scaling} show that examining scaling symmetries alone
can miss non-identifiability. Our coefficient-weight rule is therefore used
after affine reduction and a family-specific translation-lock argument,
not as a test for arbitrary nonlinear systems. Our added structure is the
explicit relation between passive-trajectory coverage, admissible encoder
coordinates, and compatible parameter sets. External anchors constrain a
lifted set retaining both the coordinate witness and the parameters
(Appendix~\ref{app:additional-consequences}). The restricted-Jacobian
criterion is a sufficient local condition; it neither settles all singular
cases nor removes disconnected branches automatically. The calculus and
audits support family-specific analysis, rather than constituting a general
automatic structural-identifiability solver.

Structural uniqueness and accuracy under finite noisy data are different
questions. Profile-likelihood analysis explicitly studies structural and
practical non-identifiability and parameter uncertainty
\citep{raue2009structural}. Our semilinear residual-stability results are
conditional deterministic bounds, not confidence intervals or
pixel-to-parameter statistical rates. The reporting checks in
Appendix~\ref{app:operational-details} distinguish fitted-law agreement,
state-consistency evidence, velocity coverage, admissible parameter
comparisons, and physical calibration. They connect the theory to empirical
evaluation without treating finite audits as proofs of continuous-time
premises.

\noindent\textbf{ODE equivalence and coordinate normalization.}
Equivalence and invariants of differential equations are classical topics
\citep{olver1995equivalence,kruglikov2009point}, with modern normal-form
results for second-order equations under point and fibre-preserving maps
\citep{kossovskiy2018normal,foo2023normal}. Geometric elimination of quadratic
derivative terms and transformations of position-dependent-mass
Lagrangians provide context for our normalization
\citep{paliathanasis2025geometric,mustafa2015position}. We use the classical
one-dimensional elimination of the squared-velocity channel; the
normalizing integral itself is not new. Our analysis restricts
transformations to the scalar state while preserving recorded time,
connects law compatibility to passive coefficient-matching conditions,
and restricts canonical affine compatibility back to the original
parameter family. The resulting fixed-interval parameter sets are not
automatically equivalence classes. Canonical normalization is determined 
by the law and does not itself supply physical units. 
It can be applied after fitting for law comparison, or incorporated directly into an integral fitting objective.

\clearpage
\section{Sampled Objectives and Error Accounting}
\label{app:sampled-interface}

Figure~\ref{fig:nonlinear-training-pipeline} connects the shared-encoder
objective in Eq.~\eqref{eq:training_objective} to parameter recovery.
Training uses frames and timestamps. The fitted law determines any required
canonical normalizer; family-compatible parameter sets then determine which
targets are already identified and which need external calibration.
Synthetic reference states are used separately to evaluate these predictions.
Appendix~\ref{app:operational-details} states what these comparisons establish.
Below we specify the sampled objective and variance floor, then relate sampled
to continuous residuals. Domain conventions are in
Appendix~\ref{app:domain_conventions}.

\begin{figure}[H]
\centering
\begingroup
\usetikzlibrary{positioning,calc}
\definecolor{pipeblue}{RGB}{47,91,127}
\definecolor{pipegreen}{RGB}{43,111,94}
\definecolor{pipeorange}{RGB}{180,103,55}
\begin{tikzpicture}[x=1cm,y=1cm,>=stealth,
  every node/.style={font=\small},
  stage/.style={draw=pipeblue!60,fill=pipeblue!4,rounded corners=3pt,align=center,inner sep=6pt},
  recover/.style={stage,draw=pipegreen!65,fill=pipegreen!4},
  flow/.style={->,draw=pipeblue,line width=.8pt},
  rflow/.style={->,draw=pipegreen,line width=.8pt}]
\node[anchor=west,font=\small\bfseries,text=pipeblue] at (-1.6,1.35)
  {(a) Learn a shared coordinate and dynamical law};
\foreach \xx/\ang/\lab/\id in {0/-32/{k-1}/a,4.5/0/{k}/b,9/32/{k+1}/c} {
  \draw[pipeblue!60,fill=white] (\xx-.60,0) rectangle (\xx+.60,.60);
  \fill[black!60] (\xx,.50) circle (.02);
  \draw[black!70,line width=.7pt] (\xx,.50) -- ++(\ang-90:.38)
    node[circle,fill=pipeblue,inner sep=1.6pt] {};
  \node[above,font=\footnotesize] at (\xx,.60) {$x^{(m)}_{\lab}$};
  \node[stage,minimum width=1.45cm,inner sep=3pt] (enc\id) at (\xx,-.48) {$E_\phi$};
  \draw[flow] (\xx,0) -- (enc\id.north);
  \node (z\id) at (\xx,-1.15) {$\hat z^{(m)}_{\lab}$};
  \draw[flow] (enc\id.south) -- (z\id.north);
}
\node[font=\footnotesize,text=pipeblue] at (2.25,-.48) {same $\phi$};
\node[font=\footnotesize,text=pipeblue] at (6.75,-.48) {same $\phi$};
\node[stage,text width=11.8cm] (train) at (4.5,-2.32)
  {Within-clip derivatives $\longrightarrow$ declared-law residual
   $\widehat r_{m,k}=D_2\hat z-F_\eta(\hat z,D_1\hat z)$\\[3pt]
   Jointly optimize $\phi,\eta$ using Eq.~\eqref{eq:training_objective}:\quad
   $\mathcal L_{\rm total}=N_{\rm int}^{-1}\!\sum_{m,k}\widehat r_{m,k}^{\,2}
     +\lambda_{\rm var}\mathcal L_{\rm var}$};
\foreach \id in {a,b,c} {
  \draw[flow] (z\id.south) -- (z\id.south |- train.north);
}
\node[anchor=west,font=\small\bfseries,text=pipegreen] at (-1.6,-3.34)
  {(b) Recover the identifiable parameter target};
\node[recover,font=\footnotesize,text width=5.52cm,minimum height=2.70cm] (raw) at (1.35,-5.08)
  {\textbf{Raw-coordinate route}\\[4pt]
   Semilinear laws, or velocity features\\
   excluding $v^2$, with sufficient coverage\\[4pt]
   $r=\hat z$; allowed maps are affine\\[4pt]
   Sections~\ref{sec:scalar-theory} and~\ref{subsec:feature-separation}};
\node[recover,font=\footnotesize,text width=5.52cm,minimum height=2.70cm] (can) at (7.65,-5.08)
  {\textbf{Canonical route}\\[2pt]
   Polynomial-velocity laws admitting $c_2v^2$;\\
   sufficient coverage on a regular interval\\[2pt]
   $r=\psi_\eta(\hat z)$,\quad $\psi_\eta''+c_{2,\eta}\psi_\eta'=0$\\
   removes the squared-velocity channel\\[2pt]
   Section~\ref{sec:connection-boundary}, Theorem~\ref{thm:canonical-report-orbit}};
\draw[flow] (train.east) -- ++(.24,0) |- (4.5,-3.55) -| (raw.north);
\draw[flow] (4.5,-3.55) -| (can.north);
\node[recover,font=\footnotesize,text width=11.8cm,minimum height=1.25cm] (orbit) at (4.5,-7.40)
  {\textbf{Determine the family-compatible parameter set and its invariants}\\[4pt]
   Apply the allowed coordinate action to the raw or canonical law.\\
   Check whether the target is constant on this set, including its branches.};
\draw[rflow] (raw.south) -- (raw.south |- orbit.north);
\draw[rflow] (can.south) -- (can.south |- orbit.north);
\node[stage,font=\footnotesize,draw=pipeorange!65,fill=pipeorange!5,
  text width=8.3cm,minimum height=1.55cm] (anchor) at (5.95,-9.52)
  {\textbf{Use physical anchors to identify the target}\\[4pt]
   Constrain the remaining freedom and physical units;\\
   check target uniqueness and resolve relevant branches.\\[3pt]
   Section~\ref{sec:regimes}, Proposition~\ref{prop:anchor-rank}};
\draw[rflow] (orbit.south -| anchor.north) --
  node[right,font=\footnotesize,text=pipeorange] {target varies} (anchor.north);
\node[recover,font=\footnotesize,text width=11.8cm,minimum height=1.25cm] (out) at (4.5,-11.64)
  {\textbf{Report the recovered parameter or physical quantity}\\[4pt]
   An invariant target, or an anchor-calibrated estimate such as $L$ or $g$.};
\draw[rflow] (anchor.south) -- (anchor.south |- out.north);
\draw[rflow] (orbit.south -| 0,-8) -- (0,0 |- out.north);
\node[align=center,font=\footnotesize,text=pipegreen,fill=white,inner sep=3pt]
  at (0,-9.52) {Target already\\constant.\\[3pt]No anchor needed};
\end{tikzpicture}
\endgroup
\caption{\textbf{From joint video fitting to parameter recovery.}
The declared family and coverage conditions determine the raw or canonical
route. In the canonical route, $\psi_\eta(u_\star)=0$ and
$\psi_\eta'(u_\star)=1$ fix a normalizer, and
$r''=\psi_\eta'F_\eta+\psi_\eta''(\hat z')^2$ is the transformed law.
Normalization is computed from the fitted law; external anchors are added
only when the desired target is not yet identified. Parameter sets are
understood with the containment and realizability qualifications in
Appendix~\ref{app:operational-details}. Synthetic coordinate and parameter
errors evaluate these predictions using reference states. The diagram uses
$z$ for physical state and $\hat z$ for learned state; experimental plots
use $q$ and $z$, respectively.}
\label{fig:nonlinear-training-pipeline}
\end{figure}

\subsection{Numerical Derivatives and Anti-Collapse Regularization}
\label{app:sampled_definitions}

Clip $m=1,\ldots,M$ supplies frames
$x_k^{(m)}=x^{(m)}(t_{m,k})$, $k=0,\ldots,T_m$, on a uniform time grid
$t_{m,k}=t_{m,0}+k\Delta t_m$ with known $\Delta t_m>0$.
Thus $T_m$ is the final frame index and each clip has $T_m+1$ frames.
For the sampled objective, assume $T_m\ge2$ so every included clip
has an interior three-frame stencil. The sampled encoder outputs are
$\hat z_k^{(m)}=E_\phi(x_k^{(m)})=\hat z^{(m)}(t_{m,k})$.

Following the encoder-only interface of \citet{wang2026physics}, define
for any sequence $q^{(m)}=(q_k^{(m)})_{k=0}^{T_m}$
\begin{equation}
\begin{aligned}
    (D_{1,m}q^{(m)})_k
    &=\frac{q_{k+1}^{(m)}-q_{k-1}^{(m)}}{2\Delta t_m},\\
    (D_{2,m}q^{(m)})_k
    &=\frac{q_{k+1}^{(m)}-2q_k^{(m)}+q_{k-1}^{(m)}}{\Delta t_m^2},
    \qquad k=1,\ldots,T_m-1.
\end{aligned}
\label{eq:AB_centered_stencil}
\end{equation}
No endpoint derivative or cross-clip stencil is used.

At interior indices, the sampled fitted-law residual is
\begin{equation}
    \widehat r_{m,k}(\phi,\eta)
    =
    (D_{2,m}\hat z^{(m)})_k
    -
    F_\eta\!\left(
        \hat z_k^{(m)},
        (D_{1,m}\hat z^{(m)})_k
    \right).
    \label{eq:setup_sampled_residual}
\end{equation}
The objective in Eq.~\eqref{eq:training_objective} can be written as
\begin{equation}
    \mathcal L_{\mathrm{total}}(\phi,\eta)
    =
    \underbrace{
        \frac{1}{N_{\mathrm{int}}}
        \sum_{m=1}^M\sum_{k=1}^{T_m-1}
        \widehat r_{m,k}(\phi,\eta)^2
    }_{\mathcal L_{\mathrm{ODE}}}
    +
    \lambda_{\mathrm{var}}\mathcal L_{\mathrm{var}},
    \qquad
    N_{\mathrm{int}}=\sum_{m=1}^M(T_m-1),
    \label{eq:direct_objective}
\end{equation}
where $\lambda_{\mathrm{var}}>0$ and
$\mathcal L_{\mathrm{var}}$ is a variance-floor penalty that
discourages constant encodings.

For a polynomial-velocity family, the residual in
\eqref{eq:setup_sampled_residual} is
\begin{equation}
    \widehat r_{m,k}(\phi,\eta)
    =(D_{2,m}\hat z^{(m)})_k
    -\sum_{r=0}^p c_{r,\eta}(\hat z_k^{(m)})
      \bigl((D_{1,m}\hat z^{(m)})_k\bigr)^r.
    \label{eq:AB_polynomial_residual}
\end{equation}
In the semilinear case it is
$(D_{2,m}\hat z^{(m)})_k+
 a_\eta(\hat z_k^{(m)})(D_{1,m}\hat z^{(m)})_k+
 b_\eta(\hat z_k^{(m)})$.

To specify the penalty in \eqref{eq:direct_objective}, define
\begin{equation}
    \widehat\mu_m=\frac{1}{T_m+1}\sum_{k=0}^{T_m}\hat z_k^{(m)},
    \qquad
    \widehat{\operatorname{Var}}_m
    =\frac{1}{T_m+1}\sum_{k=0}^{T_m}
      (\hat z_k^{(m)}-\widehat\mu_m)^2,
    \label{eq:AB_sample_moments}
\end{equation}
and set
\begin{equation}
    \mathcal L_{\mathrm{var}}
    =\frac1M\sum_{m=1}^M
      \left[\max\left\{0,
      s_{\mathrm{floor}}-
      \sqrt{\widehat{\operatorname{Var}}_m+\epsilon_{\mathrm{stab}}}
      \right\}\right]^2,
    \label{eq:AB_variance_floor}
\end{equation}
where $\lambda_{\mathrm{var}}>0$, $\epsilon_{\mathrm{stab}}>0$ is numerical,
and $s_{\mathrm{floor}}>\sqrt{\epsilon_{\mathrm{stab}}}$. A constant encoding
$c$ has zero ODE residual whenever $F_\eta(c,0)=0$, equivalently
$b_\eta(c)=0$ in the semilinear family. The penalty discourages such
encodings but does not prove non-collapse on a particular interval,
latent-state consistency, or identifiability. Its scale is a fitting
convention, not a physical anchor.

\subsection{From Sampled Residuals to Continuous Compatibility}
\label{app:residual_accounting}

The explicit constants below concern the semilinear residual. A general
smooth $F_\eta$ admits an analogous transfer with suitable local
Lipschitz bounds. Additional regularity is stated here, rather than
assumed for all exact results.

\paragraph{Approximate continuous state consistency.}
For each clip, let
$e^{(m)}(t)=\hat z^{(m)}(t)-f(z^{(m)}(t))$, and assume this error is $C^2$
on a neighborhood of every selected time. Let
$\mathcal T_{\mathrm{sel}}$ collect the selected interior clip--time pairs,
and define
\begin{equation}
    \varepsilon_{\mathrm{sc}}
    =\sup_{(m,t)\in\mathcal T_{\mathrm{sel}}}
      \max_{r=0,1,2}
      \left|\frac{d^r}{dt^r}e^{(m)}(t)\right|.
    \label{eq:AB_state_consistency_error}
\end{equation}
Suppose $|z'|\le V$, $|f'|\le\Lambda$, and both $f(z)$ and
$f(z)+e^{(m)}$ lie in a compact interval $J\subset D_\eta$ at the selected
points. Assume $\|a_\eta\|_{\infty,J}\le A$ and that $a_\eta,b_\eta$
are respectively $L_a,L_b$-Lipschitz on $J$. Write
$R_\eta[y]=y''+a_\eta(y)y'+b_\eta(y)$. Then
\begin{equation}
    \left|R_\eta[\hat z^{(m)}](t)
           -R_\eta[f\circ z^{(m)}](t)\right|
    \le \varepsilon_{\mathrm{sc}}
      \bigl(1+A+L_a\Lambda V+L_b\bigr).
    \label{eq:AB_state_consistency_transfer}
\end{equation}
The bound follows by expanding the difference and using
$|y'|\le\Lambda V$. Thus a residual tolerance for the smooth
encoder path transfers to the state-consistent reference path by adding
the right-hand side of \eqref{eq:AB_state_consistency_transfer}.

\begin{remark}[Centered-stencil error accounting]
\label{rem:AB_stencil_error}
Fix one clip and an interior sample time $t_k$, suppressing $m$.
Let $\bar z(t)=\hat z(t)$ be the smooth reference encoder path and
$e(t)=\bar z(t)-f(z(t))$. To allow an additional, disjoint sample-level
perturbation, write the sequence supplied to the stencil as
\begin{equation*}
    \hat z_j=\bar z(t_j)+\xi_j,
    \qquad |\xi_j|\le\sigma_0.
\end{equation*}
The sampling model of Section~\ref{sec:setup} is the case $\xi_j=0$.
If acquisition or encoder errors are already included in $e$, they
must not also be counted as a separate $\xi_j$ perturbation.
Suppose $\bar z\in C^4([t_{k-1},t_{k+1}])$, and let
$M_r=\|\bar z^{(r)}\|_\infty$ on this interval. With $D_1,D_2$ denoting
the corresponding within-clip operators,
\begin{equation}
\begin{aligned}
    \delta_0&:=|\hat z_k-\bar z(t_k)|\le\sigma_0,\\
    \delta_1&:=|(D_1\hat z)_k-\bar z'(t_k)|
        \le \frac{M_3}{6}\Delta t^2+\frac{\sigma_0}{\Delta t},\\
    \delta_2&:=|(D_2\hat z)_k-\bar z''(t_k)|
        \le \frac{M_4}{12}\Delta t^2+\frac{4\sigma_0}{\Delta t^2}.
\end{aligned}
\label{eq:AB_stencil_bounds}
\end{equation}
Hence truncation bias is second order, while sample perturbations are
amplified at orders $\Delta t^{-1}$ and $\Delta t^{-2}$.

Define, at $t_k$,
\begin{equation*}
\begin{aligned}
    R_{\mathrm{cont}}&=\bar z''+a_\eta(\bar z)\bar z'+b_\eta(\bar z),\\
    R_{\mathrm{disc},k}&=(D_2\hat z)_k
       +a_\eta(\hat z_k)(D_1\hat z)_k+b_\eta(\hat z_k).
\end{aligned}
\end{equation*}
If $\hat z_k,\bar z(t_k)\in J\subset D_\eta$,
$\|a_\eta\|_{\infty,J}\le A$, $a_\eta,b_\eta$ are respectively
$L_a,L_b$-Lipschitz on $J$, and $|\bar z'(t_k)|\le\bar V$, then
\begin{equation}
    |R_{\mathrm{disc},k}-R_{\mathrm{cont}}|
    \le\delta_2+A\delta_1+(L_a\bar V+L_b)\delta_0.
    \label{eq:AB_discrete_continuous_transfer}
\end{equation}
This follows by adding and subtracting
$a_\eta(\hat z_k)\bar z'(t_k)$. Thus
$|R_{\mathrm{disc},k}|\le\varepsilon_{\mathrm{emp}}$ yields a continuous
residual bound after adding the right-hand side of
\eqref{eq:AB_discrete_continuous_transfer}.
\end{remark}

\paragraph{Combining the error layers.}
Under their respective hypotheses,
\eqref{eq:AB_discrete_continuous_transfer} first transfers a sampled
residual bound to the smooth encoder path, and
\eqref{eq:AB_state_consistency_transfer} then transfers it to $f\circ z$.
The near-level stability results separately account for state mismatch
$h$ and amplify the resulting errors through the physical-velocity design
conditioning. These are deterministic transfers, not a conversion of
encoder-coordinate singular values into physical-unit conditioning.
The stochastic analysis of the LTI-specific centered-difference estimator
in \citet[Theorem~4.8]{wang2026physics} is separate from these nonlinear
orbit and deterministic error calculations.

\section{Interpreting the experimental evidence}
\label{app:operational-details}

The experiments evaluate three linked questions: whether a shared coordinate
map transfers to held-out clips, whether the coefficients follow the
family-specific parameter relations, and whether external anchors recover
physical quantities. The metrics in Appendix~\ref{app:metric-conventions}
measure accuracy for these comparisons. Table~\ref{tab:gauge-card} connects
the comparisons to the assumptions of the theory, so that empirical
agreement can be interpreted at the appropriate level.

\begin{table}[H]
\centering\small
\caption{\textbf{Empirical checks and the conclusions they support.}
The checks probe the stated premises; finite observations do not certify
their continuous-time versions.}
\label{tab:gauge-card}
\setlength{\tabcolsep}{4pt}
\renewcommand{\arraystretch}{1.08}
\begin{tabularx}{\linewidth}{@{}>{\raggedright\arraybackslash}p{.17\linewidth}YY@{}}
\toprule
Question & Evidence to report & Scope of the conclusion \\
\midrule
Declared family &
Coefficient restrictions and domains; test-law error and matched-family
comparisons. &
A small fitting residual alone does not establish that the declared family
contains the physical law. \\
Shared state map &
One frozen map across clips; held-out coordinate error and, when available,
derivative comparisons. &
Coordinate agreement supports transfer of the map. A shared encoder alone
does not establish latent-state consistency. \\
Velocity coverage &
Physical coordinate, level tolerance, distinct speeds and signs, feature
rank and conditioning. &
The coefficient-matching result requires its stated coverage. Failure of a
sufficient rank condition does not itself prove nonidentifiability. \\
Canonical normalization &
Fitted $c_2$, normalizer basepoint, regular interval, and transformed-law
comparison. &
The comparison is restricted to the valid chart. Normalization removes the
squared-velocity channel but does not determine physical units. \\
Parameter recovery &
Allowed coordinate action and branches, parameter-orbit error, and the
specified invariants. &
Agreement of selected coefficients or invariants need not establish full
law equivalence or identify every parameter. \\
Physical calibration &
Anchor values, units, domain, remaining branches, and a uniqueness argument. &
Calibrated estimates use additional physical information. A regular
full-rank anchor map gives local uniqueness, with branches checked separately. \\
\bottomrule
\end{tabularx}
\end{table}

\paragraph{Coordinate comparison, normalization, and calibration.}
An invariant comparison tests a specified parameter functional; an aligned
comparison uses the coordinate and coefficient transformations declared by
the family. A calibrated comparison adds external physical information.
In Figure~\ref{fig:nonlinear-training-pipeline}, normalization is computed
from the learned law before these comparisons: it selects a canonical
coordinate but leaves its affine freedom. For the projected pendulum,
the endpoints of $|y|<1$ are chart singularities, and comparisons remain
within the regular interval. A geometry check for a fixed observable
supports that observable's model; it does not also validate a learned
encoder. These distinctions apply to both synthetic and real videos.

\paragraph{Structural conclusions and finite-data accuracy.}
The general containment results provide compatible parameter outer sets;
constancy on such a set is sufficient for identification. Variation on an
outer set establishes realizable ambiguity only with an additional
construction. For the semilinear case, Proposition~\ref{prop:affine-realizability}
and Corollary~\ref{cor:realizable-invariants} supply this connection under
representability and affine-closure assumptions. The same converse is not
asserted for canonical sets solely from raw--canonical law equivalence.
Empirical agreement does not establish optimizer convergence to all
compatible reports.

\paragraph{Error and uncertainty.}
The stability bounds require physical-velocity conditioning, which
encoder-coordinate singular values cannot replace. Appendix~\ref{app:sampled-interface}
separates sampling and state-consistency errors.
Proposition~\ref{prop:anchor-rank} treats anchors as exact and guarantees
local uniqueness at regular points; it does not quantify anchor measurement
error. Repeated fits with shared observations or anchors describe protocol
variability, not confidence intervals over independent physical measurements.
The experimental appendices report parameter accuracy and initialization
sensitivity separately.

\section{Sharpness of three-slope coverage}
\label{app:coverage-sufficiency}

Three-slope level coverage is the geometric hypothesis used by the semilinear
affine reduction. This appendix shows the requirement is sharp: two slopes admit
non-affine aliases.

\begin{proposition}[Two-slope sharpness for the continuous semilinear class]
\label{prop:two-slope-sharpness}
Let $U\subset R_z$ be a nonempty open interval, let
$a_\theta,b_\theta\in C(U)$, and suppose the physical covered points obey
\[
 z''+a_\theta(z)z'+b_\theta(z)=0.
\]
Assume that at every $u\in U$ the set of covered velocities is exactly
$\{v_1(u),v_2(u)\}$, where $v_1,v_2\in C(U)$ and
$v_1(u)\neq v_2(u)$.  For any $C^2$ diffeomorphism
$f:U\to f(U)$, define continuous target coefficients by
\begin{align}
 a_\eta(f(u))
 &:=a_\theta(u)-\frac{f''(u)}{f'(u)}
       \bigl(v_1(u)+v_2(u)\bigr),
 \label{eq:two-slope-a-alias}\\
 b_\eta(f(u))
 &:=f'(u)b_\theta(u)+f''(u)v_1(u)v_2(u).
 \label{eq:two-slope-b-alias}
\end{align}
Then the transformed coordinate $\hat z=f(z)$ satisfies
$\hat z''+a_\eta(\hat z)\hat z'+b_\eta(\hat z)=0$ at every covered point.
Consequently, two same-level velocities cannot force affine collapse uniformly
over unrestricted continuous semilinear coefficient functions.
\end{proposition}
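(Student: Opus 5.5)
The plan is a direct verification built on the residual identity \eqref{eq:semilinear-residual-main} from the proof sketch of Theorem~\ref{thm:semilinear-affine-collapse}. First I check that the target coefficients are well defined and continuous. Since $f$ is a $C^2$ diffeomorphism of the interval $U$, $f'$ never vanishes and $f^{-1}:f(U)\to U$ is continuous. Then \eqref{eq:two-slope-a-alias}--\eqref{eq:two-slope-b-alias} define $a_\eta$ and $b_\eta$ on $f(U)$ by composing continuous functions of $u$ with $u=f^{-1}(\hat u)$. Continuity uses $f',f''\in C(U)$, $f'\neq0$, and $v_1,v_2\in C(U)$.

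Next, fix a covered point with state $u=z^{(m)}(t)$ and velocity $v=z^{(m)\prime}(t)\in\{v_1(u),v_2(u)\}$. By the chain rule, $\hat z'=f'(u)v$ and $\hat z''=f''(u)v^2+f'(u)z''$. Eliminating $z''$ with the physical law gives
\[
\hat z''+a_\eta(\hat z)\hat z'+b_\eta(\hat z)
= f''(u)v^2+f'(u)\bigl(a_\eta(f(u))-a_\theta(u)\bigr)v+b_\eta(f(u))-f'(u)b_\theta(u).
\]
This is exactly the quadratic in \eqref{eq:semilinear-residual-main}. Substituting the definitions turns the middle coefficient into $-f''(u)(v_1+v_2)$ and the constant term into $f''(u)v_1v_2$. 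The right-hand side therefore factors as $f''(u)(v-v_1(u))(v-v_2(u))$, which vanishes for $v=v_1(u)$ or $v=v_2(u)$. So the fitted law holds at every covered point. The only care needed here is to confirm that $\hat z$ stays in $f(U)$, where $a_\eta$ and $b_\eta$ are defined.

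For the "consequently" clause, I choose any non-affine diffeomorphism, for instance $f(u)=e^{u}$ or a strictly monotone quadratic on $U$. Then $f''\not\equiv0$, so $f$ is non-collapsed, $C^2$, and not affine, yet it yields an admissible semilinear law with continuous coefficients. This shows that affine collapse cannot be forced uniformly over unrestricted continuous coefficients.

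No step is a real obstacle. The key observation is that, with only two velocities available, the $v^2$ coefficient can be absorbed by Vieta's relations, which the definitions implement. The only subtle point is the regularity bookkeeping: continuity of $v_1$ and $v_2$ is what makes the alias coefficients continuous.
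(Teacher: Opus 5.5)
Your proposal is correct and matches the paper's proof. Because $f$ is a $C^2$ diffeomorphism, the alias coefficients are well defined and continuous; the chain-rule residual \eqref{eq:semilinear-residual-main} then factors as $f''(u)\bigl(v-v_1(u)\bigr)\bigl(v-v_2(u)\bigr)$ and vanishes on both realized branches; and any non-affine $f$ gives the sharpness conclusion. One small note: $f(u)=e^{u}$ works on every interval, whereas a strictly monotone quadratic exists only when $U\neq\mathbb{R}$, so the exponential is the safer example to lead with.
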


\begin{proof}
Because $f$ is a diffeomorphism, \eqref{eq:two-slope-a-alias}--
\eqref{eq:two-slope-b-alias} define single-valued continuous functions on the
open interval $f(U)$.  At a covered point with state $u$ and velocity $v$, the
target-law residual is
\begin{align*}
 &\hat z''+a_\eta(\hat z)\hat z'+b_\eta(\hat z)\\
 &\quad=f''(u)v^2
 +f'(u)\bigl(a_\eta(f(u))-a_\theta(u)\bigr)v
 +b_\eta(f(u))-f'(u)b_\theta(u)\\
 &\quad=f''(u)\bigl(v-v_1(u)\bigr)\bigl(v-v_2(u)\bigr).
\end{align*}
It vanishes on both realized branches.  Since $f$ may be chosen nonlinear,
the data do not imply affine collapse in this unrestricted class.
\end{proof}

For a conservative branch pair $v_2=-v_1$, the construction simplifies to
\[
 a_\eta\circ f=a_\theta,
 \qquad
 b_\eta\circ f=f'b_\theta-f''v_1^2.
\]
For example, one orbit of $z''+z=0$ with amplitude $A$ has
$v_1(u)^2=A^2-u^2$ on any $U\Subset(-A,A)$, so every such nonlinear $f$
is compatible with the continuous semilinear law
$a_\eta\circ f=0$ and
$b_\eta\circ f=f'u-f''(A^2-u^2)$. This law-space obstruction proves that two
slopes do not guarantee affine collapse. No claim is made that a particular
encoder architecture realizes every such nonlinear witness.

\section{Canonical coordinates and the projected-pendulum example}
\label{app:connection-details}

Section~\ref{sec:connection-boundary} removes the squared-velocity channel
using a coordinate computed from each law. The explicit integral
\eqref{eq:normalizer-definition}, its positive derivative, and uniqueness up
to an invertible affine map are proved together in
Appendix~\ref{app:connection-proofs}. The transformed coefficients
\eqref{eq:normalized-coefficients} need not be polynomial in the canonical
state. This normalization permits comparison of laws without providing a
physical calibration of the state.

\paragraph{Regular-chart intuition.}
For the angular pendulum, $y=\sin q$ on $|q|<\pi/2$ creates a
squared-velocity term through the chain rule. The inverse chart
$q=\arcsin y$ removes it, while its derivative diverges at $y=\pm1$.
Those endpoints are outside the regular chart. The full laws
\eqref{eq:angular-pendulum}--\eqref{eq:projected-pendulum} and their derivation
appear in Corollary~\ref{cor:projected-pendulum} and its proof.
This illustrates a change of law representation; the narrow
angular-pendulum family does not admit the projected law as another report.

\paragraph{Parameter-recovery routes.}
Table~\ref{tab:result-map} summarizes the assumptions and conclusions of
Sections~\ref{sec:scalar-theory}--\ref{sec:canonical-theory}.
The semilinear case is contained in the curvature-excluding feature route;
the canonical route has the separate polynomial-velocity scope stated in
Section~\ref{sec:connection-boundary}.

\begin{table}[H]
\centering
\caption{Parameter-recovery routes within a fixed declared family, conditional
on a shared non-collapsed state map, exact compatibility, and the stated domain
and coverage assumptions. The semilinear row is a special case of the
finite-feature route; the rows are not an exhaustive partition of scalar ODEs.}
\label{tab:result-map}
\small
\setlength{\tabcolsep}{4pt}
\renewcommand{\arraystretch}{1.10}
\begin{tabularx}{\linewidth}{@{}>{\raggedright\arraybackslash}p{0.32\linewidth}>{\raggedright\arraybackslash}p{0.25\linewidth}Y@{}}
\toprule
\rowcolor{gray!10}
\textbf{Family and required design} & \textbf{Coordinate relation} &
\textbf{Parameter conclusion} \\
\midrule
Semilinear; three distinct physical velocities at each covered state
(Theorem~\ref{thm:semilinear-affine-collapse}) &
Raw coordinates are affine-related &
Affine compatible outer set; invariant combinations, singleton identification,
and sufficient local physical anchors \\
\addlinespace
Finite dilation-stable library with constants, excluding $v^2$;
full augmented rank (Theorem~\ref{thm:feature-separation}) &
Raw coordinates are affine-related &
Affine compatibility analysis, restricted to the declared family \\
\addlinespace
Polynomial in velocity, admitting $c_2(z)v^2$;
$d_v+1$ distinct physical velocities per state
(Theorem~\ref{thm:canonical-report-orbit}) &
Raw relation may be nonlinear; canonical relation is affine &
Canonical compatible outer set in the original parameter space;
invariants and local physical-anchor analysis \\
\addlinespace
Required coverage or shared-map premise fails or remains unverified &
The affected guarantee is unavailable &
Acquire additional evidence, supply a separate family-specific argument,
or withhold the affected claim \\
\bottomrule
\end{tabularx}
\end{table}

\section{Physical anchors and local calibration}
\label{app:additional-consequences}

External anchors constrain coordinate witnesses as well as reported parameters.
A single local criterion applies to the raw and canonical compatibility sets.

\begin{proposition}[Local anchor-rank criterion]
\label{prop:anchor-rank}
Let $\mathcal S$ be a $d_{\mathcal S}$-dimensional $C^1$ manifold near
$s_0$, and let $\mathcal A:\mathcal S\to\R^k$ be a $C^1$ vector of external
scalar-anchor predictions. If
\[
 \rank D\mathcal A_{s_0}=d_{\mathcal S},
\]
then $s_0$ is locally the unique point satisfying
$\mathcal A(s)=\mathcal A(s_0)$.
If $k<d_{\mathcal S}$ and $\mathcal A$ has locally constant maximal rank $k$,
its local level set has dimension $d_{\mathcal S}-k>0$.
Thus fewer than $d_{\mathcal S}$ scalar anchors are insufficient at such
regular points.
\end{proposition}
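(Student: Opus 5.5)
The plan is to prove both parts of Proposition~\ref{prop:anchor-rank} through local charts, using the inverse function theorem for the first part and the constant-rank theorem for the second. Since $\mathcal S$ is a $d_{\mathcal S}$-dimensional $C^1$ manifold near $s_0$, we can pick a $C^1$ chart $\chi:W\to\R^{d_{\mathcal S}}$ on a neighborhood $W$ of $s_0$ with $\chi(s_0)=0$. We then work with $G:=\mathcal A\circ\chi^{-1}$, defined on an open set $V\ni0$ in $\R^{d_{\mathcal S}}$ and taking values in $\R^k$. Because $\chi$ is a $C^1$ diffeomorphism onto its image, $\rank DG_0=\rank D\mathcal A_{s_0}$. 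Moreover, local uniqueness of solutions of $\mathcal A(s)=\mathcal A(s_0)$ near $s_0$ is equivalent to local uniqueness of solutions of $G(x)=G(0)$ near $0$. So everything reduces to a statement about a $C^1$ map between Euclidean spaces.

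For the full-rank case, $\rank DG_0=d_{\mathcal S}$ says that $DG_0:\R^{d_{\mathcal S}}\to\R^k$ is injective, so necessarily $k\ge d_{\mathcal S}$.
\begin{itemize}
\item Choose $d_{\mathcal S}$ linearly independent rows of $DG_0$. Equivalently, take a linear projection $P:\R^k\to\R^{d_{\mathcal S}}$ such that $P\circ DG_0$ is invertible.
\item The map $H=P\circ G$ is $C^1$ with $DH_0$ invertible. By the inverse function theorem, $H$ is injective on some neighborhood $V_0\ni0$.
\item If $x\in V_0$ and $G(x)=G(0)$, then $H(x)=H(0)$, so $x=0$.
\end{itemize}
Pulling back through $\chi$ gives a neighborhood of $s_0$ in which $s_0$ is the only solution. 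An alternative that avoids the projection is a direct estimate: $|G(x)-G(0)|\ge(\sigma_{\min}(DG_0)-o(1))|x|$, using continuity of $DG$. The projection route is cleaner, so I would use it.

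For the deficient case, assume $k<d_{\mathcal S}$ and that $\mathcal A$ has constant rank $k$ near $s_0$, so $DG$ is surjective on a neighborhood of $0$. Then $G(0)$ is a regular value of $G$ restricted to that neighborhood. By the submersion (implicit function) theorem, $G^{-1}(G(0))$ near $0$ is a $C^1$ submanifold of dimension $d_{\mathcal S}-k>0$. Concretely, after permuting coordinates so that the $k\times k$ block $\partial G/\partial x_{\mathrm{II}}$ is invertible at $0$, the level set is the graph $x_{\mathrm{II}}=\gamma(x_{\mathrm I})$ over an open set of $x_{\mathrm I}\in\R^{d_{\mathcal S}-k}$. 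This graph contains points other than $0$ in every neighborhood of $0$, so $s_0$ is not locally isolated. The final sentence of the proposition is just the contrapositive reading: at such regular points, $k<d_{\mathcal S}$ anchors cannot give local uniqueness.

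The main obstacle is bookkeeping rather than analysis. One has to make sure the rank hypothesis is stated for the derivative on the tangent space $T_{s_0}\mathcal S$, not on some ambient lifted coordinates $(\lambda,\tau,\eta)$. This matters because the lifted set $\mathcal S_U(\theta)$ may be embedded in a higher-dimensional space. The chart reduction handles this, as long as $D\mathcal A_{s_0}$ is read as a map on the tangent space. I would add a remark that the conclusion is purely local: disconnected branches, such as the reflection $\lambda\mapsto-\lambda$, are not excluded and must be checked separately, as the surrounding text already notes.
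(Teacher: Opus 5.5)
Your proposal is correct and follows essentially the same route as the paper. The paper notes that the maximal rank $d_{\mathcal S}$ persists near $s_0$ and applies the constant-rank theorem to get a zero-dimensional level set, then uses the same theorem with constant rank $k<d_{\mathcal S}$ to get a $(d_{\mathcal S}-k)$-dimensional fiber. Your projection-plus-inverse-function-theorem argument for the first part is the immersion case of that theorem written out explicitly, so it does not need the rank-persistence step; your submersion/implicit-function argument for the second part is the same as the paper's.
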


The proof is in Appendix~\ref{app:proofs-orbit-calculus}.
Full differential rank is sufficient for local uniqueness, not necessary:
singular but locally injective anchor maps are possible. Disconnected branches
and global uniqueness require separate checks.

\paragraph{Raw affine witnesses.}
The family-compatible outer set projects out the affine witness. To retain
the coordinate information on which physical anchors act, define
\begin{equation}
\label{eq:gauge-parameter-certificate}
\mathcal S_U(\theta)
:=\{(\lambda,\tau,\eta)\in\R^*\times\R\times\Theta:
(\lambda,\tau)\text{ is an affine witness for }\eta\text{ on }U\}.
\end{equation}
Its parameter projection is $\OU_U(\theta)$, but the lifted set also retains
coordinate symmetries that leave the mechanism parameters unchanged.
At a regular point where $\mathcal S_U(\theta)$ is locally a $C^1$ manifold,
Proposition~\ref{prop:anchor-rank} applies to any $C^1$ anchor predictions
restricted to that manifold. A paired physical state $u_\star\in U$ predicts
the encoded value $\lambda u_\star+\tau$. Anchors outside $U$ require a
separately justified extension of the witness.

\paragraph{Canonical witnesses.}
For the polynomial-velocity family, the affine witness relates law-dependent
normalized coordinates. Define
\begin{equation}
\label{eq:canonical-lifted-set}
\mathcal S_U^{\mathrm{can}}(\theta)
:=\left\{(\lambda,\tau,\eta)\in\R^*\times\R\times\Theta:
\begin{array}{l}
 A(q)=\lambda q+\tau,\quad A(\psi_\theta(U))\subset J_\eta,\\[-0.1em]
 A\text{ satisfies \eqref{eq:normalized-affine-orbit}}
\end{array}\right\}.
\end{equation}
Its parameter projection is $\mathcal O_U^{\mathrm{can}}(\theta)$.
The associated raw-coordinate witness is
\begin{equation}
\label{eq:canonical-raw-witness-lifted}
 f_{\lambda,\tau,\eta;\theta}(u)
 =\psi_\eta^{-1}\bigl(\lambda\psi_\theta(u)+\tau\bigr).
\end{equation}
Apply Proposition~\ref{prop:anchor-rank} wherever this lifted set is locally a
$C^1$ manifold and the anchor predictions are $C^1$ on it.
Raw position and amplitude anchors must be evaluated through
\eqref{eq:canonical-raw-witness-lifted}; because $\psi_\eta$ depends on the
reported law, an unsigned raw amplitude anchor does not automatically fix
$|\lambda|$. In the semilinear slice, where $c_r=0$ for every $r\ge2$,
choosing identity normalizers identifies the raw and canonical lifted sets.

Both applications concern the full lifted compatibility set. Full rank
therefore gives local uniqueness in every compatible report subset, whereas
the lower bound on the number of anchors need not apply to a smaller subset
realized by a particular encoder architecture. Parameter recovery can require
less information than fixing the entire coordinate witness. For interval-local
laws, the fixed-$U$ parameter projection is an outer set, not automatically
an orbit or an equivalence class.

\section{Proofs for the scalar gauge-orbit theory}
\label{app:proofs-scalar}

We use the admissible-report convention of Section~\ref{sec:setup} and
Definition~\ref{def:state-consistency}: the common map holds along the
continuous paths on $U$, is non-collapsed there, and satisfies
$f(U)\subset D_\eta$. The fitted semilinear law holds at the covering points.
Affine witnesses and the outer set $\OU_U(\theta)$ are as defined in
Section~\ref{sec:scalar-theory}, with domains transported as in
Appendix~\ref{app:domain_conventions}.

\subsection{Proof of Theorem~\ref{thm:semilinear-affine-collapse}}
\begin{proof}
Fix $u\in U$.  At a covered point with physical state $z=u$ and physical
velocity $z'=v$, latent-state consistency holds on a time neighborhood of the
point because $U$ is open.  Differentiating it twice gives
\[
 \hat z'=f'(u)v,
 \qquad
 \hat z''=f''(u)v^2+f'(u)z''.
\]
Substitution of the physical and encoder-side semilinear laws therefore gives
the quadratic residual $R_u(v)$ in
\eqref{eq:semilinear-residual-main}.  
By three-slope level coverage, there are
three pairwise distinct realized physical velocities $v_1,v_2,v_3$ at level
$u$, and compatibility gives $R_u(v_i)=0$ for all three.  A polynomial of
degree at most two with three distinct roots is identically zero.  Hence its
three coefficients vanish:
\[
 f''(u)=0,
 \qquad
 f'(u)\bigl(a_\eta(f(u))-a_\theta(u)\bigr)=0,
 \qquad
 b_\eta(f(u))-f'(u)b_\theta(u)=0.
\]
Since $u$ was arbitrary, $f''=0$ on the interval $U$, so
$f(u)=\lambda u+\tau$ there.  The non-collapse premise gives $\lambda\neq0$;
therefore division of the middle identity by $f'=\lambda$ and substitution
into the last identity yield
\[
 a_\eta(\lambda u+\tau)=a_\theta(u),
 \qquad
 b_\eta(\lambda u+\tau)=\lambda b_\theta(u),
\]
which are exactly the orbit equations \eqref{eq:compat-a-main}.
\end{proof}

\subsection{Proof of Theorem~\ref{thm:orbit-characterization}}
\begin{proof}
Let $(E_\phi,f,\eta)$ be any admissible state-consistent encoder report on $U$.
Theorem~\ref{thm:semilinear-affine-collapse} applies, so
$f(u)=\lambda u+\tau$ on $U$ with $\lambda\neq0$, and the orbit equations
\eqref{eq:compat-a-main} hold. Thus $\eta\in\OU_U(\theta)$.
If $\OU_U(\theta)=\{\theta\}$, membership immediately gives $\eta=\theta$.
\end{proof}

\subsection{Proof of affine-compatible encoder realizability}
\label{app:proof-realizability}

\begin{proof}[Proof of Proposition~\ref{prop:affine-realizability}]
The inclusion $\subseteq$ in \eqref{eq:realizable-compatible-set}
follows from Theorem~\ref{thm:orbit-characterization}.
For the reverse inclusion, fix $\eta\in\mathcal O_U(\theta)$ and an affine
witness $(\lambda,\tau)$. By definition, $\lambda\ne0$,
$\lambda U+\tau\subset D_\eta$, and, for every $u\in U$,
\begin{equation}
 a_\eta(\lambda u+\tau)=a_\theta(u),\qquad
 b_\eta(\lambda u+\tau)=\lambda b_\theta(u).
 \label{eq:realizability-witness}
\end{equation}
Let $E_\eta=\lambda E_0+\tau$, which belongs to $\mathscr E$ by assumption.
For every clip, on the times with $z^{(m)}(t)\in U$,
\[
 \hat z^{(m)}(t):=E_\eta(x^{(m)}(t))
 =\lambda z^{(m)}(t)+\tau.
\]
The affine map extends smoothly to the state domain, is common to all clips,
and is non-collapsed. Its image on $U$ lies in $D_\eta$. Since $U$ is open
and each physical path is continuous, the displayed equality holds on a
time neighborhood of every interior point under consideration. The
encoder-induced path is therefore $C^2$ there, with
$\hat z^{(m)\prime}=\lambda z^{(m)\prime}$ and
$\hat z^{(m)\prime\prime}=\lambda z^{(m)\prime\prime}$. Using
\eqref{eq:realizability-witness},
\begin{align*}
 &\hat z^{(m)\prime\prime}
   +a_\eta(\hat z^{(m)})\hat z^{(m)\prime}
   +b_\eta(\hat z^{(m)})\\
 &\qquad=\lambda\bigl[z^{(m)\prime\prime}
   +a_\theta(z^{(m)})z^{(m)\prime}
   +b_\theta(z^{(m)})\bigr]=0.
\end{align*}
Thus the reported law holds at every such interior point, including the
covering points. All admissibility conditions are satisfied, proving the
reverse inclusion. The construction uses the same videos and recorded
time, without a time rescaling. Three-slope coverage is needed for the
forward containment, not for this reverse construction.
\end{proof}

\begin{proof}[Proof of Corollary~\ref{cor:realizable-invariants}]
The identity witness gives $\theta\in\mathcal O_U(\theta)$. If $\mathcal J$
is constant on this set, \eqref{eq:realizable-compatible-set} implies
$\mathcal J(\eta)=\mathcal J(\theta)$ for every admissible report in
$\mathscr E$. If it is not constant, some $\eta\in\mathcal O_U(\theta)$
satisfies $\mathcal J(\eta)\ne\mathcal J(\theta)$; Proposition~\ref{prop:affine-realizability}
realizes a report with that value. Taking $\mathcal J$ to be the identity
proves the full-parameter claim.
\end{proof}

\paragraph{Representation and domain qualifications.}
Exact physical-coordinate representation is assumed, not inferred from
shared weights, universal approximation, or finite training error. An
unconstrained final affine layer can supply output-affine closure, but does
not establish the existence of $E_0$. Fixed output bounds, fixed-scale
normalizations or imposed physical anchors can restrict the report class;
the proposition must then be applied only after checking its hypotheses.
No compatibility is asserted outside $U$. Even when realizability holds,
transported-domain restrictions may prevent an interval-local compatible
set from being an equivalence class. The result concerns the semilinear
report convention; it does not automatically extend to nonlinear
raw-to-canonical transformations.

\subsection{Proofs for invariants and anchors}
\label{app:proofs-orbit-calculus}

\begin{proof}[Proof of Proposition~\ref{prop:anchor-rank}]
Restrict the $C^1$ map $\mathcal A$ to the
$d_{\mathcal S}$-dimensional manifold $\mathcal S$ near $s_0$. If its
differential has rank $d_{\mathcal S}$ at $s_0$, that maximal rank persists
locally. The constant-rank theorem then makes the local level set
$\mathcal A(s)=\mathcal A(s_0)$ zero-dimensional, so $s_0$ is locally the unique
representative satisfying the anchors. More generally, if the restricted rank
is constant and equal to $r$ near $s_0$, the local level set has dimension
$d_{\mathcal S}-r$. In particular, $k<d_{\mathcal S}$ independent scalar
anchors leave dimension $d_{\mathcal S}-k>0$. This regular-level argument
does not rule out isolated fibers at singular points, and full differential
rank is a sufficient local criterion, not a necessary condition for an
arbitrary nonlinear anchor to isolate a point.
\end{proof}

\subsection{Translation lock and the weight rule}
\label{app:proofs-weight}

\begin{lemma}[Top-gap test for translation lock]
\label{lem:translation-lock}
Let $U$ be a nonempty open interval and let the declared polynomial family have
fixed damping and restoring supports $I_a$ and $I_b$.  Fix $\theta\in\Theta$.
Every affine witness for every $\eta\in\OU_U(\theta)$ has $\tau=0$ if either
of the following holds:
\begin{enumerate}[leftmargin=1.6em,itemsep=0.1em]
\item $n:=\max I_b\ge1$, $B_n(\theta)\neq0$, and $n-1\notin I_b$; or
\item $m:=\max I_a\ge1$, $A_m(\theta)\neq0$, and $m-1\notin I_a$.
\end{enumerate}
Thus either structural top gap is sufficient to make the surviving affine
witness a pure scaling $u\mapsto\lambda u$.
\end{lemma}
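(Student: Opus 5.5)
We argue directly from the orbit equations \eqref{eq:compat-a-main}. Fix $\eta\in\OU_U(\theta)$ with an affine witness $(\lambda,\tau)$. Then, for every $u\in U$,
\[
a_\eta(\lambda u+\tau)=a_\theta(u),\qquad b_\eta(\lambda u+\tau)=\lambda b_\theta(u).
\]
Both sides of each identity are polynomials in $u$, because $\eta$ belongs to the same declared family with the same supports $I_a,I_b$. Two polynomials that agree on the nonempty open interval $U$ agree identically. We may therefore compare coefficients of the two sides as polynomials in $u$.

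Consider case 1. Expand the left side $\sum_{j\in I_b}B_j(\eta)(\lambda u+\tau)^j$ with the binomial theorem. Its degree is at most $n=\max I_b$. The right side $\lambda\sum_{j\in I_b}B_j(\theta)u^j$ has degree exactly $n$, since $\lambda\ne0$ and $B_n(\theta)\ne0$.

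Matching the $u^n$ coefficients gives $B_n(\eta)\lambda^n=\lambda B_n(\theta)$. Hence $B_n(\eta)=\lambda^{1-n}B_n(\theta)\ne0$.

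Now match the $u^{n-1}$ coefficients, using $n\ge1$. On the left, two terms contribute. The $j=n$ term contributes $B_n(\eta)\,n\lambda^{n-1}\tau$. The $j=n-1$ term would contribute $B_{n-1}(\eta)\lambda^{n-1}$, but $n-1\notin I_b$, so this coefficient is structurally zero. No other term reaches degree $n-1$, since every other $j\in I_b$ satisfies $j\le n-2$. On the right, the $u^{n-1}$ coefficient is $\lambda B_{n-1}(\theta)=0$, again because of the structural gap.

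This leaves $n\lambda^{n-1}B_n(\eta)\tau=0$. Since $n\ge1$, $\lambda\ne0$, and $B_n(\eta)\ne0$, we conclude $\tau=0$.

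Case 2 runs the same way on the damping identity $a_\eta(\lambda u+\tau)=a_\theta(u)$; the only change is that the factor $\lambda$ is absent on the right. The top coefficient gives $A_m(\eta)\lambda^m=A_m(\theta)\ne0$. The gap at degree $m-1$ then forces $m\lambda^{m-1}A_m(\eta)\tau=0$, so again $\tau=0$.

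The only point needing care is to justify coefficient matching from an identity that holds only on $U$, which the open-interval identity theorem for polynomials supplies. We must also use the fact that $\eta$ shares the structural supports, so that the absent monomial $u^{n-1}$ (respectively $u^{m-1}$) is absent for $\eta$ as well as for $\theta$. With $\tau=0$ the witness is a pure scaling $u\mapsto\lambda u$.
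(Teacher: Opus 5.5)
Your proof is correct and follows the paper's argument essentially step for step: extend the orbit identity from $U$ to a polynomial identity, use the top coefficient to get $B_n(\eta)=\lambda^{1-n}B_n(\theta)\neq0$ (respectively $A_m(\eta)\lambda^m=A_m(\theta)\neq0$), and then use the structural gap to read off $n\lambda^{n-1}B_n(\eta)\tau=0$ from the degree-$(n-1)$ coefficient. Nothing is missing.
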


\begin{proof}[Proof of Lemma~\ref{lem:translation-lock}]
Both laws expand over the structural support. Write
\[
b_\eta(w)=\sum_{k\in I_b}B_k(\eta)w^k,
\]
and impose $b_\eta(\lambda u+\tau)=\lambda b_\theta(u)$ on $U$; both sides are polynomials, so the identity holds for all $u\in\R$. Since $n=\max I_b$, no term of $b_\eta(\lambda u+\tau)$ has degree exceeding $n$, and only the index $k=n$ reaches degree $n$: matching gives $B_n(\eta)\lambda^n=\lambda B_n(\theta)$, hence $B_n(\eta)=\lambda^{1-n}B_n(\theta)\neq0$, independently of $\tau$. At degree $n-1$, the index $k=n$ contributes $\binom{n}{1}B_n(\eta)\lambda^{n-1}\tau=n\,B_n(\eta)\lambda^{n-1}\tau$ through the binomial expansion; the index $k=n-1$ is absent because $n-1\notin I_b$; and every index $k<n-1$ contributes only to degrees at most $k<n-1$. On the right the coefficient is $\lambda B_{n-1}(\theta)=0$. Hence $n\,B_n(\eta)\lambda^{n-1}\tau=0$, and since $B_n(\eta)\neq0$, $\lambda\neq0$, and $n\ge1$, we conclude $\tau=0$. With $\tau=0$ the witness reduces to $u\mapsto\lambda u$.

The damping-side top-gap lock is identical. If $a_\eta(\lambda u+\tau)=a_\theta(u)$ and the structural damping support $I_a$ has top degree $m\ge1$ with $A_m(\theta)\neq0$ and $m-1\notin I_a$, then matching the top coefficient gives $A_m(\eta)\lambda^m=A_m(\theta)\neq0$; matching degree $m-1$ gives $mA_m(\eta)\lambda^{m-1}\tau=0$. Thus $\tau=0$ by the same argument.
\end{proof}

\begin{proof}[Proof of Proposition~\ref{thm:weight-calculus}]
By translation lock every witness has $\tau=0$, so for any
$\eta\in\OU_U(\theta)$ with witness scale $\lambda$ the orbit equations
\eqref{eq:compat-a-main} read $a_\eta(\lambda u)=a_\theta(u)$ and
$b_\eta(\lambda u)=\lambda b_\theta(u)$. Since $U$ is open, these polynomial
identities can be matched coefficientwise. Substituting the monomial expansions
and matching the coefficient of $u^i$ (respectively $u^j$),
\[
 A_i(\eta)\lambda^i=A_i(\theta),
 \qquad
 B_j(\eta)\lambda^j=\lambda B_j(\theta),
\]
which is \eqref{eq:weight-scaling} with $w(A_i)=i$ and $w(B_j)=j-1$. A rational
monomial $\prod_k C_k^{p_k}$ transforms by
$\lambda^{-\sum_k p_k w(C_k)}$, so it is invariant whenever its total weight
vanishes and its denominator is nonzero.  Finally, at a paired state anchor
$u_\star\in U\setminus\{0\}$ with $\hat z_\star=f(u_\star)$, translation lock
gives $\hat z_\star=\lambda u_\star$.  A signed anchor therefore fixes
$\lambda=\hat z_\star/u_\star$ and hence every coefficient through
\eqref{eq:weight-scaling}; an unsigned anchor fixes only $|\lambda|$ and may
leave the reflection action on odd weights.
\end{proof}

\subsection{Proofs for the stability results}
\label{app:proofs-robust}

\paragraph{Quantitative slope coverage.}
For $\delta_v>0$ and $V<\infty$, the collection has
\emph{$(\delta_v,V)$-slope coverage on $U$} if for every $u\in U$ there are
three realized velocities $v_1,v_2,v_3$ at level $u$ with
$|v_i-v_j|\ge\delta_v$ for $i\neq j$ and $|v_i|\le V$.

\begin{lemma}[Quantitative exact-level affine collapse]
\label{prop:robust-collapse}
Let $K=[u_-,u_+]\Subset U$ be a nondegenerate compact interval, assume latent-state consistency with $f\in C^2$ near
$K$, $(\delta_v,V)$-slope coverage on $K$, $0<c_0\le|f'|$, exact physical
trajectories, and absolute encoder-side residual at most $\varepsilon$ at
every covering point, i.e. $|R_u(v_i)|\le\varepsilon$ for each covering
velocity $v_i$ at level $u$. Then, for every $u\in K$,
\[
|f''(u)|\le\frac{3\varepsilon}{\delta_v^2},\quad
|a_\eta(f(u))-a_\theta(u)|\le\frac{6V\varepsilon}{c_0\delta_v^2},\quad
|b_\eta(f(u))-f'(u)b_\theta(u)|\le\frac{3V^2\varepsilon}{\delta_v^2}.
\]
For every $u_0\in K$, set
$\ell_{u_0}(u)=f(u_0)+f'(u_0)(u-u_0)$. Then
\[
\sup_{u\in K}|f(u)-\ell_{u_0}(u)|
\le\frac{3\varepsilon}{2\delta_v^2}\operatorname{diam}(K)^2.
\]
\end{lemma}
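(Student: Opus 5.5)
The plan is to redo the proof of Theorem~\ref{thm:semilinear-affine-collapse} quantitatively: replace ``a quadratic with three roots vanishes'' with a stable Lagrange-interpolation bound on its coefficients.

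Fix $u\in K$. At a covering point with state $u$ and velocity $v$, latent-state consistency holds on a time neighbourhood. Differentiating it and using the exact physical law gives, exactly as in \eqref{eq:semilinear-residual-main},
\[
R_u(v)=\alpha v^2+\beta v+\gamma,
\]
with
\[
\alpha=f''(u),\qquad \beta=f'(u)\bigl(a_\eta(f(u))-a_\theta(u)\bigr),\qquad \gamma=b_\eta(f(u))-f'(u)b_\theta(u).
\]
By $(\delta_v,V)$-slope coverage there are three velocities $v_1,v_2,v_3$ at level $u$ with pairwise separation at least $\delta_v$ and $|v_i|\le V$. By hypothesis $|R_u(v_i)|\le\varepsilon$ for each $i$.

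Since $R_u$ has degree at most two, it equals its Lagrange interpolant through the three nodes. Reading off coefficients, with $\{i,j,k\}=\{1,2,3\}$ and $P_i:=\prod_{l\neq i}(v_i-v_l)$, gives
\[
\alpha=\sum_i\frac{R_u(v_i)}{P_i},\qquad
\beta=-\sum_i\frac{R_u(v_i)(v_j+v_k)}{P_i},\qquad
\gamma=\sum_i\frac{R_u(v_i)\,v_jv_k}{P_i}.
\]
Each $|P_i|$ is a product of two gaps, so $|P_i|\ge\delta_v^2$. We also have $|v_j+v_k|\le 2V$ and $|v_jv_k|\le V^2$. Summing three terms then yields:
\begin{itemize}
\item $|f''(u)|\le 3\varepsilon/\delta_v^2$;
\item $|\beta|\le 6V\varepsilon/\delta_v^2$, and dividing by $|f'(u)|\ge c_0$ gives the stated bound on $|a_\eta(f(u))-a_\theta(u)|$;
\item $|\gamma|\le 3V^2\varepsilon/\delta_v^2$.
\end{itemize}
These three bounds hold pointwise at every $u\in K$.

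For the affine-approximation bound, fix $u_0\in K$. Since $f\in C^2$ near the interval $K$, Taylor's theorem with integral remainder gives
\[
f(u)-\ell_{u_0}(u)=\int_{u_0}^{u}(u-s)f''(s)\,ds.
\]
The segment between $u_0$ and $u$ lies in $K$, so the uniform bound on $f''$ applies. Hence
\[
|f(u)-\ell_{u_0}(u)|\le \tfrac12\cdot\frac{3\varepsilon}{\delta_v^2}\,(u-u_0)^2\le\frac{3\varepsilon}{2\delta_v^2}\operatorname{diam}(K)^2 .
\]

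The only step needing care is the coefficient extraction. One must use that each Lagrange denominator contains exactly two pairwise gaps, which gives $\delta_v^2$ rather than $\delta_v^3$, and bound the symmetric functions of the two remaining nodes by $V$. Everything else is the exact argument with equalities replaced by inequalities.
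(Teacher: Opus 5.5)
Your proposal is correct and matches the paper's proof essentially step for step. Both use Lagrange interpolation of the quadratic residual through the three covering velocities, with each denominator bounded below by $\delta_v^2$ and the basis-polynomial coefficients bounded by $1/\delta_v^2$, $2V/\delta_v^2$ and $V^2/\delta_v^2$, then divide the middle bound by $c_0$ and finish with Taylor's theorem using the uniform bound on $f''$ over $K$.
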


\begin{proof}[Proof of Lemma~\ref{prop:robust-collapse}]
For each $u\in K$, the exact physical law and the chain rule give the
quadratic residual $R_u(v)$ given by the right-hand side of
\eqref{eq:semilinear-residual-main}. At the three covering velocities, $|R_u(v_i)|\le\varepsilon$. Lagrange
interpolation writes $R_u(v)=\sum_{i=1}^3R_u(v_i)L_i(v)$, where
\[
 L_i(v)=\frac{(v-v_j)(v-v_k)}{(v_i-v_j)(v_i-v_k)},
 \qquad \{i,j,k\}=\{1,2,3\}.
\]
The denominator has absolute value at least $\delta_v^2$, so the quadratic,
linear, and constant coefficients of $L_i$ have absolute values at most
$1/\delta_v^2$, $2V/\delta_v^2$, and $V^2/\delta_v^2$, respectively.
Summing the three residual contributions bounds $|f''|$,
$|f'(a_\eta\circ f-a_\theta)|$, and
$|b_\eta\circ f-f'b_\theta|$ by
$3\varepsilon/\delta_v^2$, $6V\varepsilon/\delta_v^2$, and
$3V^2\varepsilon/\delta_v^2$. Divide the middle bound by $|f'|\ge c_0$.
Finally, Taylor's theorem gives
$|f(u)-\ell_{u_0}(u)|\le\tfrac12\sup_K|f''|\,|u-u_0|^2$, proving the
stated uniform bound for every $u_0\in K$.
\end{proof}

\begin{theorem}[Near-level, design-conditioned affine stability]
\label{thm:near-level-stability}
Let $K=[u_-,u_+]\Subset U$ be a nondegenerate compact interval, where
$K\Subset U$ means that $K$ is compactly contained in $U$. Assume there is a map
$f\in C^2$ on an open neighborhood of $K$ such that
$\hat z^{(m)}=f\circ z^{(m)}$ on a time neighborhood of every selected realized
point used below, and assume that the physical trajectories satisfy
\eqref{eq:true-semilinear-main} exactly at those points.
For $v\in\R$, write
$\boldsymbol\phi_{\rm sl}(v)=(v^2,v,1)^\top$. The chain-rule residual at level
$u$ and velocity $v$ has coefficient form
$\boldsymbol\phi_{\rm sl}(v)^\top c(u)$, where
\begin{equation}
\label{eq:coefficient-residual-vector}
c(u):=\begin{pmatrix}
f''(u)\\
f'(u)\bigl(a_\eta(f(u))-a_\theta(u)\bigr)\\
b_\eta(f(u))-f'(u)b_\theta(u)
\end{pmatrix}.
\end{equation}
Assume $0<c_0\le|f'|$ on $K$, and suppose $c$ is $L_c$-Lipschitz on $K$. This is a direct
regularity assumption on the composite residual coefficients; it follows, for
example, from $f\in C^{2,1}$ near $K$ and local Lipschitz regularity of
$a_\theta,b_\theta,a_\eta,b_\eta$ on the relevant ranges. Fix
$u_0\in K$ and define
$\ell_{u_0}(u):=f(u_0)+f'(u_0)(u-u_0)$. For every target level $u\in K$, suppose there are
three realized points with physical levels $u_i\in K$ and velocities $v_i$
such that
\[
|u_i-u|\le h,
\qquad |v_i|\le V,
\qquad
\Phi_u:=\begin{pmatrix}\boldsymbol\phi_{\rm sl}(v_1)^\top\\
\boldsymbol\phi_{\rm sl}(v_2)^\top\\
\boldsymbol\phi_{\rm sl}(v_3)^\top\end{pmatrix},
\qquad \sigma_{\min}(\Phi_u)\ge\sigma_\star>0.
\]
If the absolute encoder-side residual at each of these three realized points
is at most $\varepsilon$, then
\begin{equation}
\label{eq:near-level-bound}
\|c(u)\|_2\le B,
\qquad
B:=\frac{\sqrt3}{\sigma_\star}
\left(\varepsilon+L_c h\sqrt{V^4+V^2+1}\right).
\end{equation}
Consequently the same right-hand side bounds $|f''(u)|$ and the restoring
compatibility error, and after division by $c_0$ it bounds the damping
compatibility error. Moreover,
\begin{equation}
\label{eq:near-affine-bound}
\sup_{u\in K}|f(u)-\ell_{u_0}(u)|
\le \tfrac12 B\,\operatorname{diam}(K)^2.
\end{equation}
\end{theorem}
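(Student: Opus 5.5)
The plan is to run a perturbed version of the exact-level interpolation argument of Lemma~\ref{prop:robust-collapse}, replacing Lagrange interpolation by a least-squares inversion of the design $\Phi_u$. Fix a target level $u\in K$ and the three selected realized points $(u_i,v_i)$. At each of them, latent-state consistency holds on a time neighbourhood, so the chain rule gives $\hat z'=f'(u_i)v_i$ and $\hat z''=f''(u_i)v_i^2+f'(u_i)z''$. Eliminating $z''$ with the exact physical law \eqref{eq:true-semilinear-main} shows that the encoder-side residual at that point equals $\boldsymbol\phi_{\rm sl}(v_i)^\top c(u_i)$. By hypothesis,
\[
 r_i:=\boldsymbol\phi_{\rm sl}(v_i)^\top c(u_i),\qquad |r_i|\le\varepsilon .
\]

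Next, move every residual to the common level $u$. Write
\[
 \boldsymbol\phi_{\rm sl}(v_i)^\top c(u)=r_i-\boldsymbol\phi_{\rm sl}(v_i)^\top\bigl(c(u_i)-c(u)\bigr).
\]
By Cauchy--Schwarz and the Lipschitz hypothesis, the correction term is at most
\[
 \|\boldsymbol\phi_{\rm sl}(v_i)\|_2\,L_c|u_i-u|\le\sqrt{V^4+V^2+1}\,L_ch .
\]
Here I use $|v_i|\le V$ and the fact that $u_i,u\in K$, so the Lipschitz bound applies on $K$. Stacking the three rows gives $\Phi_uc(u)=\boldsymbol e$, where each entry of $\boldsymbol e$ satisfies $|e_i|\le \varepsilon+L_ch\sqrt{V^4+V^2+1}$. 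Hence
\[
 \|\boldsymbol e\|_2\le\sqrt3\,\bigl(\varepsilon+L_ch\sqrt{V^4+V^2+1}\bigr).
\]
Since $\sigma_{\min}(\Phi_u)\ge\sigma_\star>0$, the square matrix $\Phi_u$ is invertible and $\|\Phi_u^{-1}\|_2\le1/\sigma_\star$. Therefore $\|c(u)\|_2\le\|\boldsymbol e\|_2/\sigma_\star=B$, which is \eqref{eq:near-level-bound}.

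The componentwise consequences are immediate, since each component of $c(u)$ is bounded by $\|c(u)\|_2$. The first component gives $|f''(u)|\le B$. The third gives the restoring error $|b_\eta(f(u))-f'(u)b_\theta(u)|\le B$. For the second, divide by $|f'(u)|\ge c_0$ to get $|a_\eta(f(u))-a_\theta(u)|\le B/c_0$.

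For \eqref{eq:near-affine-bound}, the bound $|f''|\le B$ now holds at every point of $K$. Taylor's theorem with Lagrange remainder around $u_0$ on the interval $K$ gives
\[
 |f(u)-\ell_{u_0}(u)|\le\tfrac12\sup_K|f''|\,|u-u_0|^2\le\tfrac12B\operatorname{diam}(K)^2 .
\]
This uses that $f\in C^2$ on a neighbourhood of $K$.

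I expect no serious obstacle. The only care point is making the level-transport step rigorous: the Lipschitz constant must be applied to the whole vector $c$, and this requires the $u_i$ to lie in $K$, which the hypotheses provide. The rest is bookkeeping: the constant $\sqrt3$ comes from passing from entrywise bounds to the Euclidean norm, and the inverse-norm bound is the standard $1/\sigma_{\min}$ fact.
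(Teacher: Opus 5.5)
Your proposal is correct and follows the paper's own proof essentially step for step. Both derive the chain-rule residual $\boldsymbol\phi_{\rm sl}(v_i)^\top c(u_i)$, transport it to the target level by adding and subtracting $c(u)$ and applying Cauchy--Schwarz with the Lipschitz bound, stack the three rows to obtain the $\sqrt3$ factor, invert using $\sigma_{\min}(\Phi_u)\ge\sigma_\star$, and finish with Taylor's theorem using the uniform bound $\sup_K|f''|\le B$.
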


\paragraph{Gauge dependence of design conditioning.}
The matrix $\Phi_u$ above uses physical velocities. Under an exact affine map,
$\hat v_i=\lambda v_i$ and the analogous encoder-velocity matrix satisfies
\begin{equation}
\label{eq:design-gauge-transform}
 \widehat\Phi_u
 =\Phi_u\operatorname{diag}(\lambda^2,\lambda,1).
\end{equation}
Thus rank is gauge invariant but the raw smallest singular value is not.
Column normalization yields a scale-free rank/shape diagnostic, whereas the raw
bound \eqref{eq:near-level-bound} remains a physical-unit statement and requires
a calibrated or bounded scale.

\begin{proof}[Proof of Theorem~\ref{thm:near-level-stability}]
At a realized point with physical level $u_i$ and velocity $v_i$, the exact
physical equation and the chain rule give the encoder-side residual
\[
R_{u_i}(v_i)=\boldsymbol\phi_{\rm sl}(v_i)^\top c(u_i).
\]
For a fixed target $u$, add and subtract $c(u)$ to obtain
\begin{align*}
|\boldsymbol\phi_{\rm sl}(v_i)^\top c(u)|
&\le |R_{u_i}(v_i)|
  +\|\boldsymbol\phi_{\rm sl}(v_i)\|_2\,\|c(u_i)-c(u)\|_2\\
&\le \varepsilon+\sqrt{V^4+V^2+1}\,L_c h.
\end{align*}
Stacking the three inequalities and using the Euclidean norm gives
\[
\|\Phi_u c(u)\|_2
\le \sqrt3\left(\varepsilon+L_ch\sqrt{V^4+V^2+1}\right).
\]
Because $\sigma_{\min}(\Phi_u)\ge\sigma_\star$, inversion on the three-dimensional
coefficient space yields \eqref{eq:near-level-bound}. The first and third
components of $c(u)$ are bounded directly; dividing the second component by
$|f'(u)|\ge c_0$ bounds the damping compatibility error. Taking the supremum
of the first-component bound over $K$ and applying Taylor's theorem around any
$u_0\in K$ proves the final near-affine statement.
\end{proof}

\paragraph{Column-normalized diagnostic.}
If each nonzero column is normalized to unit Euclidean norm, then
\[
 \overline{\widehat\Phi}_u
 =\overline\Phi_u\operatorname{diag}(1,\operatorname{sign}\lambda,1),
\]
so the column-normalized singular values are invariant. This normalized matrix
is a scale-free rank/shape diagnostic; it cannot replace the raw physical-unit
conditioning in Theorem~\ref{thm:near-level-stability}, because the residual
coefficients and their error bounds also change units under the gauge.

\paragraph{Approximate state consistency.}
When the encoder path only approximates $f\circ z$, use $f\circ z$ as the
state-consistent reference path. Under the regularity, domain, and error
hypotheses of Appendix~\ref{app:sampled-interface},
\eqref{eq:AB_state_consistency_transfer} gives the additional residual budget
needed for this replacement. The bounds above then apply to the reference
path with that inflated budget, provided their remaining hypotheses hold.

\section{Proofs for feature separation and the canonical connection gauge}
\label{app:connection-proofs}
\label{app:finite-basis}

This appendix supplies the intermediate coefficient-matching and law-equivalence
statements and proves the results used in Section~\ref{sec:canonical-theory}
and Appendix~\ref{app:connection-details}.
All coefficients use the right-hand-side convention: in particular, a
polynomial-velocity law is written as $z''=\sum_r c_r(z)(z')^r$.
All identities are restricted to the open intervals on which the displayed
compositions are defined.

\subsection{Feature separation}

\begin{proof}[Proof of Theorem~\ref{thm:feature-separation}]
Fix $u\in U$.  At any realized point with $z=u$ and $z'=v$, latent-state
consistency holds on a time neighborhood of that point because $U$ is open and
the trajectory is continuous.  It may therefore be differentiated twice at
the point:
\[
 \hat z'=f'(u)v,
 \qquad
 \hat z''=f''(u)v^2+f'(u)z''.
\]
Substituting the physical and encoder-induced feature laws gives
\begin{equation}
\label{eq:feature-residual-proof}
 R_u(v):=f''(u)v^2+f'(u)F_\theta(u,v)
 -F_\eta(f(u),f'(u)v)=0
\end{equation}
at every realized velocity at level $u$.

The function $v\mapsto f'(u)F_\theta(u,v)$ lies in $\mathsf V$ because
$F_\theta(u,\cdot)\in\mathsf V$.  The function
$v\mapsto F_\eta(f(u),f'(u)v)$ also lies in $\mathsf V$: this is dilation
stability with dilation factor $f'(u)$, including the case $f'(u)=0$.
Consequently $R_u\in\mathsf V+\operatorname{span}\{v^2\}$.  Since
$v^2\notin\mathsf V$, this sum is direct. Use the basis
$\varphi_1,\ldots,\varphi_K$ specified in the theorem. In the ordered basis
$(\varphi_1,\ldots,\varphi_K,v^2)$ of the direct sum, the evaluation matrix is
exactly $M_u$ in \eqref{eq:feature-rank-matrix}, which has column rank
$K+1$.  Since \eqref{eq:feature-residual-proof} vanishes at all its rows, the
coefficient vector of $R_u$ is zero, so $R_u\equiv0$ as a function of $v$.
In particular, the coefficient of the last basis element $v^2$ is
$f''(u)=0$.

The state $u$ was arbitrary, hence $f''=0$ on the interval $U$ and
$f(u)=\lambda u+\tau$ there.  Since $f'\not\equiv0$, its constant derivative
$\lambda$ is nonzero.  Returning to $R_u\equiv0$ and substituting this affine
form gives
$F_\eta(\lambda u+\tau,\lambda v)=\lambda F_\theta(u,v)$ for all $u\in U$
and all $v\in\R$, which is \eqref{eq:feature-orbit-equation}.
\end{proof}

\paragraph{Odd drag: rank and affine scaling.}
Consider $\mathsf V=\operatorname{span}\{1,v,v|v|\}$ and four distinct
velocities including both signs. If
$P(v)=a+bv+cv|v|+dv^2$ vanishes at all four, its restrictions to the two
half-lines are quadratics. If three sample velocities lie in the same closed
half-line, that quadratic vanishes identically; the remaining, strictly
opposite-sign velocity then forces the other quadratic to vanish as well.
This also covers a sample at zero. Otherwise the samples are $p,q,-r,-s$
with $p,q,r,s>0$, and
\[
 P(v)=k_+(v-p)(v-q)\quad(v\ge0),\qquad
 P(v)=k_-(v+r)(v+s)\quad(v\le0).
\]
Equality of their constant and linear coefficients gives
$k_+pq=k_-rs$ and $-k_+(p+q)=k_-(r+s)$, which force $k_+=k_-=0$.
Thus the four-column design has full rank, including when zero is sampled.
Since $(\lambda v)|\lambda v|=\lambda|\lambda|v|v|$,
\eqref{eq:feature-orbit-equation} yields
\[
 c_{0,\eta}(\lambda u+\tau)=\lambda c_{0,\theta}(u),\qquad
 c_{1,\eta}(\lambda u+\tau)=c_{1,\theta}(u),\qquad
 c_{d,\eta}(\lambda u+\tau)=\frac{c_{d,\theta}(u)}{|\lambda|}.
\]

\paragraph{One-sided velocities and the declared family.}
For $F(u,v)=c_0(u)+c_1(u)v+c_d(u)v|v|$ with continuous coefficients on $U$,
let $f:U\to f(U)$ be any $C^2$ diffeomorphism. Define coefficients on $f(U)$ by
\[
 \widetilde c_0\circ f=f'c_0,\qquad
 \widetilde c_1\circ f=c_1,\qquad
 \widetilde c_d\circ f=\frac{f''+f'c_d}{f'|f'|}.
\]
For the corresponding $\widetilde F$, direct substitution gives
\[
 f''(u)v^2+f'(u)F(u,v)
 -\widetilde F(f(u),f'(u)v)
 =f''(u)\bigl(v^2-v|v|\bigr).
\]
Hence the transformed law is exact for all physical $v\ge0$, including
orientation-reversing $f$, but generally fails for $v<0$ wherever $f''\ne0$.
For a physical $v\le0$ branch instead, replace the numerator by $f'c_d-f''$.
The unrestricted class of continuous coefficient functions on the transported
interval is closed under these constructions. Membership in a fixed
parametric family, coefficient ties, and any sign restrictions must be checked
separately; one-sided rank failure alone does not prove
non-identifiability in every such family.

In particular, for the coverage experiment of
Appendix~\ref{app:exp-synth-coverage}, the algebraic control
$f(u)=u+5u^3$ certifies this unrestricted one-sided construction, not an
exact alternative within its state-polynomial $5/3/5$ fitted family. Indeed its physical $c_0(u)=-0.8u$ requires
$\widetilde c_0(u+5u^3)=-0.8u-12u^3$. If $\widetilde c_0$ were a polynomial,
degree comparison would make it linear, while its linear term forces
$\widetilde c_0(y)=-0.8y$, whose cubic coefficient after composition is
$-4$, not $-12$. Thus this twin is outside that polynomial family.

\subsection{Polynomial coefficient matching}

For the fixed family \eqref{eq:poly-velocity-law}, set $d_v=\max\{2,p\}$
and zero-pad missing channels, including $c_2\equiv0$ when $p<2$.
The following statement does not require non-collapse.

\begin{proposition}[Passive coefficient matching]
\label{prop:poly-coefficient-matching}
Let $U\subset R_z$ be a nonempty open interval. Assume
latent-state consistency through $f\in C^2(U)$ with
$f(U)\subset D_\eta$, and at least $d_v+1$ pairwise-distinct physical
velocities realized at interior trajectory points at every $u\in U$.
If the physical and encoder-induced paths satisfy the declared laws
at those points, then, for every $u\in U$,
\begin{align}
 c_{r,\eta}(f(u))f'(u)^r
 &=f'(u)c_{r,\theta}(u), && 0\le r\le p,\ r\neq2,
 \label{eq:raw-compat-r}\\
 c_{2,\eta}(f(u))f'(u)^2
 &=f''(u)+f'(u)c_{2,\theta}(u). && \label{eq:raw-compat-2}
\end{align}
When both declared laws exclude the quadratic channel,
\eqref{eq:raw-compat-2} reduces to $f''=0$. With non-collapse this recovers
the primary invertible affine workflow. When the channel is present, the same
identity is the one-dimensional connection transformation law.
\end{proposition}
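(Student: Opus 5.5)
The argument follows the semilinear proof of Theorem~\ref{thm:semilinear-affine-collapse} and the feature-separation proof of Theorem~\ref{thm:feature-separation}: build a polynomial in the velocity and kill its coefficients by counting roots. Fix $u\in U$ and a covering point with $z^{(m)}(t)=u$, $z^{(m)\prime}(t)=v$. Because $U$ is open and the trajectory is continuous, latent-state consistency $\hat z^{(m)}=f(z^{(m)})$ holds on a time neighborhood of that point. Differentiating twice gives
\[
 \hat z'=f'(u)v,\qquad \hat z''=f''(u)v^2+f'(u)z''.
\]
Substitute the physical law $z''=\sum_{r=0}^p c_{r,\theta}(u)v^r$ and the encoder-side law $\hat z''=\sum_{r=0}^p c_{r,\eta}(f(u))(f'(u)v)^r$. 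Subtracting yields the residual
\[
 P_u(v):=f''(u)v^2+\sum_{r=0}^{p}\bigl(f'(u)c_{r,\theta}(u)-c_{r,\eta}(f(u))f'(u)^r\bigr)v^r ,
\]
which vanishes at every realized velocity at level $u$. The domain condition $f(U)\subset D_\eta$ guarantees that $c_{r,\eta}(f(u))$ is defined.

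Next I would note that $P_u$ is a polynomial in $v$ of degree at most $d_v=\max\{2,p\}$. The $v^2$ term from coordinate curvature has degree $2$, and the law channels have degree at most $p$. Missing channels, including $c_2$ when $p<2$, are zero-padded as the statement allows. By hypothesis $P_u$ has at least $d_v+1$ pairwise-distinct roots, so it is the zero polynomial.

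Reading off the coefficients then gives the two identities. For each $r\neq 2$, the coefficient of $v^r$ gives \eqref{eq:raw-compat-r}. The coefficient of $v^2$ collects both the curvature term and the quadratic channel, $f''(u)+f'(u)c_{2,\theta}(u)-c_{2,\eta}(f(u))f'(u)^2=0$, which is \eqref{eq:raw-compat-2}. Since $u\in U$ was arbitrary, both identities hold on all of $U$. Non-collapse is never used, because no division by $f'$ occurs. This is why the statement does not require it.

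For the remarks, I would argue as follows. If both laws exclude the quadratic channel, then $c_{2,\theta}=c_{2,\eta}=0$, and \eqref{eq:raw-compat-2} becomes $f''=0$. Adding non-collapse then gives an affine $f$ with $\lambda\neq0$, exactly as in Theorem~\ref{thm:semilinear-affine-collapse}. When the channel is present, the same identity can be rewritten at points where $f'\neq0$ as $c_{2,\eta}\circ f\cdot f'=f''/f'+c_{2,\theta}$. This is the one-dimensional connection transformation law.

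The only care needed is degree bookkeeping. The augmentation $d_v=\max\{2,p\}$ must be used rather than $p$, since the curvature term can raise the degree to $2$ even when $p\le1$. That is precisely why at least three velocities are needed in the semilinear case.
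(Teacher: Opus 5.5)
Your proposal is correct and follows essentially the same route as the paper. You form the chain-rule residual $R_u(v)$, note that it lies in $\operatorname{span}\{1,v,\ldots,v^{d_v}\}$, annihilate it with the $d_v+1$ distinct realized velocities (the paper phrases this step as a full-column-rank Vandermonde submatrix rather than as root counting), and read off the $v^r$ and $v^2$ coefficients; your remarks that non-collapse is never needed and that the augmented degree $d_v=\max\{2,p\}$ is required to accommodate the curvature term also match the paper's treatment.
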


\begin{proof}[Proof of Proposition~\ref{prop:poly-coefficient-matching}]
Fix $u\in U$.  As in the preceding proof, differentiating the local
factorization $\hat z=f(z)$ twice and substituting the two
polynomial-velocity laws shows that the residual at physical velocity $v$ is
exactly
\begin{equation}
\label{eq:general-residual-polynomial}
 R_u(v)=f''(u)v^2+f'(u)\sum_{r=0}^{p}c_{r,\theta}(u)v^r
 -\sum_{r=0}^{p}c_{r,\eta}(f(u))(f'(u)v)^r.
\end{equation}
It belongs to $\operatorname{span}\{1,v,\ldots,v^{d_v}\}$ with
$d_v=\max\{2,p\}$ and vanishes at every realized velocity used by the
proposition hypothesis.  Its monomial feature matrix contains a square
Vandermonde submatrix and therefore has full column rank, so the coefficient
vector of $R_u$ is zero and hence
$R_u\equiv0$ as a polynomial.

For $r\neq2$, the coefficient of $v^r$ is
\[
 f'(u)c_{r,\theta}(u)-c_{r,\eta}(f(u))f'(u)^r,
\]
so its vanishing gives \eqref{eq:raw-compat-r}.  The coefficient of $v^2$ is
\[
 f''(u)+f'(u)c_{2,\theta}(u)
 -c_{2,\eta}(f(u))f'(u)^2,
\]
and its vanishing gives \eqref{eq:raw-compat-2}.  Whenever both quadratic
coefficients of the declared laws are identically zero, including every
$p\le1$ family, the last identity is $f''(u)=0$.  Since $u$ was arbitrary, all
identities hold on $U$.
\end{proof}

\subsection{Construction and uniqueness of the normalizer}

For each law choose $u_{\xi,\star}\in D_\xi$ and define
\begin{equation}
\label{eq:normalizer-definition}
 \psi_\xi(u)
 :=\int_{u_{\xi,\star}}^u
 \exp\!\left(-\int_{u_{\xi,\star}}^s c_{2,\xi}(r)\,dr\right)ds.
\end{equation}
We verify the claims in Section~\ref{sec:connection-boundary}, including the
basepoint values $\psi_\xi(u_{\xi,\star})=0$ and
$\psi_\xi'(u_{\xi,\star})=1$. Suppress the law index, let $c_2$ be continuous
on the open interval $D$, and write
\[
 h(u):=\exp\!\left(-\int_{u_\star}^{u}c_2(r)\,dr\right).
\]
The fundamental theorem of calculus gives $h\in C^1(D)$,
$h(u)>0$, $h(u_\star)=1$, and $h'(u)=-c_2(u)h(u)$.  Defining
$\psi(u)=\int_{u_\star}^{u}h(s)ds$ therefore gives
$\psi\in C^2(D)$,
\[
 \psi'=h>0,
 \qquad
 \psi''=-c_2\psi',
\]
which proves \eqref{eq:normalizer-ode}.  Strict positivity of $\psi'$ makes
$\psi$ strictly increasing.  The inverse-function theorem gives a local
$C^2$ inverse at every point, strict monotonicity makes these local inverses
agree globally, and the image $J=\psi(D)$ is open.  Hence
$\psi:D\to J$ is a $C^2$ diffeomorphism.

For a trajectory satisfying \eqref{eq:poly-velocity-law}, set
$q=\psi(z)$.  Then
\begin{align*}
 q''
 &=\psi''(z)(z')^2+\psi'(z)z''\\
 &=\bigl(\psi''(z)+c_2(z)\psi'(z)\bigr)(z')^2
   +\sum_{r\neq2}\psi'(z)c_r(z)(z')^r\\
 &=\sum_{r\neq2}\psi'(z)^{1-r}c_r(z)(q')^r,
\end{align*}
where $q'=\psi'(z)z'$ was used in the last line, which identifies the
transformed coefficients. Restoring the law index, the
canonical equation is
\begin{equation}
\label{eq:normalized-law}
 q''=\sum_{\substack{0\le r\le p\\r\ne2}}
 \bar c_{r,\xi}(q)(q')^r,
 \qquad q\in J_\xi.
\end{equation}
Its coefficients are, explicitly,
\begin{equation}
\label{eq:normalized-coefficients}
 \bar c_{r,\xi}(\psi_\xi(u))
 :=\psi_\xi'(u)^{1-r}c_{r,\xi}(u)\quad(0\le r\le p,\ r\ne2),
 \qquad \bar c_{2,\xi}:=0;
\end{equation}
$\bar c_{r,\xi}(q)$ multiplies $(q')^r$ and need not be polynomial in $q$.

Finally, let $\phi$ be any other $C^2$ solution of
$\phi''+c_2\phi'=0$ with $\phi'$ nowhere zero.  Because both derivatives solve
the first-order equation $y'=-c_2y$,
\[
 \left(\frac{\phi'}{\psi'}\right)'
 =\frac{\phi''\psi'-\phi'\psi''}{(\psi')^2}=0.
\]
Thus $\phi'=\lambda\psi'$ for a constant $\lambda\neq0$, and integration
gives $\phi=\lambda\psi+\tau$.  This proves uniqueness up to affine
post-composition, including normalizers defined using different basepoints.

\subsection{Canonical affine compatibility}

Two canonical laws are affine-compatible on $\psi_\theta(U)$ if some
$A(q)=\lambda q+\tau$, $\lambda\ne0$, maps $\psi_\theta(U)$ into $J_\eta$
and satisfies, for every $0\le r\le p$, $r\ne2$,
\begin{equation}
\label{eq:normalized-affine-orbit}
 \bar c_{r,\eta}(A(q))\lambda^r
 =\lambda\bar c_{r,\theta}(q),
 \qquad q\in\psi_\theta(U).
\end{equation}

\begin{theorem}[Canonical connection boundary]
\label{thm:connection-slice}
Let the $\theta$- and $\eta$-laws be two scalar polynomial-velocity laws of the
same declared degree $p$ on open intervals, with missing channels zero-padded,
continuous coefficient functions, and the normalizers of
\eqref{eq:normalizer-ode} under the stated basepoint convention.
Let $U\subset D_\theta$ be a nonempty open interval.
The following are equivalent.
\begin{enumerate}[leftmargin=1.6em,itemsep=0.2em]
\item There is a non-collapsed $f\in C^2(U)$ with $f(U)\subset D_\eta$
satisfying all raw compatibility identities
\eqref{eq:raw-compat-r}--\eqref{eq:raw-compat-2}.
\item There is an affine map $A(q)=\lambda q+\tau$, $\lambda\neq0$, with
$A(\psi_\theta(U))\subset J_\eta$ satisfying the canonical coefficient identities
\eqref{eq:normalized-affine-orbit} on $\psi_\theta(U)$.
\end{enumerate}
The witnesses correspond bijectively through
\begin{equation}
\label{eq:raw-canonical-witness}
 A=\psi_\eta\circ f\circ\psi_\theta^{-1},
 \qquad
 f=\psi_\eta^{-1}\circ A\circ\psi_\theta.
\end{equation}
The choice of basepoint, origin, scale, or orientation for either normalizer
changes $A$ only by invertible affine pre- or post-composition.
\end{theorem}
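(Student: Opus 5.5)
The plan is to pass between the raw and canonical pictures using the conjugation \eqref{eq:raw-canonical-witness}. Each normalizer is a $C^2$ diffeomorphism $\psi_\xi:D_\xi\to J_\xi$ with $\psi_\xi'>0$ and $\psi_\xi''=-c_{2,\xi}\psi_\xi'$, as established after \eqref{eq:normalizer-definition}. So for any $C^2$ map $f$ on $U$ with $f(U)\subset D_\eta$, the map $g:=\psi_\eta\circ f\circ\psi_\theta^{-1}$ is $C^2$ on $\psi_\theta(U)$. Conversely, every $C^2$ map $g$ with $g(\psi_\theta(U))\subset J_\eta$ gives $f=\psi_\eta^{-1}\circ g\circ\psi_\theta$. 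This correspondence is a bijection, and $f$ is non-collapsed iff $g$ is, since $f'=\psi_\eta'(f)^{-1}\,g'(\psi_\theta)\,\psi_\theta'$ with positive outer factors. The core step is to rewrite the raw identities \eqref{eq:raw-compat-r}--\eqref{eq:raw-compat-2} as conditions on $g$. I will show they are equivalent to $g''=0$ together with the canonical identities for $g$ in place of $A$. That yields both directions and the witness bijection at once.

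\emph{Quadratic channel.} Set $q=\psi_\theta(u)$ and write $\psi_\eta(f(u))=g(\psi_\theta(u))$. Differentiating once gives
\[
\psi_\eta'(f)f'=g'(q)\psi_\theta'.
\]
Differentiating again gives
\[
\psi_\eta''(f)f'^2+\psi_\eta'(f)f''=g''(q)\psi_\theta'^2+g'(q)\psi_\theta''.
\]
Now substitute $\psi''=-c_2\psi'$ on both sides and use the first relation to replace $g'(q)\psi_\theta'$ by $\psi_\eta'(f)f'$. The result is
\[
\psi_\eta'(f)\bigl(f''+f'c_{2,\theta}-c_{2,\eta}(f)f'^2\bigr)=g''(q)\,\psi_\theta'^2.
\]
Since $\psi_\eta'>0$ and $\psi_\theta'>0$, identity \eqref{eq:raw-compat-2} holds on $U$ iff $g''=0$ on the interval $\psi_\theta(U)$. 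This holds iff $g=A$ is affine, and non-collapse forces $\lambda\ne0$.

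\emph{Channels $r\ne2$.} Here I use \eqref{eq:normalized-coefficients}, which states $c_{r,\xi}=\psi_\xi'^{\,r-1}\bar c_{r,\xi}\circ\psi_\xi$. Substituting into \eqref{eq:raw-compat-r} and using $f'=g'\psi_\theta'/\psi_\eta'(f)$, the raw identity becomes
\[
\bar c_{r,\eta}(g(q))\,g'(q)^r\,\psi_\theta'^{\,r}\,\psi_\eta'(f)^{-1}=g'(q)\,\psi_\theta'^{\,r}\,\bar c_{r,\theta}(q)\,\psi_\eta'(f)^{-1}.
\]
After cancelling the positive factors this reads $\bar c_{r,\eta}(g(q))g'(q)^r=g'(q)\bar c_{r,\theta}(q)$. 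When $g=A$ has $g'=\lambda$, this is exactly \eqref{eq:normalized-affine-orbit}. The domain inclusions $f(U)\subset D_\eta$ and $A(\psi_\theta(U))\subset J_\eta$ are equivalent because $\psi_\eta$ is a bijection onto $J_\eta$.

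\emph{Assembling the equivalence.} For (1)$\Rightarrow$(2), the quadratic channel makes $g$ affine and the $r\ne2$ channels give the canonical identities. For (2)$\Rightarrow$(1), define $f=\psi_\eta^{-1}\circ A\circ\psi_\theta$ and reverse every step, each of which was an equivalence.

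\emph{Basepoint changes.} The uniqueness statement proved for normalizers gives $\tilde\psi_\xi=\alpha_\xi\psi_\xi+\beta_\xi$ for constants $\alpha_\xi\ne0$. Then $\tilde A=(\alpha_\eta\cdot+\beta_\eta)\circ A\circ(\alpha_\theta\cdot+\beta_\theta)^{-1}$ is again affine, which is the claimed pre-/post-composition.

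\emph{Expected obstacle.} The only real work is the second-derivative computation for the quadratic channel: it must cancel into a single factor of $g''$ times positive terms. I will do that step carefully, first with the basepoint-normalized $\psi$ and then using the affine invariance just described. The remaining steps are bookkeeping of powers of $\psi'$.
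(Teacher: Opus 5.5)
Your proposal is correct and follows essentially the paper's proof. You conjugate by the normalizers to get $g=\psi_\eta\circ f\circ\psi_\theta^{-1}$, and you show that the quadratic raw identity is equivalent to $g''=0$ through the same identity $\psi_\eta'(f)\bigl(f''+f'c_{2,\theta}-c_{2,\eta}(f)f'^2\bigr)=g''\psi_\theta'^2$; the paper obtains it by differentiating $g'$ in $x$, while you differentiate the implicit relation in $u$. You then transfer the $r\neq2$ channels through \eqref{eq:normalized-coefficients} and handle normalizer changes by affine pre- or post-composition, exactly as the paper does, including the reversed computation it uses for the converse.
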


\begin{proof}[Proof of Theorem~\ref{thm:connection-slice}]
Assume statement~1 and define
$g=\psi_\eta\circ f\circ\psi_\theta^{-1}$ on
$\psi_\theta(U)$.  Put $x=\psi_\theta(u)$.  A first differentiation gives
\begin{equation}
\label{eq:g-prime-proof}
 g'(x)=\frac{\psi_\eta'(f(u))f'(u)}{\psi_\theta'(u)}.
\end{equation}
Since $d/dx=(1/\psi_\theta'(u))d/du$, differentiating once more yields
\begin{align}
g''(x)
&=\frac{1}{\psi_\theta'(u)}
  \frac{d}{du}\left(
  \frac{\psi_\eta'(f(u))f'(u)}{\psi_\theta'(u)}
  \right)\nonumber\\
&=\frac{\psi_\eta'(f(u))}{\psi_\theta'(u)^2}
 \left[f''(u)+f'(u)c_{2,\theta}(u)
 -c_{2,\eta}(f(u))f'(u)^2\right].
\label{eq:g-second-proof}
\end{align}
To obtain the last line, we used
$\psi_\theta''=-c_{2,\theta}\psi_\theta'$ and
$\psi_\eta''=-c_{2,\eta}\psi_\eta'$.  The bracket vanishes by
\eqref{eq:raw-compat-2}, so $g''=0$.  Therefore
$g(x)=\lambda x+\tau$ on the interval $\psi_\theta(U)$.  The factors
$\psi_\theta'$ and $\psi_\eta'$ in \eqref{eq:g-prime-proof} are strictly
positive.  Since $f'\not\equiv0$, $g'\not\equiv0$; because $g'=\lambda$ is
constant, $\lambda\neq0$.  Equation~\eqref{eq:g-prime-proof} then also gives
\[
 f'(u)=\lambda
 \frac{\psi_\theta'(u)}{\psi_\eta'(f(u))}\neq0
 \quad\text{for every }u\in U,
\]
which proves the automatic upgrade from the non-collapse premise to a local
diffeomorphism.

It remains to transform the nonquadratic coefficients.  For $r\neq2$,
using \eqref{eq:normalized-coefficients},
\eqref{eq:g-prime-proof}, and \eqref{eq:raw-compat-r}, we obtain
\begin{align*}
 \bar c_{r,\eta}(g(x))g'(x)^r
 &=\psi_\eta'(f(u))^{1-r}c_{r,\eta}(f(u))
   \left(\frac{\psi_\eta'(f(u))f'(u)}
   {\psi_\theta'(u)}\right)^r\\
 &=\frac{\psi_\eta'(f(u))}{\psi_\theta'(u)^r}
   c_{r,\eta}(f(u))f'(u)^r\\
 &=\frac{\psi_\eta'(f(u))f'(u)}{\psi_\theta'(u)^r}
   c_{r,\theta}(u)\\
 &=g'(x)\psi_\theta'(u)^{1-r}c_{r,\theta}(u)
 =g'(x)\bar c_{r,\theta}(x).
\end{align*}
Substituting $g=A$ and $g'=\lambda$ gives
\eqref{eq:normalized-affine-orbit}, proving statement~2.

Conversely, assume statement~2 and define
$f=\psi_\eta^{-1}\circ A\circ\psi_\theta$.  Each factor is a $C^2$
diffeomorphism on the relevant interval and $A'$ is the nonzero constant
$\lambda$, so $f$ is a $C^2$ diffeomorphism from $U$ onto its image.  Reversing
the coefficient calculation in the preceding paragraph gives
\eqref{eq:raw-compat-r}.  Moreover
$\psi_\eta\circ f\circ\psi_\theta^{-1}=A$ has second derivative zero.
Applying the identity \eqref{eq:g-second-proof} and using positivity of its
prefactor gives \eqref{eq:raw-compat-2}.  Thus statement~1 holds.

The two constructions in \eqref{eq:raw-canonical-witness} are inverse because
$\psi_\theta$ and $\psi_\eta$ are bijective.  If the normalizers are replaced
by $\widetilde\psi_\theta=B_\theta\circ\psi_\theta$ and
$\widetilde\psi_\eta=B_\eta\circ\psi_\eta$, where $B_\theta,B_\eta$ are
invertible affine maps by the uniqueness result above, then the canonical
witness becomes $B_\eta\circ A\circ B_\theta^{-1}$, again invertible affine.
Existence of a canonical affine witness, and hence the canonical
family-compatible outer set, is unchanged.
\end{proof}

\begin{corollary}[Constant linear damping is state-gauge invariant]
\label{cor:linear-damping-invariant}
Under Theorem~\ref{thm:connection-slice}, interpret any absent linear channel
as $c_{1,\xi}\equiv0$. Then
\begin{equation}
\label{eq:linear-damping-pullback}
 c_{1,\eta}(f(u))=c_{1,\theta}(u),\qquad u\in U.
\end{equation}
In particular, if both laws use
$c_{1,\xi}\equiv-\delta_\xi$, then $\delta_\eta=\delta_\theta$, even when the
raw state map is nonlinear and $c_2$ is nonzero.
\end{corollary}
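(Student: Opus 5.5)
The plan is to read the statement off the raw compatibility identities, applied to the channel $r=1$. Under Theorem~\ref{thm:connection-slice}, statement~1 supplies a non-collapsed $f\in C^2(U)$ with $f(U)\subset D_\eta$. This $f$ satisfies \eqref{eq:raw-compat-r} for every $r\neq2$, and in particular for $r=1$:
\[
 c_{1,\eta}(f(u))\,f'(u)=f'(u)\,c_{1,\theta}(u),\qquad u\in U.
\]
If the linear channel is absent in either law, zero-padding sets that coefficient identically to zero, so the same identity still holds and nothing further is needed.

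The only obstacle is cancelling the factor $f'(u)$. Non-collapse gives only $f'\not\equiv0$, so we need $f'(u)\neq0$ at every point. The proof of Theorem~\ref{thm:connection-slice} provides exactly this upgrade. From \eqref{eq:g-prime-proof} and the affine form $g=A$ with $\lambda\ne0$,
\[
 f'(u)=\lambda\,\psi_\theta'(u)/\psi_\eta'(f(u))\neq0,
\]
because both normalizer derivatives are strictly positive. I would cite this step explicitly. Alternatively, if one starts from statement~2, the witness $f=\psi_\eta^{-1}\circ A\circ\psi_\theta$ is a composition of diffeomorphisms, so $f'\neq0$ directly. Dividing by $f'(u)$ then gives \eqref{eq:linear-damping-pullback} on all of $U$.

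For the special case, suppose both laws use $c_{1,\xi}\equiv-\delta_\xi$. Then \eqref{eq:linear-damping-pullback} reads $-\delta_\eta=-\delta_\theta$ at any $u\in U$, which is nonempty. Hence $\delta_\eta=\delta_\theta$. Nowhere did this use $f''=0$ or $c_2\equiv0$, so the conclusion holds for nonlinear $f$ and nonzero quadratic channels.

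Finally, I would note that the identity is simply the $r=1$ case of \eqref{eq:normalized-affine-orbit}: there $\lambda^1$ cancels $\lambda$, and $\psi'^{\,0}=1$ makes $\bar c_1=c_1$. This gives a consistency check in canonical coordinates.
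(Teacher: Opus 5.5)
Your proposal is correct and matches the paper's proof: take $r=1$ in \eqref{eq:raw-compat-r}, use the upgrade $f'(u)=\lambda\,\psi_\theta'(u)/\psi_\eta'(f(u))\neq0$ from the proof of Theorem~\ref{thm:connection-slice} to divide by $f'(u)$, and treat an absent linear channel (e.g.\ $p=0$) as the trivial zero-padded identity. Your final check via the $r=1$ case of \eqref{eq:normalized-affine-orbit} is a valid consistency remark but is not needed for the argument.
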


\begin{proof}[Proof of Corollary~\ref{cor:linear-damping-invariant}]
If $p=0$, both zero-padded linear coefficients vanish, so
\eqref{eq:linear-damping-pullback} and the constant-damping conclusion are
immediate. If $p\ge1$, taking $r=1$ in \eqref{eq:raw-compat-r} gives
$c_{1,\eta}(f(u))f'(u)=f'(u)c_{1,\theta}(u)$.  Theorem~\ref{thm:connection-slice}
upgrades the non-collapse premise to $f'(u)\neq0$ at every $u\in U$, so division by
$f'(u)$ proves \eqref{eq:linear-damping-pullback}.  Substituting
$c_{1,\xi}\equiv-\delta_\xi$ gives the constant-damping claim.
\end{proof}

\begin{proof}[Proof of Theorem~\ref{thm:canonical-report-orbit}]
Let $(E_\phi,f,\eta)$ be an admissible report as in
Section~\ref{sec:setup}, with the physical-velocity coverage in
Theorem~\ref{thm:canonical-report-orbit}. The common map satisfies
Definition~\ref{def:state-consistency} along the continuous trajectories
on $U$, so the chain rule applies at the covering points.
Proposition~\ref{prop:poly-coefficient-matching} gives
all raw compatibility identities.  Theorem~\ref{thm:connection-slice} then
gives an affine $A$ satisfying \eqref{eq:normalized-affine-orbit}; by the
definition \eqref{eq:canonical-parameter-orbit},
$\eta\in\mathcal O_U^{\mathrm{can}}(\theta)$.

At a covered frame, state consistency gives
$E_\phi(x^{(m)}(t))=f(z^{(m)}(t))$.  Post-composing by $\psi_\eta$ and using
$\psi_\eta\circ f=A\circ\psi_\theta$ from
\eqref{eq:raw-canonical-witness} yields
\[
 \psi_\eta\!\left(\hat z^{(m)}(t)\right)
 =\psi_\eta(f(z^{(m)}(t)))
 =A(\psi_\theta(z^{(m)}(t))),
\]
which is \eqref{eq:canonical-encoder-affine}.  The same $f$, and therefore the
same $A$, is common to all clips by latent-state consistency.
\end{proof}

\subsection{Projected-pendulum example}

Consider the angular law
\begin{equation}
\label{eq:angular-pendulum}
 q''+\delta q'+\omega^2\sin q=0.
\end{equation}

\begin{corollary}[Projected pendulum: apparent inertia and exact normalizer]
\label{cor:projected-pendulum}
Let $q$ satisfy \eqref{eq:angular-pendulum} and set $y=\sin q$ on a branch
contained in $(-\pi/2,\pi/2)$. Then $y$ satisfies
\begin{equation}
\label{eq:projected-pendulum}
 y''+\delta y'+\omega^2y\sqrt{1-y^2}
 +\frac{y}{1-y^2}(y')^2=0.
\end{equation}
In the right-hand-side convention, $c_2(y)=-y/(1-y^2)$ and the normalizer is
$\psi(y)=\arcsin y$ up to an affine transformation.  The endpoints $y=\pm1$ lie outside the regular chart.
\end{corollary}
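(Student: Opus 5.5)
We prove Corollary~\ref{cor:projected-pendulum} by a direct chain-rule computation, then read off $c_2$ and apply the normalizer construction of \eqref{eq:normalizer-definition} together with its uniqueness up to affine post-composition.

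\textbf{Step 1: the projected law.} On a branch with $q\in(-\pi/2,\pi/2)$ we have $\cos q=\sqrt{1-y^2}>0$. Differentiating $y=\sin q$ gives $y'=\cos q\,q'$ and
\[
 y''=\cos q\,q''-\sin q\,(q')^2 .
\]
Substituting $q''=-\delta q'-\omega^2\sin q$ from \eqref{eq:angular-pendulum} yields
\[
 y''=-\delta y'-\omega^2 y\sqrt{1-y^2}-y\,(q')^2 .
\]
Finally, replace $(q')^2=(y')^2/(1-y^2)$. This is \eqref{eq:projected-pendulum}. Division by $\cos q$ is legitimate precisely because the branch avoids $q=\pm\pi/2$, which is why the endpoints $y=\pm1$ are excluded.

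\textbf{Step 2: identify the channels.} In the right-hand-side convention of \eqref{eq:poly-velocity-law}, the law has degree $p=2$ with
\[
 c_0(y)=-\omega^2y\sqrt{1-y^2},\qquad c_1=-\delta,\qquad c_2(y)=-\frac{y}{1-y^2},
\]
all continuous on $D=(-1,1)$.

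\textbf{Step 3: the normalizer.} Take basepoint $u_\star=0$ in \eqref{eq:normalizer-definition}. Then
\[
 \int_0^s c_2(r)\,dr=\tfrac12\log(1-s^2),
\]
so $h(s)=\exp\!\bigl(-\tfrac12\log(1-s^2)\bigr)=(1-s^2)^{-1/2}$. Hence $\psi(y)=\int_0^y(1-s^2)^{-1/2}ds=\arcsin y$.

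As a check, $\psi''+c_2\psi'=0$ holds directly: $\psi''=y(1-y^2)^{-3/2}$, while $c_2\psi'=-y(1-y^2)^{-3/2}$.

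For any other basepoint, or any other nowhere-vanishing-derivative solution of \eqref{eq:normalizer-ode}, the uniqueness argument following \eqref{eq:normalizer-definition} ($\phi'/\psi'$ constant) shows the normalizer differs only by an invertible affine post-composition. That proves ``up to an affine transformation.''

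\textbf{Step 4: the chart boundary.} Since $\psi'(y)=(1-y^2)^{-1/2}\to\infty$ as $y\to\pm1$, and $c_2$ itself blows up there, the regular chart is the open interval $(-1,1)$.

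There is no genuine obstacle here. The only care needed is the sign and branch bookkeeping in Step 1, ensuring $\cos q=+\sqrt{1-y^2}$ on the stated branch.
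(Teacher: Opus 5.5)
Your proposal is correct and follows the paper's proof essentially step for step. Both use the same chain-rule computation with $\cos q=\sqrt{1-y^2}>0$ and $q'=y'/\sqrt{1-y^2}$, read off $c_2(y)=-y/(1-y^2)$, and integrate $\psi'=\exp(-\int c_2)=(1-y^2)^{-1/2}$ to obtain $\arcsin y$ up to affine post-composition; your explicit basepoint $0$ and direct check of $\psi''+c_2\psi'=0$ are just a more concrete form of the paper's ``up to a nonzero multiplicative constant'' step.
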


\begin{proof}[Proof of Corollary~\ref{cor:projected-pendulum}]
On a branch contained in $(-\pi/2,\pi/2)$, $y=\sin q$ has
$\cos q=\sqrt{1-y^2}>0$.  Differentiation gives
\[
 y'=\cos q\,q',
 \qquad
 y''=\cos q\,q''-\sin q\,(q')^2.
\]
Substitute
$q''=-\delta q'-\omega^2\sin q$,
$q'=y'/\sqrt{1-y^2}$, and $\sin q=y$:
\[
 y''=-\delta y'-\omega^2y\sqrt{1-y^2}
 -\frac{y}{1-y^2}(y')^2.
\]
Moving all terms to the left proves \eqref{eq:projected-pendulum}.  In the
right-hand-side convention, the quadratic coefficient is
$c_2(y)=-y/(1-y^2)$.  Therefore
\[
 \psi'(y)=
 \exp\!\left(-\int c_2(y)dy\right)
 =\frac{1}{\sqrt{1-y^2}}
\]
up to a nonzero multiplicative constant, and integration gives
$\psi(y)=\arcsin y$ up to an affine post-composition.  At $y=\pm1$ this
derivative diverges and the forward map has derivative
$d(\sin q)/dq=0$, so neither the coefficient-domain nor local
diffeomorphism hypotheses hold there.
\end{proof}

\section{Proofs for the family-specific conclusions}
\label{app:regime-proofs}

\subsection{Proofs of Table~\ref{tab:regimes}}
\label{app:representative-reductions}

\begin{proof}
Under the coverage and family assumptions of Table~\ref{tab:regimes},
Theorem~\ref{thm:semilinear-affine-collapse} supplies an affine witness
$f(u)=\lambda u+\tau$ on $U$, with $\lambda\neq0$, satisfying
\eqref{eq:compat-a-main}. Tildes denote reported parameters, with the same
family conventions as the physical parameters.

\noindent\textbf{LTI.}
The orbit equations give $\tilde\delta=\delta$ and
$\tilde\alpha(\lambda u+\tau)=\lambda\alpha u$ on $U$.
Matching coefficients yields $\tilde\alpha=\alpha$ and $\alpha\tau=0$.
Conversely, every $\lambda\neq0$ and $\tau$ satisfying that restriction is a
witness, including arbitrary translations when $\alpha=0$.

\medskip\noindent\textbf{Damped pendulum.}
With $\omega,\tilde\omega>0$, the orbit equations give
\begin{equation}
\tilde\delta=\delta,
\qquad
\tilde\omega^2\sin(\lambda u+\tau)=\lambda\omega^2\sin u
\qquad \forall u\in U.
\end{equation}
Let $\kappa=\lambda\omega^2/\tilde\omega^2$. The identity $\sin(\lambda u+\tau)=\kappa\sin u$ is first obtained only on the covered nonempty interval $U$; by analyticity of sine, it extends to all of $\R$. Differentiating twice gives $-\lambda^2\sin(\lambda u+\tau)=-\kappa\sin u$. Combining with the original identity yields $(\lambda^2-1)\kappa\sin u=0$ for all $u$, so $\lambda=\pm1$. If $\lambda=1$, then $\sin(u+\tau)=\kappa\sin u$ forces $\sin\tau=0$ and $\cos\tau=\kappa>0$, hence $\tau=2\pi k$ and $\kappa=1$. If $\lambda=-1$, the same expansion gives $\tau=2\pi k$ and $\kappa=-1$. In both cases $\tilde\omega=\omega$, and direct substitution verifies every witness $u\mapsto\sigma u+2\pi k$ with $\sigma\in\{\pm1\}$ and $k\in\Z$.

\medskip\noindent\textbf{Normalized Van der Pol.}
Because both physical latent and encoder-induced laws are restricted to the normalized Van der Pol family, the orbit equations are
\begin{equation}
\tilde\mu((\lambda u+\tau)^2-1)=\mu(u^2-1),
\qquad
\lambda u+\tau=\lambda u
\qquad \forall u\in U.
\end{equation}
These polynomial identities are first obtained on $U$ and then extend to all $u\in\R$. The second equation gives $\tau=0$. Matching coefficients in $\tilde\mu(\lambda^2u^2-1)=\mu(u^2-1)$ gives $\tilde\mu=\mu$ and then $\lambda^2=1$; direct substitution verifies both signs.

\medskip\noindent\textbf{Cubic Duffing.}
The orbit equations give
\[
\tilde\delta=\delta,\qquad
\tilde\alpha(\lambda u+\tau)+\tilde\beta(\lambda u+\tau)^3
=\lambda(\alpha u+\beta u^3)
\qquad \forall u\in U.
\]
The polynomial identity extends from $U$ to all $u\in\R$. Its $u^3$ and
$u^2$ coefficients give
$\tilde\beta\lambda^3=\lambda\beta$ and
$3\tilde\beta\lambda^2\tau=0$. Since $\lambda\beta\neq0$, we obtain
$\tau=0$, $\tilde\beta=\beta/\lambda^2$, and then
$\tilde\alpha=\alpha$. Direct substitution verifies every scale $\lambda\neq0$.

\medskip\noindent\textbf{Quintic Duffing.}
The orbit equations give
\begin{equation}
\tilde\delta=\delta,
\qquad
\tilde\alpha(\lambda u+\tau)+\tilde\beta(\lambda u+\tau)^3
+\tilde\gamma(\lambda u+\tau)^5
=\lambda(\alpha u+\beta u^3+\gamma u^5)
\qquad \forall u\in U.
\end{equation}
The polynomial identity is first obtained on the covered nonempty interval $U$ and therefore holds for all $u\in\R$. Its $u^5$ and $u^4$ coefficients give $\tilde\gamma\lambda^5=\lambda\gamma$ and $5\tilde\gamma\lambda^4\tau=0$. Since $\lambda\gamma\neq0$, we have $\tilde\gamma\neq0$ and hence $\tau=0$. Matching the $u^3$ and $u$ coefficients then yields $\tilde\beta=\beta/\lambda^2$ and $\tilde\alpha=\alpha$; the $u^5$ identity also gives $\tilde\gamma=\gamma/\lambda^4$. Direct substitution proves the converse, so this is the exact quintic-Duffing orbit.

The Duffing amplitude-anchor statements follow from the same scale action
$u\mapsto\lambda u$. In the quintic case,
$\tilde\beta=\beta/\lambda^2$ and $\tilde\gamma=\gamma/\lambda^4$, while
$\delta$, $\alpha$, $\operatorname{sign}(\beta)$, and $\gamma/\beta^2$ are
invariant. A signed calibration of a nonzero physical latent value fixes
$\lambda$. An unsigned amplitude calibration fixes only $|\lambda|$, but this
is already sufficient because both nonzero-weight coefficients transform
through even powers of $\lambda$. The same argument for the cubic family
uses only $\tilde\beta=\beta/\lambda^2$. Each anchor must pair a nonzero
physical state in $U$ with its encoder value; an anchor outside $U$ needs a
separately justified extension of the affine witness.
\end{proof}

\subsection{Failure without a top gap: Helmholtz}
\label{app:helmholtz-branches}
A missing degree away from the degree immediately below the top need not lock
translations. For
$z''+\delta z'+\alpha z+\beta z^2=0$ on $\R$, with
$\alpha\beta\neq0$, both coefficient signs allowed, and no restriction on
nonzero scales, the damping identity gives $\tilde\delta=\delta$.
Coefficient matching in
$\tilde\alpha(\lambda u+\tau)+\tilde\beta(\lambda u+\tau)^2
=\lambda(\alpha u+\beta u^2)$ gives
$\tilde\beta=\beta/\lambda$,
$\tilde\alpha+2\beta\tau/\lambda=\alpha$, and
$\tau(\tilde\alpha+\beta\tau/\lambda)=0$. Solving these identities yields
\[
(\tau,\tilde\alpha,\tilde\beta)=(0,\alpha,\beta/\lambda)
\quad\text{or}\quad
(\tau,\tilde\alpha,\tilde\beta)
=\left(\frac{\alpha\lambda}{\beta},-\alpha,\frac{\beta}{\lambda}\right).
\]
Thus $f(u)=u+1$ exchanges $z''+z+z^2=0$ and $y''-y+y^2=0$.
Before calibration, the translation branch flips
$\operatorname{sign}(\alpha)$, and $\delta$ and $\alpha^2$ form a complete
set of parameter-orbit invariants for this unrestricted nonzero stratum:
the two branches connect either sign of $\alpha$, and the nonzero scale
connects any two nonzero values of $\beta$ with matching $\delta,\alpha^2$.
If $0\in U$, an origin anchor $f(0)=0$ removes the translated branch and
makes $\alpha$ itself invariant, but leaves the scale $\lambda$ free, with
$\tilde\beta=\beta/\lambda$. After that origin calibration, one additional
signed state anchor at $u_0\in U\setminus\{0\}$ fixes $\lambda$ and hence
$\beta$. An unsigned anchor fixes only $|\lambda|$ and leaves the admissible
reflection branch unresolved. Anchors outside $U$ require a separately
justified extension of the affine witness to their locations.

\section{Detailed Limitations}
\label{app:multidim-proofs}

This appendix integrates the scope of scalar coverage with the limitations of
parameter recovery. We first explain the obstruction to directly extending
our coordinate argument to multiple dimensions, then discuss what the scalar
setting does cover, and finally distinguish these structural results from
finite-video estimation.

\subsection{Multidimensional scope and the coverage obstruction}

Our results concern a scalar configuration variable $z$; the associated
first-order phase-space variable $(z,z')$ is already two-dimensional.
The restriction is therefore to one physical coordinate, not to one
phase-space component or one unknown parameter. The theory does not directly
cover coupled configuration vectors $\bq\in\R^d$ with $d>1$.

The difficulty is not only additional algebra. In the ideal continuous-time
setting, a nonconstant scalar trajectory can traverse a nonempty open
coordinate interval. By contrast, the compact images of finitely many smooth
trajectories in a higher-dimensional configuration space have empty interior.
They can leave an open region unvisited, where a coordinate transformation
can be changed without affecting its values or derivatives on the observed
curves. The following result makes this limitation precise.

\begin{proposition}[Finite passive-curve obstruction]
\label{prop:multidim-obstruction}
Let $d>1$, let $U\subset\R^d$ be a nonempty open set, and let $C\subset U$ be the compact image of finitely many physical latent trajectories with empty interior. Then there exists a non-affine $C^\infty$ diffeomorphism $h_\varepsilon:U\to h_\varepsilon(U)$ such that $h_\varepsilon(\bq)=\bq$, $Dh_\varepsilon(\bq)=\Id$, and $D^2h_\varepsilon(\bq)=0$ for all $\bq\in C$. Thus realized positions, velocities, accelerations, and pointwise residuals along the clips are unchanged, while the coordinate map is non-affine away from the data. This is not a family-preserving parameter transformation; it only explains why coordinate-level multi-dimensional recovery needs additional structure.
\end{proposition}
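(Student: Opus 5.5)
We construct $h_\varepsilon$ as a compactly supported smooth perturbation of the identity whose support avoids $C$. Because $C$ is compact with empty interior and $U$ is open, $U\setminus C$ is a nonempty open set; we pick a point $\bq_0\in U\setminus C$ and a radius $r>0$ with $\overline{B(\bq_0,2r)}\subset U\setminus C$. This is possible since $C$ is closed and $\bq_0\notin C$, so its distance to $C$ is positive, and $U$ is open. We take a bump function $\chi\in C_c^\infty(B(\bq_0,2r))$ with $\chi\equiv1$ on $B(\bq_0,r)$, a fixed non-affine smooth vector field, for instance $w(\bq)=|\bq-\bq_0|^2 e_1$, and we set
\[
 h_\varepsilon(\bq)=\bq+\varepsilon\,\chi(\bq)\,w(\bq).
\]
Since the support of $\chi$ is disjoint from an open neighbourhood of $C$, the perturbation vanishes identically near every point of $C$. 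So $h_\varepsilon=\mathrm{id}$ on a neighbourhood of $C$, and the identities $h_\varepsilon=\bq$, $Dh_\varepsilon=\Id$, $D^2h_\varepsilon=0$ on $C$ follow at once.

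Next we show $h_\varepsilon$ is a diffeomorphism onto its image for small $\varepsilon$. The map $\chi w$ is smooth with compact support, so $M:=\sup\|D(\chi w)\|<\infty$. For $\varepsilon<1/(2M)$ the Jacobian $Dh_\varepsilon=\Id+\varepsilon D(\chi w)$ is invertible everywhere. The map is also injective on $U$: if $h_\varepsilon(\bq)=h_\varepsilon(\bq')$, then $|\bq-\bq'|=\varepsilon|\chi w(\bq)-\chi w(\bq')|$. This needs a Lipschitz bound along segments, which can fail if $U$ is not convex. The simplest fix is to extend $\chi w$ by zero to all of $\R^d$. Then $\Phi_\varepsilon:=\mathrm{id}+\varepsilon\chi w$ is a global map on $\R^d$ that is a contraction perturbation of the identity, so $|\Phi_\varepsilon(\bq)-\Phi_\varepsilon(\bq')|\ge(1-\varepsilon M)|\bq-\bq'|$. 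This makes $\Phi_\varepsilon$ injective, and by the inverse function theorem it is a global $C^\infty$ diffeomorphism of $\R^d$. Restricting to $U$ gives $h_\varepsilon:U\to h_\varepsilon(U)$, a diffeomorphism onto an open set.

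Non-affineness: on $B(\bq_0,r)$ we have $h_\varepsilon(\bq)=\bq+\varepsilon|\bq-\bq_0|^2e_1$, whose second derivative $2\varepsilon\,e_1\otimes\Id\neq0$, while an affine map has vanishing second derivative everywhere. Finally, along each clip $\bq(t)\in C$ the chain rule gives $\frac{d}{dt}h_\varepsilon(\bq)=Dh_\varepsilon\bq'=\bq'$ and $\frac{d^2}{dt^2}h_\varepsilon(\bq)=D^2h_\varepsilon[\bq',\bq']+Dh_\varepsilon\bq''=\bq''$. So positions, velocities and accelerations, and hence any pointwise residual built from them, are unchanged. The only delicate step is global injectivity on a possibly nonconvex $U$, and the extension-to-$\R^d$ argument above handles it.
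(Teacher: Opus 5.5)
Your proposal is correct and follows essentially the paper's argument: a compactly supported smooth perturbation of the identity supported in a closed ball inside $U\setminus C$, extended by zero to $\R^d$ so that $|h_\varepsilon(\bq)-h_\varepsilon(\bq')|\ge(1-|\varepsilon|\,\mathrm{Lip})|\bq-\bq'|$ gives injectivity, with the inverse function theorem then giving a diffeomorphism onto the image. The differences are cosmetic: the paper uses a plain bump $\rho e_1$ and gets non-affineness from the perturbation being nonzero with compact support, rather than from $D^2h_\varepsilon\neq0$ on the inner ball. Your aside that $\Phi_\varepsilon$ is onto $\R^d$ needs a contraction-mapping argument, since the inverse function theorem alone only gives a diffeomorphism onto the image, but the statement never uses surjectivity.
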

\begin{proof}
Because $C$ is compact with empty interior in the open set $U$, choose a point $p\in U\setminus C$ and a radius $r>0$ such that the closed ball $\overline B(p,r)$ is contained in $U\setminus C$. Let $\rho\in C_c^\infty(B(p,r))$ be a nonzero bump function and let $e_1$ be the first coordinate vector. Define
\begin{equation}
 h_\varepsilon(q)=q+\varepsilon \rho(q)e_1.
\end{equation}
Extend $\rho$ by zero to $\R^d$ and choose
$0<|\varepsilon|\operatorname{Lip}(\rho)<1$. Then, for all $q,q'\in\R^d$,
\[
\|h_\varepsilon(q)-h_\varepsilon(q')\|
\ge \bigl(1-|\varepsilon|\operatorname{Lip}(\rho)\bigr)\|q-q'\|,
\]
so $h_\varepsilon$ is globally injective. Its Jacobian is everywhere
invertible, and the inverse-function theorem therefore makes it a smooth
diffeomorphism from $U$ onto its image. The map is non-affine because $\rho$ is
nonzero and compactly supported. Since the support of $\rho$ is disjoint from
$C$, $\rho$, $D\rho$, and $D^2\rho$ vanish on $C$. Hence
$h_\varepsilon(q)=q$, $Dh_\varepsilon(q)=I$, and $D^2h_\varepsilon(q)=0$ for all
$q\in C$.

For a realized physical latent trajectory $q(t)\in C$, the transformed trajectory $h_\varepsilon(q(t))$ has the same value, first derivative, and second derivative as $q(t)$ at every sampled time along the clip. Therefore any pointwise residual evaluated only on the realized clips is unchanged, even though the coordinate map is non-affine away from the data.
\end{proof}

The proposition concerns recovery of a coordinate map on an open region,
not parameter identification in every fixed multidimensional ODE family.
The constructed transformation is not asserted to preserve such a family
or to change its parameters. It therefore does not exclude parameter
recovery under additional restrictions. Rather, it shows why matching
passive trajectories and their derivatives alone cannot determine an
unrestricted coordinate extension. Restricted coordinate classes or richer
observations may change this boundary, but require a separate analysis.

\subsection{Why scalar dynamics remain informative}

The scalar setting is not synonymous with linear dynamics or a single
unknown coefficient. The semilinear family permits nonlinear restoring
forces and state-dependent damping, including the pendulum, cubic and
quintic Duffing, and normalized Van der Pol examples. The extensions treat
specified nonlinear velocity features and polynomial-velocity laws, including
quadratic terms induced by nonlinear observation coordinates. The restriction
limits the number of coupled coordinates, rather than reducing the analysis
to the LTI case.

Dimension also changes how passive coverage can constrain the coordinate
map. On a scalar interval $U$, sufficiently many velocities at each state
turn the chain-rule residual into coefficient identities throughout $U$.
For semilinear laws, these force $f''=0$; at the studied quadratic-velocity
boundary, the corresponding identities force an affine relation after
law-derived normalization. This supports parameter analysis across the
covered interval rather than only along isolated observed points. Merely
visiting every state in $U$ is not sufficient: the required velocity
variation must also be present. The two-slope construction in
Appendix~\ref{app:coverage-sufficiency} shows why repeated visits along the
same restricted velocity branches need not supply it.

Within this scope, the parameter conclusions are already distinct: angular
pendulum parameters can be identified despite residual coordinate
symmetries, while Duffing coefficients can retain scale dependence that
physical calibration resolves. These are different outcomes of the theory,
not failures to obtain the same conclusion for every family. The scope is
nevertheless restricted: the family is prescribed and autonomous, the
velocity dependence must satisfy the stated structural conditions, and
canonical comparison is local to regular coordinate intervals. It is not a
classification of arbitrary nonlinear ODEs or a global multi-chart result.

\subsection{From structural guarantees to empirical recovery}

Within the covered model classes, a separate gap remains between structural
identification and estimation from sampled videos. The exact results assume
a shared non-collapsed $C^2$ map and the required physical-velocity coverage.
Finite observations provide only sampled evidence about these continuous
premises. A low residual alone establishes neither the state map nor the
correctness of the declared family. The audits in
Appendix~\ref{app:operational-details} describe which evidence should be
reported without treating it as a proof of those premises.

The semilinear stability results quantify deterministic effects of residuals,
level mismatch, and design conditioning; Appendix~\ref{app:sampled-interface}
separates numerical differentiation and state-consistency errors. They do not
provide a statistical pixel-to-parameter rate. Likewise, the encoder
construction in Proposition~\ref{prop:affine-realizability} establishes
realizability under its representation assumptions, not convergence of
training to that construction. Structural uniqueness, existence of a
compatible encoder, and numerical accuracy are different claims.

Physical calibration has a similar distinction. The rank criterion in
Appendix~\ref{app:additional-consequences} supplies sufficient local
uniqueness on a regular lifted set, with disconnected branches requiring
separate checks. It does not by itself quantify the effect of measurement
error in the anchors. Extending the recovery guarantees to jointly account
for image-derived errors, motion-coverage conditioning, and physical-anchor
uncertainty would connect the structural analysis more directly to empirical
parameter accuracy.

\section{Evaluation metrics}
\label{app:metric-conventions}
The three primary metrics assess complementary aspects of recovery across the
synthetic and real-video experiments: coordinate agreement, parameter
agreement under the permitted coordinate transformation or calibration,
and agreement of the resulting dynamical function. The formulas below
define dimensionless ratios; smaller values indicate closer agreement.
In Figure~\ref{fig:nonlinear-training-pipeline}, coordinate and parameter
comparisons follow the chosen raw or canonical route; physical-unit
comparisons additionally use the stated anchors. The metrics assess empirical
accuracy within those conventions. Their relation to the conditional
identifiability results is discussed in Appendix~\ref{app:operational-details}.

\paragraph{Coordinate NRMSE $e_{\rm map}$ (\%).}
For each fit, let $c$ index test clips and $t$ the valid frames within a clip:
\begin{equation}
 e_{\rm map}=\operatorname{median}_{c}
 \frac{\operatorname{RMS}_{t}(\widetilde x_{ct}-x_{ct})}
 {\operatorname{std}_{t}(x_{ct})}.
 \label{eq:common-coordinate-error}
\end{equation}
Here $x$ is the physical state in synthetic experiments, or tracked angle,
pixel position, or inverse radius in real videos. The prediction
$\widetilde x$ uses a coordinate map fitted on the stated non-test
references and frozen for evaluation. Unrestricted affine comparisons
fit the encoding as a function of the reference and invert the fitted map;
family-constrained comparisons retain their allowed sign, period, scale,
or branch restrictions. Raw and canonical representations use their
respective fitted maps. NRMSE is computed within each clip, then summarized
by the clip median; the standard deviation uses divisor $N_c$. This is a coordinate-recovery
error: it tests whether a frozen readout of the encoder recovers the
reference physical state on new clips, rather than evaluating an integrated
trajectory rollout.

\paragraph{Parameter relative error $e_\theta$ (\%).}
For coefficients expressed in the same comparison coordinate, use
\begin{equation}
 e_{\theta,j}=\frac{|\hat\theta_j-\theta_j^{\rm ref}|}
 {|\theta_j^{\rm ref}|},\qquad
 e_\theta=\max_{j\in J}e_{\theta,j}.
 \label{eq:common-parameter-error}
\end{equation}
The frozen coordinate map determines the reference point on the parameter
orbit. For example, under $z=\lambda q$, a learned cubic coefficient is
compared with $\beta^{\rm ref}=\beta/\lambda^2$. After external calibration,
the estimate is instead compared in physical units. The six-family
experiment takes the maximum over its nonzero dynamical coefficients
$J$; coverage uses the scale-corrected mean drag coefficient, and real
videos use the calibrated scalar $L$ or $g$. Normalization reports the
individual errors $e_{\theta,j}$. Thus this metric tests parameter
agreement at the coordinate map fixed from reference pairs.

\paragraph{Dynamics relative RMSE $e_{\rm dyn}$ (\%).}
For a transformed law or channel $\widetilde F$ and its physical reference
$F$, define
\begin{equation}
 e_{\rm dyn}=\frac{\operatorname{RMS}_{\mathcal S}(\widetilde F-F)}
 {\operatorname{RMS}_{\mathcal S}(F)}.
 \label{eq:common-dynamics-error}
\end{equation}
The evaluation set $\mathcal S$ contains all held-out frames for the
matched-LTI and overhead experiments, or the stated physical-state grid
for the normalized pendulum's restoring-force and damping channels.
Samples receive equal weights within this set. This metric tests whether
the transformed law reproduces the reference acceleration or force over
the evaluation domain, rather than comparing individual coefficients or
integrated trajectory rollouts. The RMS denominator also accommodates
nonzero constant acceleration such as gravity.

\paragraph{Aggregation across fits.}
All three errors are multiplied by $100$ and reported as percentages.
They are computed per fit before reporting median [Q25, Q75] across
the stated cohort. Quartiles use linear interpolation between ordered
observations. Physical estimates use mean and sample SD (divisor $n-1$).
These summaries describe variability over the stated IC/video groups and
optimizer seeds. Each experimental protocol specifies its coordinate
constraints, reference quantities, and evaluation domain.

\raggedbottom
\section{Synthetic experiments: protocols and supporting results}
\label{app:exp-synth}
The appendix follows the main experiments. Appendix~\ref{app:exp-synth-protocol}
supports Section~\ref{sec:exp-orbit}, including the matched affine-LTI comparison.
Appendices~\ref{app:exp-synth-coverage} and~\ref{app:exp-synth-canonical}
separately support the velocity-coverage and projection experiments in Section~\ref{sec:exp-boundary}.
Each experiment follows the same sequence: observations and data,
theoretical predictions, model and training, evaluation, and results
and discussion. Additional analyses follow the main results where applicable.

\subsection{Six-family experiments}
\label{app:exp-synth-protocol}\label{app:exp-synth-orbits}

\subsubsection{Observations and data}
\label{app:exp-synth-observed}

The six-family experiment tests the coordinate and parameter relations of
Section~\ref{sec:scalar-theory}. The five families in Table~\ref{tab:regimes}
are supplemented by the quadratic/Helmholtz family, whose two affine
branches are described in Appendix~\ref{app:helmholtz-branches}.
The LTI experiment uses an affine extension with a learned equilibrium
$\hat c$, which absorbs translations while leaving $\delta$ and $\alpha$
invariant.
We simulate each prescribed law at fixed physical coefficients and render
the resulting state trajectories. The angular pendulum is shown as a moving
pendulum; the other families use spring-oscillator renderings
(Figure~\ref{fig:exp-synth-videos}). The simulated states provide evaluation
references and are not coordinate targets for encoder training.

\begin{figure}[H]
\centering
\includegraphics[width=\linewidth]{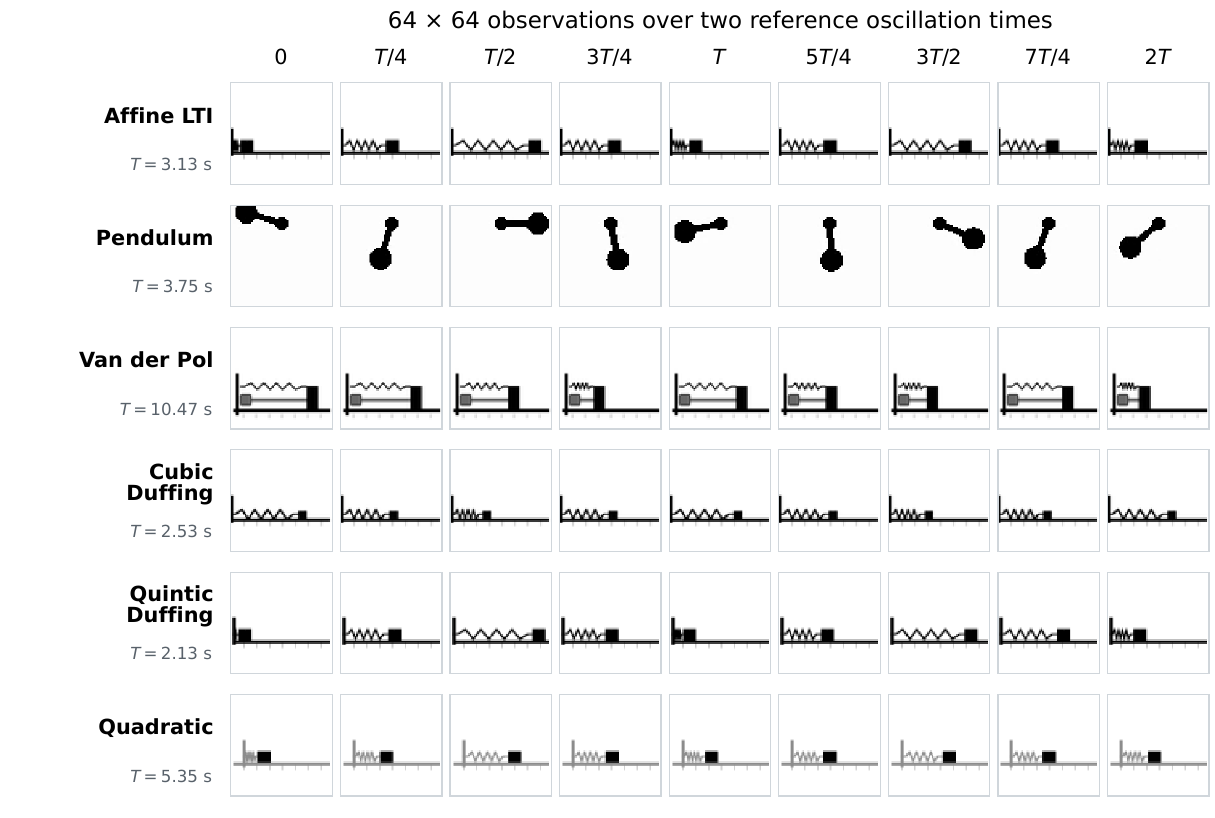}
\caption{\textbf{Temporal evolution over two reference oscillation times.}
Nine observations at $0,T/4,\ldots,2T$ show amplitude and timing changes.
Each row starts at a turning point; $T$ is the first interval between
same-type turning points, not an assumed constant period. Damped systems
ring down, whereas Van der Pol approaches a limit cycle.
All frames are $64\times64$.}
\label{fig:exp-synth-videos}
\end{figure}

\begin{table}[H]
\centering\small
\caption{\textbf{Synthetic observations and data splits.} Clip counts are per
IC collection. Each family uses ten collections $\times$ ten optimizer seeds.
Training clips supply images and timestamps for fitting the encoder and law.
Test clips supply the reported held-out errors (coordinate NRMSE
$e_{\rm map}$ (\%)). The permitted coordinate comparison is fitted using
training reference pairs and frozen on test clips.}
\label{tab:exp-synth-data}
\setlength{\tabcolsep}{4pt}
\begin{tabular}{@{}lll@{}}
\toprule
Family / experiment & Pixels / Hz / duration & Train / test\\
\midrule
Affine LTI & $64^2$ / 60 / 6 s & 16 / 8\\
Pendulum & $64^2$ / 60 / 8 s & 10 / 4\\
Van der Pol & $64^2$ / 60 / 12 s & 6 / 4\\
Cubic Duffing & $64^2$ / 60 / 8 s & 10 / 4\\
Quintic Duffing & $64^2$ / 60 / 3 s & 16 / 8\\
Quadratic / Helmholtz & $64^2$ / 60 / 8 s & 3 / 4\\
\bottomrule
\end{tabular}
\end{table}

For each ODE family, physical coefficients
(Table~\ref{tab:exp-synth-actions}) and the rendering map are fixed.
Table~\ref{tab:exp-synth-initial-states} specifies the nominal initial
position--velocity pairs $(q_0,v_0)$, with $v_0=q'(0)$; the learned
initial value is $z_0=E_\phi(x_0)$ and is not prescribed separately.
Starting from these pairs, ten data seeds generate ten trajectory collections; the different initial conditions
produce different observed video sequences. Each collection contains the
training and test clips listed in Table~\ref{tab:exp-synth-data},
with split assignments inherited from the prescribed trajectory categories.
For the pendulum, initial angles remain fixed and initial angular velocities
receive independent multiplicative perturbations in $[0.95,1.05]$.
For the other five families, initial positions and velocities are perturbed
independently by the same factors.
Each fixed video collection is then fitted with ten optimizer seeds, yielding
$10\times10=100$ fits per family. Data seeds control initial-condition
variation, while optimizer seeds control model initialization and training
sampling.

\begin{table}[H]
\centering\small
\caption{\textbf{Nominal initial states before IC perturbation.}
Pairs are $(q_0,v_0)$ in the physical coordinate. Each $\pm$ includes both
signs; in the pendulum row, angle and velocity signs vary independently.
LTI and quintic use the same IC design.}
\label{tab:exp-synth-initial-states}
\setlength{\tabcolsep}{4pt}
\renewcommand{\arraystretch}{1.12}
\begin{tabularx}{\linewidth}{@{}p{.17\linewidth}YY@{}}
\toprule
Family & Training pairs & Test pairs\\
\midrule
LTI / quintic &
$(\pm a,0)$, $a\in\{0.25,0.55,0.85,1.15\}$;
$(0,\pm b)$, $b\in\{0.8,1.8,2.8,3.4\}$ &
$(\pm a,0)$, $a\in\{0.65,1.00\}$;
$(0,\pm b)$, $b\in\{0.40,1.00\}$\\
Pendulum &
$(\pm\pi/2,\pi h)$, $h\in\{-1/2,-1/4,0,1/4,1/2\}$ &
$(\pm\pi/2,\pm\pi/8)$\\
Van der Pol &
$(0,\pm0.5)$, $(0,\pm2.5)$, $(\pm3,0)$ &
$(0,\pm1.25)$, $(\pm2.75,0)$\\
Cubic &
$(0,\pm\sqrt{4p^2+2p^4})$,
$p\in\{0.40,0.55,0.70,0.85,1.00\}$ &
$(0,\pm\sqrt{4p^2+2p^4})$,
$p\in\{0.625,0.925\}$\\
Quadratic &
$(0,-1.2)$, $(0,0.8)$, $(0,1.2)$ &
$(-0.4,0.65)$, $(0.4,-0.65)$,
$(-0.15,1.0)$, $(0.15,-1.0)$\\
\bottomrule
\end{tabularx}
\end{table}

\subsubsection{Theoretical predictions}
\label{app:exp-synth-theory}
Under the shared-state and velocity-coverage assumptions of
Theorem~\ref{thm:semilinear-affine-collapse}, a compatible coordinate map
is affine. Requiring the transformed law to remain in the declared family
further restricts this map and determines the corresponding parameter orbit
(Table~\ref{tab:exp-synth-actions}). Thus invariant coefficients should be
preserved across fits, while coefficients that depend on coordinate scale
should follow the same scale inferred from the learned coordinate.
The quadratic family also requires checking both permitted affine branches.
These experiments test coordinate recovery and parameter recovery against
the same family-permitted relation.

For experimental plots, write $q$ for physical state and $z=E_\phi(x)$ for
learned state (the theory's $z$ and $\hat z$, respectively), with hats on
learned coefficients. The comparison map is determined by the declared dynamical family.

\begin{table}[H]
\centering\small
\caption{Declared laws, fixed physical coefficients and permitted coordinate actions. Nonzero
$\lambda$ is estimated from training coordinate pairs. The quintic invariant
uses $\beta\ne0$; the plotted physical coefficients are positive.}
\label{tab:exp-synth-actions}
\begin{tabularx}{\linewidth}{@{}p{.40\linewidth}Y@{}}
\toprule
Learned law / physical coefficients & Allowed map / parameter consequence\\
\midrule
$z''+\hat\delta z'+\hat\alpha(z-\hat c)=0$\newline {\footnotesize $(\delta,\alpha,c)=(0.2,4,0)$} &
$z\simeq\lambda q+\tau$; $\hat c\simeq\lambda c+\tau$; $\delta,\alpha$ invariant.\\
$z''+\hat\delta z'+\hat\omega^2\sin z=0$\newline {\footnotesize $(\delta,\omega^2)=(0.15,4)$} &
$z\simeq sq+2\pi n$; $s=\pm1$, $n\in\mathbb Z$; $\delta,\omega^2$ invariant.\\
$z''-\hat\mu(1-z^2)z'+z=0$\newline {\footnotesize $\mu=1.5$} & $z\simeq\pm q$; $\mu$ invariant.\\
$z''+\hat\delta z'+\hat\alpha z+\hat\beta z^3=0$\newline {\footnotesize $(\delta,\alpha,\beta)=(0.2,4,4)$} &
$z\simeq\lambda q$; $\hat\beta\simeq\beta/\lambda^2$;
$\delta,\alpha,\operatorname{sign}\beta$ invariant.\\
$z''+\hat\delta z'+\hat\alpha z+\hat\beta z^3+\hat\gamma z^5=0$\newline {\footnotesize $(\delta,\alpha,\beta,\gamma)=(0.2,4,3,2)$} &
$z\simeq\lambda q$; $\hat\gamma\simeq\gamma/\lambda^4$;
also $\gamma/\beta^2$ invariant.\\
$z''+\hat\delta z'+\hat\alpha z+\hat\beta z^2=0$\newline {\footnotesize $(\delta,\alpha,\beta)=(0.18,1.6,0.8)$} &
Origin: $z\simeq\lambda q$, $\hat\alpha\simeq\alpha$, $\hat\beta\simeq\beta/\lambda$.
Shifted: $z\simeq\lambda(q+\alpha/\beta)$, $\hat\alpha\simeq-\alpha$.\\
\bottomrule
\end{tabularx}
\end{table}

\subsubsection{Model and training}
\label{app:exp-synth-training-details}

LTI, cubic, quintic and quadratic use per-frame CNNs with three convolutional
blocks (base width 32), GroupNorm, GELU, a 128-unit MLP and a scalar output.
Inputs are $64\times64$; cubic frames use Lanczos resizing.
The encoder and dynamical coefficients are optimized jointly. The
$\mathcal L_{\mathrm{var}}$ penalty introduced in Section~\ref{sec:setup} uses the
standard-deviation threshold $s_{\mathrm{floor}}=0.02$
(Eq.~\ref{eq:AB_variance_floor}). Table~\ref{tab:exp-synth-training}
lists the optimization settings for all six families.
The pendulum and Van der Pol scalar output biases are initialized in
$[-3\pi,3\pi]$ and $[-0.5,0.5]$, respectively.

\begin{table}[H]
\centering\small
\caption{\textbf{Training settings for the six-family experiment.}
All use Adam and the final checkpoint after the stated number of updates
(optimizer steps). Learning rates are encoder / dynamical law.
Each family has 100 fits.}
\label{tab:exp-synth-training}
\setlength{\tabcolsep}{5pt}
\begin{tabular}{@{}lcccc@{}}
\toprule
Family & IC collections & Seeds per set & Updates & $\mathrm{lr}_{\mathrm{enc}}/\mathrm{lr}_{\mathrm{ode}}$\\
\midrule
Affine LTI & 10 & 10 & 2,000 & $0.0005/0.003$\\
Pendulum & 10 & 10 & 2,500 & $0.001/0.005$\\
Van der Pol & 10 & 10 & 5,000 & $0.001/0.003$\\
Cubic Duffing & 10 & 10 & 8,000 & $0.0005/0.1$\\
Quintic Duffing & 10 & 10 & 8,000 & $0.0005/0.003$\\
Quadratic / Helmholtz & 10 & 10 & 6,000 & $0.0005/0.003$\\
\bottomrule
\end{tabular}
\end{table}

\subsubsection{Evaluation}
\label{app:exp-synth-evaluation}
After training, freeze the encoder and fitted law. Estimate the
family-permitted map $\hat f$ from training reference pairs $(q,z)$:
LTI uses an affine least-squares map, the pendulum selects a sign and
$2\pi$ shift, and Van der Pol selects a sign. Cubic and quintic use
$\lambda=\sum qz/\sum q^2$. Quadratic fits both permitted branches on
training data and freezes the branch and scale with the smaller coordinate
error (Table~\ref{tab:exp-synth-actions}). No test states are used to
select or refit this map.

On test clips, set $x=q$ and $\widetilde x=\hat f^{-1}(z)$ in
Eq.~\eqref{eq:common-coordinate-error}: compute coordinate NRMSE within
each clip and take the clip median to obtain one $e_{\rm map}$ per fit.
For parameter evaluation, transform the physical law with the same frozen
map and branch to obtain $\theta^{\rm ref}$. Equation~\eqref{eq:common-parameter-error}
then gives $e_\theta$ as the largest relative error over the listed nonzero
dynamical coefficients, retaining coefficient signs and excluding the
LTI equilibrium. Thus the coordinate and parameter checks test the same
predicted relation, rather than choosing a scale from coefficient ratios.
The 100 per-fit errors are summarized using the conventions in
Appendix~\ref{app:metric-conventions}; the acceleration comparison below
uses $e_{\rm dyn}$ from Eq.~\eqref{eq:common-dynamics-error}.

\subsubsection{Results and discussion}

Figure~\ref{fig:exp-orbit} shows the learned coefficients and coordinate
maps. Table~\ref{tab:app-exp-synth-cohort} summarizes the 100 fits per family;
Figures~\ref{fig:exp-synth-errors} and~\ref{fig:exp-synth-orbit-by-ic}
separate these fits by IC collection, showing the ten optimizer seeds
within each collection.

\begin{table}[H]
\centering\small
\caption{\textbf{Parameter relative error and coordinate recovery.}
Percentages are median [Q25, Q75] over all 100 fits per family.}
\label{tab:app-exp-synth-cohort}
\begin{tabular}{@{}lcc@{}}
\toprule
Family (100 fits) & Parameter $e_\theta$ (\%) & Coordinate $e_{\rm map}$ (\%)\\
\midrule
Affine LTI & 0.85 [0.52, 1.65] & 1.68 [1.53, 1.90]\\
Pendulum & 1.44 [0.84, 2.17] & 3.58 [2.27, 5.99]\\
Van der Pol & 0.31 [0.13, 0.61] & 2.45 [1.69, 3.71]\\
Cubic & 2.56 [1.60, 4.60] & 0.82 [0.68, 1.08]\\
Quintic & 1.68 [1.02, 2.29] & 0.78 [0.62, 1.06]\\
Quadratic & 2.47 [1.31, 4.37] & 3.29 [1.96, 6.08]\\
\bottomrule
\end{tabular}
\end{table}

\begin{figure}[H]
\centering
\includegraphics[width=\linewidth]{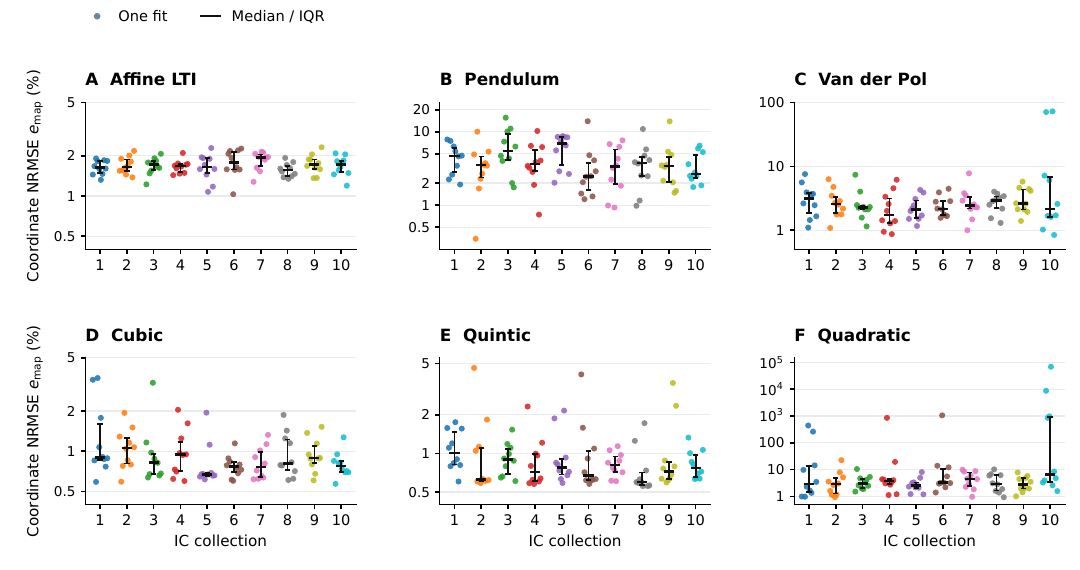}
\caption{\textbf{Test coordinate NRMSE $e_{\rm map}$ by IC collection.}
Each family contains ten IC collections, each with ten optimizer seeds.
Dots show $e_{\rm map}$; black bars
give each collection's median and interquartile range.
Colors match Figure~\ref{fig:exp-orbit};
IC indices are local to each family. The logarithmic vertical limits vary
by family.}
\label{fig:exp-synth-errors}
\end{figure}

\begin{figure}[H]
\centering
\includegraphics[width=\linewidth]{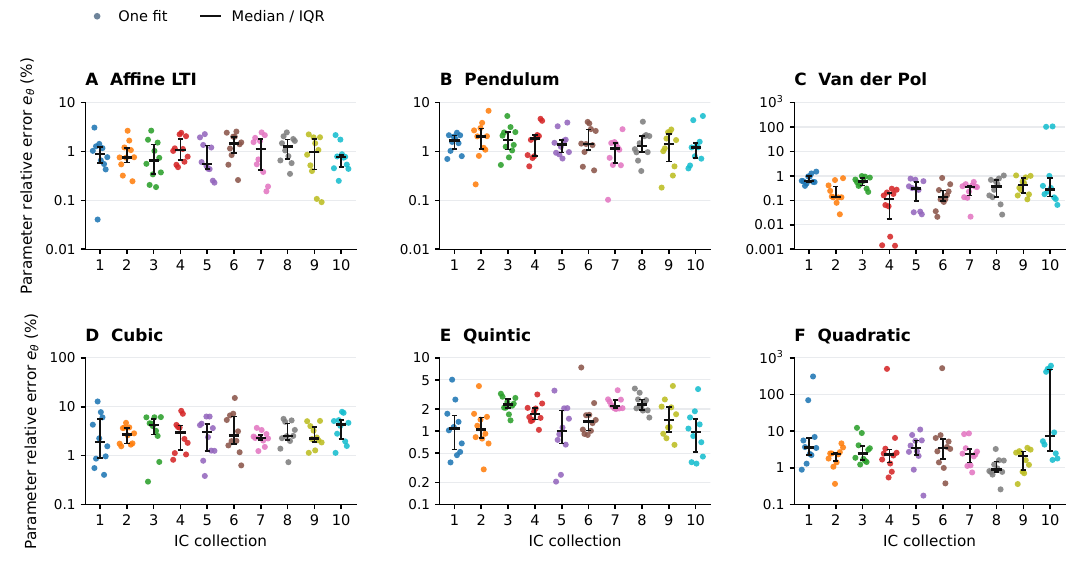}
\caption{\textbf{Parameter relative error $e_\theta$ by IC collection.}
The same fits and IC groups as in Figure~\ref{fig:exp-synth-errors}, now
showing $e_\theta$.
Within each group, horizontal offsets follow optimizer-seed order and
are identical in both figures. Bars show median and interquartile range.}
\label{fig:exp-synth-orbit-by-ic}
\end{figure}

Median parameter relative errors range from $0.31\%$ to $2.56\%$.
Cubic and quintic combine sub-$1\%$ median coordinate NRMSE with the
predicted $\beta/\gamma$ scaling in Figure~\ref{fig:exp-orbit}.
Pendulum and Van der Pol have median coordinate errors of $3.58\%/2.45\%$
and parameter errors of $1.44\%/0.31\%$.
The IC breakdown shows both variation across trajectory collections and
the spread across optimizer seeds within each collection.
These comparisons assess finite-data agreement with the family-specific
relations. Proposition~\ref{prop:affine-realizability} constructs exact
alternative encoders under representability and affine-closure assumptions;
the empirical errors do not certify these assumptions or every continuous-time
premise (Appendix~\ref{app:operational-details}).

\subsubsection{Additional analysis: matched affine-LTI comparison}
\label{app:exp-synth-adequacy}

This experiment probes the scope of affine-LTI modeling and the loss in
dynamical accuracy caused by ODE-family misspecification. We keep the
observations and ICs unchanged, train the encoder jointly with an affine-LTI
law, and then evaluate the fitted coefficients on trajectories from the
held-out test videos. The affine-LTI dataset provides a matched-family
control, while the five nonlinear datasets test the cost of restricting
the learned dynamics to this family.
For each of the six families, we pair the 100 fitted models with 100
affine-LTI controls, $z''+\hat\delta z'+\hat\alpha(z-\hat c)=0$.
Each pair shares the video data, encoder architecture and initial weights,
training-window sampling, optimizer settings, and update budget
(Table~\ref{tab:exp-synth-training}); the LTI equilibrium $\hat c$ is learned
jointly. 
Pairing covers ten IC collections and ten optimizer seeds per family.

For each model, fit $z\simeq\lambda q+\tau$ on training frames and freeze
it on test trajectories. Evaluate $e_{\rm dyn}$ using the stored physical
states and derivatives, with
\[
 \widetilde F=\lambda^{-1}\hat F(\lambda q+\tau,\lambda q'),
 \qquad F=q''.
\]
Here $\hat F$ uses the fitted coefficients without refitting on test
data. Equation~\eqref{eq:common-dynamics-error} measures how accurately
the learned dynamical law reproduces physical acceleration along the
held-out trajectories, including the effect of family misspecification;
it does not integrate a rollout or compare coefficients from different
families. The LTI oracle fits acceleration directly to $(q,q',1)$ on training trajectories
with the matching coefficient-sign constraints and uses the same test
comparison. Wins count paired fits with lower $e_{\rm dyn}$ for the
declared family model.

\begingroup
\setlength{\intextsep}{6pt}
\begin{figure}[H]
\centering
\includegraphics[width=\linewidth]{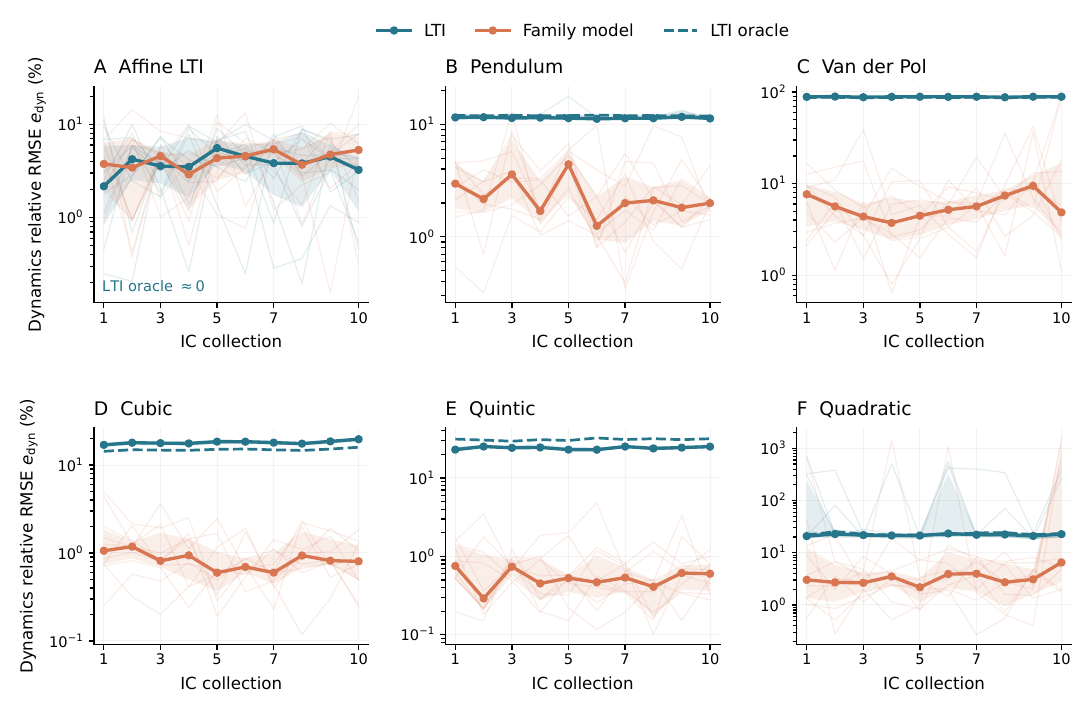}
\caption{\textbf{Matched affine-LTI comparison across IC collections.}
Each family has ten IC collections and ten paired optimizer seeds per
collection. Faint lines show individual seeds; solid curves and bands give
the within-IC median and interquartile range of test dynamics relative
RMSE $e_{\rm dyn}$ (\%), defined in Eq.~\eqref{eq:common-dynamics-error}
and evaluated on physical accelerations. Dashed curves show the training-fitted physical-state LTI oracle.
Both methods in A use the affine-LTI family.}
\label{fig:app-exp-synth-adequacy}
\end{figure}

\begin{table}[H]
\centering\small
\caption{\textbf{Test dynamics relative RMSE $e_{\rm dyn}$ (\%).}
Physical-acceleration comparison using Eq.~\eqref{eq:common-dynamics-error};
median [Q25, Q75] over 100 pairs per family. Wins count pairs with a smaller
$e_{\rm dyn}$ for the declared family model.}
\label{tab:exp-matched-lti}
\begin{tabular}{@{}lccc@{}}
\toprule
Dataset & Affine LTI (\%) & Family model (\%) & Family wins\\
\midrule
Affine LTI & 3.86 [1.96, 6.58] & 4.56 [2.16, 6.84] & 46/100\\
Pendulum & 11.34 [11.20, 11.70] & 2.17 [1.43, 3.89] & 99/100\\
Van der Pol & 88.48 [88.01, 88.79] & 5.80 [3.07, 9.21] & 98/100\\
Cubic & 17.92 [17.46, 18.53] & 0.81 [0.57, 1.39] & 100/100\\
Quintic & 24.29 [23.46, 25.02] & 0.54 [0.33, 0.89] & 100/100\\
Quadratic & 22.03 [20.42, 23.85] & 3.15 [1.70, 6.68] & 92/100\\
\bottomrule
\end{tabular}
\end{table}
\endgroup

Across the five nonlinear datasets, the declared family has lower median
dynamics error $e_{\rm dyn}$ in all ten IC collections and wins 92--100 of the
100 individual pairs. On the affine-LTI dataset, both columns instantiate the
same model family and have comparable error distributions; their small
difference is due to finite-precision divergence from a different parameter
traversal order in global gradient clipping. A controlled replay with a common
accumulation order reproduces matching training trajectories.

\FloatBarrier
\subsection{Velocity coverage and parameter recovery}
\label{app:exp-synth-coverage}\label{app:exp-synth-observations}

\subsubsection{Observations and data}
\label{app:exp-synth-coverage-setup}

The physical system is
$q''=-kq-\delta q'-\kappa q'|q'|$, with spring coefficient $k=0.8$,
linear damping $\delta=0.18$, and quadratic drag $\kappa=0.22$.
Each IC collection contains 24 training and eight test clips of 1 s,
rendered at $96\times96$ pixels and 60 Hz (61 frames).

\begin{figure}[H]
\centering
\includegraphics[width=\linewidth,trim=0 76 0 30,clip]{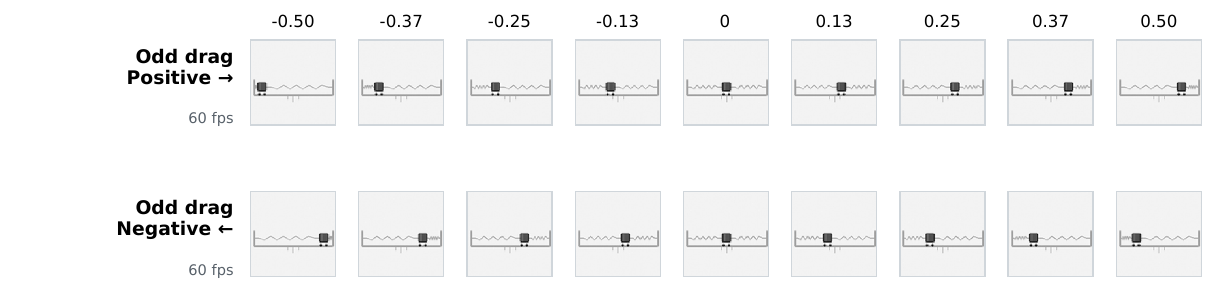}
\caption{Opposite odd-drag crossings. Both rows show $96\times96$ inputs
at 60 Hz, aligned at the same position at $t=0$. The central frames have
the same geometry despite opposite velocities; labels give time in seconds.}
\label{fig:exp-boundary-videos}
\end{figure}

We vary the number $K$ of speed magnitudes and whether trajectories cross
$q=0$ in one or both directions. As in Appendix~\ref{app:exp-synth-protocol},
an \emph{IC collection} groups videos generated from one prescribed set
of physical trajectory conditions. Here the conditions are the crossing
velocities at $q=0$, which determine the simulated trajectories.
Ten data seeds independently perturb
the base magnitudes $\{0.8,1.35,1.95\}$ by factors in $[0.97,1.03]$.
One-way $K=1$ uses the middle magnitude; $K=3$ uses all three;
$K=6$ uses six values spanning the same envelope. Two-way $K=1$ uses
both signs of the middle magnitude; $K=2$ uses both signs of the smallest
and largest magnitudes; $K=3$ uses all three. Appearance repetitions give
24 training clips in every condition. The test pools use crossing speeds
near $\{1.1,1.7\}$ for one-way designs and $\{-1.1,1.1\}$ for two-way
designs. Evaluation below uses the four positive-direction test clips
near $1.1$ shared by all six conditions within each IC collection.

\subsubsection{Theoretical predictions}
\label{app:exp-synth-coverage-theory}
The experiment tests the curvature-column exclusion in
Theorem~\ref{thm:feature-separation}. A nonlinear coordinate map contributes
$f''(q)v^2$ to the acceleration. Although $v|v|$ differs from $v^2$ over
two-way motion, these columns coincide for positive velocities (and differ
only by sign for negative velocities). One-way motion therefore cannot
separate drag from coordinate curvature, regardless of the number of
speeds. With two directions and two distinct speed magnitudes, the
augmented feature matrix $[1,v,v|v|,v^2]$ has rank four, giving the
theorem's raw-affine conclusion under its remaining assumptions.
The corresponding drag relation is $c_d=-\kappa/|\lambda|$
(Eq.~\eqref{eq:exp-coverage-orbit}). The test is whether this change in
coverage improves parameter recovery even when coordinate errors are small.

\subsubsection{Model and training}
\label{app:exp-synth-coverage-training}

Each of the ten IC collections is crossed with five optimizer seeds, giving 50 fits per
condition. A spatial soft-argmax encoder is followed by a learned cubic
polynomial head. It is jointly fitted with
$z''=c_0(z)+c_1(z)z'+c_d(z)z'|z'|$, where the polynomial degrees of
$(c_0,c_1,c_d)$ are $(5,3,5)$. Derivatives use a 31-frame, degree-two
local-polynomial fit. Table~\ref{tab:exp-coverage-training} gives the
optimization settings; evaluation uses the final checkpoint.

\begin{table}[H]
\centering\small
\caption{\textbf{Coverage training settings.} Adam; each coverage condition uses ten IC collections and five optimizer seeds per collection.}
\label{tab:exp-coverage-training}
\begin{tabular}{@{}cccc@{}}
\toprule
IC collections & Seeds per collection & Updates & $\mathrm{lr}_{\mathrm{enc}}/\mathrm{lr}_{\mathrm{ode}}$\\
\midrule
10 & 5 & 5,000 & $0.0005/0.002$\\
\bottomrule
\end{tabular}
\end{table}

\subsubsection{Evaluation}
\label{app:exp-synth-coverage-evaluation}
Fit $z\simeq\lambda q+\tau$ on training references, then use $x=q$ and
$\widetilde x=(z-\tau)/\lambda$ in Eq.~\eqref{eq:common-coordinate-error}
to compute $e_{\rm map}$. Each clip contributes its frames with
$q\in[-0.2,0.2]$; all six conditions use the same four test clips within
each IC collection.

Substitution of $z=\lambda q+\tau$ into the physical law gives
\begin{equation}
 c_0(z)=-k(z-\tau),\qquad c_1(z)=-\delta,\qquad
 c_d(z)=-\frac{\kappa}{|\lambda|}.
 \label{eq:exp-coverage-orbit}
\end{equation}
Let $\bar\kappa=-\operatorname{mean}c_d$ be the mean learned drag
magnitude, evaluated on 101 equally spaced learned-coordinate values
over the encoded training interval $q\in[-0.2,0.2]$.
The parameter comparison uses the scale-corrected mean
$\widehat\kappa_{\mathrm{aligned}}=|\lambda|\bar\kappa$ against
$\kappa=0.22$ in Eq.~\eqref{eq:common-parameter-error}, with
the same training-fitted $\lambda$ as the coordinate
comparison. Figure~\ref{fig:exp-coverage-orbit} shows both errors for all
six designs.

The feature rank concerns $[1,v,v|v|,v^2]$
(Section~\ref{subsec:feature-separation}). It is evaluated at 21 shared
positions in $[-0.2,0.2]$, using the distinct velocities at each position;
Figure~\ref{fig:exp-coverage-orbit}A reports the minimum rank. Rank four separates the declared features from the curvature column.
A smaller rank fails this sufficient condition; it does not by itself
establish nonidentifiability of every parameter.

\subsubsection{Results and discussion}
\label{app:exp-synth-coverage-results}

\begin{figure}[H]
\centering
\includegraphics[width=\linewidth]{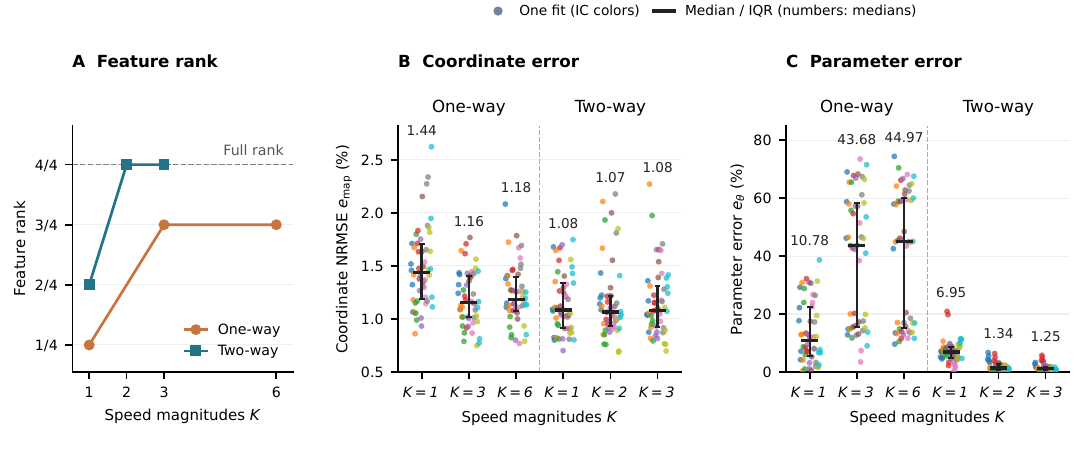}
\caption{\textbf{Velocity coverage, parameter recovery, and coordinate accuracy.}
A: minimum rank of $[1,v,v|v|,v^2]$ over 21 shared positions in
$[-0.2,0.2]$; all ten IC collections give the displayed rank for each condition.
The dashed line marks full rank. B: coordinate NRMSE $e_{\rm map}$
(Eq.~\eqref{eq:common-coordinate-error}) on the same four test clips
within each IC collection. C: parameter relative error $e_\theta$ of the
scale-corrected mean drag coefficient (Eq.~\eqref{eq:common-parameter-error}).
B/C show all 50 fits per condition, colored by IC collection, with matching
horizontal offsets. Black bars show the median and interquartile range
over the 50 fits, using the convention of Figures~\ref{fig:exp-synth-errors}
and~\ref{fig:exp-synth-orbit-by-ic}; numbers give medians (\%).
All axes are linear. $K$ counts speed magnitudes.}
\label{fig:exp-coverage-orbit}
\end{figure}

One-way motion gives $v|v|=v^2$, so adding speeds does not separate the
drag feature from the curvature term introduced by a nonlinear coordinate
map. Median drag-parameter errors are $43.68\%$ and $44.97\%$ for
one-way $K=3,6$, compared with $1.34\%$ and $1.25\%$ for two-way
$K=2,3$, which have full feature rank. Median coordinate NRMSE is
$1.07$--$1.44\%$ across all six conditions. Accurate coordinates therefore
coexist with substantial errors in the recovered mean drag coefficient
when velocity coverage is insufficient.

\FloatBarrier
\subsection{Canonical normalization}
\label{app:exp-synth-canonical}

\subsubsection{Observations and data}
\label{app:exp-synth-canonical-setup}

To test whether canonical normalization corrects nonlinear coordinate
distortion, we simulate a pendulum with angular dynamics
$q''+0.15q'+4\sin q=0$, released from rest at different initial angles.
An orthographic camera looks along the vertical axis and records
$y=\sin q$ on $|q|<\pi/2$, discarding the depth coordinate
(Figure~\ref{fig:exp-projection-geometry}). The projected rod and bob are
rendered as $96\times96$ grayscale frames. Thus the observation depends
nonlinearly on the physical angle even before learning.

\begin{figure}[H]
\centering
\includegraphics[width=.82\linewidth]{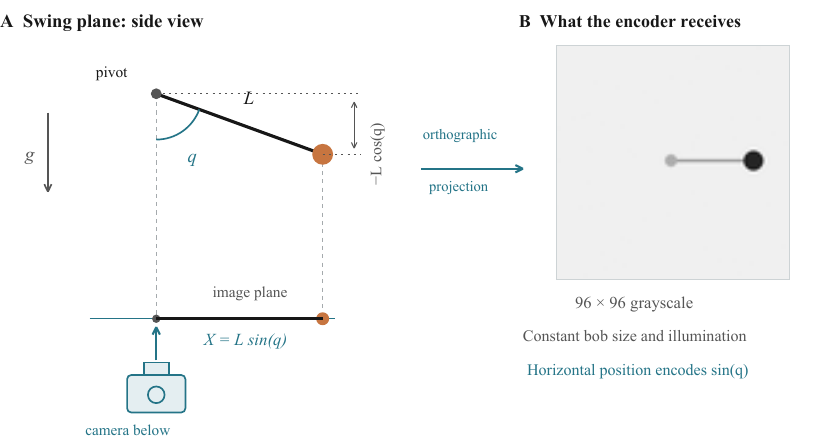}
\par\smallskip
\includegraphics[width=\linewidth,trim=0 0 0 173,clip]{figures/experiments/boundary-video-strips.pdf}
\caption{\textbf{Projected pendulum observations.}
An orthographic camera looks along the vertical axis, retaining
$X=L\sin q$ and discarding the depth coordinate $-L\cos q$.
The projected rod and constant-size bob produce the image coordinate
$y=\sin q$. Bottom: nine $96\times96$ observations from $0$ to $3T/2$
for a $70^\circ$ release; $T$ is the first full oscillation duration.
The illustrative strip continues the original ODE and renderer beyond
the 4.8 s training-clip duration.}
\label{fig:exp-projection-geometry}
\end{figure}

We train on three videos released at approximately $45^\circ,60^\circ,70^\circ$,
and test on held-out releases near $40^\circ,55^\circ,65^\circ$.
Each training angle has one rendering; each test angle has four, giving
3 training and 12 test videos. All videos last 4.8\,s at 30\,Hz (145 frames).
We repeat training from three random initializations on the same video data.
For each fitted model, Raw and Canonical are compared on exactly the same
test videos.

\subsubsection{Theoretical predictions}
\label{app:exp-synth-canonical-theory}
Differentiating $y=\sin q$ introduces the squared-velocity coefficient
$-y/(1-y^2)$ into the projected law. We fit the corresponding structured family in a learned
coordinate with free scale $s>0$ and offset $b$:
\begin{equation}
 z''=-\delta z'-\omega^2(z-b)\sqrt{1-u^2}
 -\rho\frac{z-b}{s^2-(z-b)^2}(z')^2,\qquad u=\frac{z-b}{s}.
 \label{eq:app-exp-synth-projected-law}
\end{equation}
Here $\rho$ controls the squared-velocity channel. The exact projected
law $z=s\sin q+b$ has $(\delta,\omega^2,\rho)=(0.15,4,1)$.
For optimization, the fitted-law parameters are initialized at
$(\delta,\omega^2,\rho,s,b)=(0.12,3.2,0.8,0.82,0)$ in all configurations.


This experiment tests Section~4.2: a squared-velocity channel permits
nonlinear coordinate changes, so the raw coordinate need not be affinely
related to the physical angle. Affine alignment alone therefore cannot in
general remove this distortion. We consequently apply the normalizer in
Eq.~(60), which removes the squared-velocity channel and becomes
$s\arcsin((z-b)/s)$ when $\rho=1$.

The physical reference is the projected law induced by $y=\sin q$, whose
normalizer gives $\psi_\theta(y)=\arcsin y=q$. Theorem~4.2 therefore predicts
a shared affine relation
$\psi_\eta(z)\simeq \lambda q+\tau$ under its coverage assumptions, which we
test through canonical coordinate recovery and the transformed restoring
force and damping. The theorem applies on the regular chart $|z-b|<s$ under
the sufficient coverage condition of at least three distinct physical
velocities at each covered position for this degree-two velocity family.

\subsubsection{Model and training}
\label{app:exp-synth-canonical-training}
We jointly fit the spatial soft-argmax encoder and the law in
Eq.~\eqref{eq:app-exp-synth-projected-law}, without physical-coordinate targets.
Each fit uses 6,000 Adam updates and four 97-frame windows per step;
learning rates are $2\times10^{-4}$ for the encoder and $2\times10^{-3}$
for the law. Derivatives use 21-frame, degree-four local polynomials.
The standard-deviation threshold is $s_{\mathrm{floor}}=0.03$ with the weight $\lambda_{\mathrm{var}}=100$.

\subsubsection{Evaluation}
\label{app:exp-synth-canonical-evaluation}
For the main training trajectories, a numerical first-passage check on
161 positions in $[-0.4,0.4]$ rad finds three distinct velocities and
rank three for $[1,v,v^2]$. The full-test coordinate evaluation and wider
force grid also examine behavior beyond this checked interval.

After freezing the encoder and law, solve $\psi''+c_2\psi'=0$ with
$\psi(b)=0$ and $\psi'(b)=1$:
\begin{equation}
 \psi(z)=s\int_0^{(z-b)/s}(1-u^2)^{-\rho/2}\,du.
 \label{eq:exp-projection-normalizer}
\end{equation}
This map uses the learned $(\rho,s,b)$ and requires no physical anchors.
Cumulative trapezoidal quadrature uses 40,001 nodes spanning the encoded
training/test values and $b$. A defined map requires $s^2-(z-b)^2>0$
throughout that interval; all three main fits pass this check.
For $h(z)=z$ and $h(z)=\psi(z)$ separately, fit $h(z)\simeq a_hq+b_h$ on
training references, then freeze $\widetilde q=[h(z)-b_h]/a_h$ for testing.
Both readouts have identical affine freedom. To assess coordinate accuracy
at release angles unseen in training, we evaluate coordinate NRMSE
$e_{\rm map}$ (Appendix~\ref{app:metric-conventions}) on held-out videos
near $40^\circ,55^\circ,65^\circ$. Compute it within each test clip using
the original test frames, then take the clip median. Binning is used only
to display residuals.

Report coefficient relative errors against
$(\delta,\omega^2,\rho)=(0.15,4,1)$. Let $A(z)$ be the velocity-independent
term of Eq.~\eqref{eq:app-exp-synth-projected-law}. With the frozen canonical
readout, evaluate
\[
 z_q=\psi^{-1}(a_\psi q+b_\psi),\qquad
 \widetilde F(q)=\psi'(z_q)A(z_q)/a_\psi.
\]
Compare $\widetilde F$ with $-4\sin q$ on 361 points in $[-0.9,0.9]$ rad,
using the common dynamics relative RMSE $e_{\rm dyn}$ from
Appendix~\ref{app:metric-conventions}. This measures the restoring-force
channel. The damping channel remains $-\delta$, so its relative error is
the coefficient error of $\delta$. All errors are percentages, summarized
as median [Q25,Q75] across fits.

\subsubsection{Results and discussion}
\label{app:exp-synth-canonical-results}
Table~\ref{tab:exp-normalization} shows that normalization reduces coordinate
NRMSE $e_{\rm map}$ from $5.633\%$ to $0.305\%$ and recovers the restoring
force to $0.193\%$ relative RMSE. Figure~\ref{fig:exp-normalization-residuals}
shows the corrected residuals and their variation across fits;
Table~\ref{tab:app-exp-synth-canonical} reports the corresponding coefficient errors.

{\setlength{\intextsep}{4pt}
\begin{table}[H]
\centering\small
\caption{\textbf{Fitted-law coefficient errors.}
Parameter relative errors $e_{\theta,j}$ against
$(\delta,\omega^2,\rho)=(0.15,4,1)$; median [Q25,Q75] over three fits
on the same IC collection.}
\label{tab:app-exp-synth-canonical}
\setlength{\tabcolsep}{10pt}
\begin{tabular}{@{}ccc@{}}
\toprule
Damping $e_{\theta,\delta}$ & Restoring coefficient $e_{\theta,\omega^2}$ & Quadratic channel $e_{\theta,\rho}$\\
\midrule
0.64\% [0.53\%, 0.68\%] & 0.21\% [0.20\%, 0.23\%] & 1.93\% [1.83\%, 1.95\%] \\
\bottomrule
\end{tabular}
\end{table}}

{\setlength{\intextsep}{4pt}
\begin{figure}[H]
\centering
\includegraphics[width=.88\linewidth]{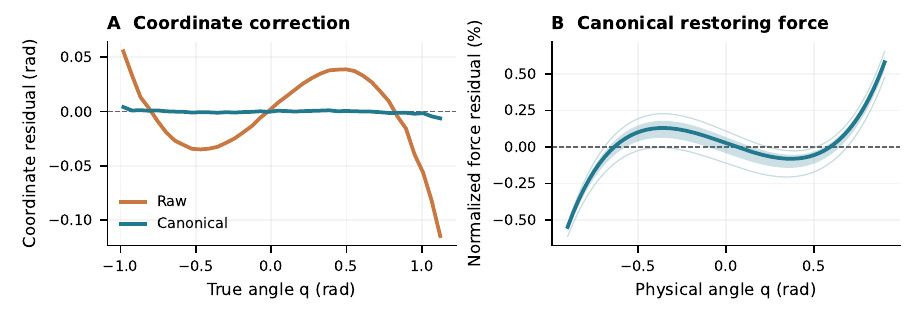}
\caption{\textbf{Coordinate and force recovery across three fits.}
A: held-out angle residuals after training-fitted affine alignment.
B: canonical force residual $(\widetilde F-F)/\mathrm{RMS}(F)$ in percent,
with $F=-4\sin q$. Thick curves: pointwise medians; shading: Q25--Q75;
thin curves: individual fits. Coordinates are binned for display only;
$e_{\rm map}$ uses original test frames.}
\label{fig:exp-normalization-residuals}
\end{figure}}

\subsubsection{Additional analysis: trajectory coverage and repeated observations}
\label{app:norm-velocity-controls}
Table~\ref{tab:norm-velocity-controls} compares one $60^\circ$ video with
one to three passages, and first-passage videos at $50^\circ,60^\circ,70^\circ$
with one or two renderings per angle.
Each condition uses 6,000 updates and five paired initializations.
Held-out releases at $45^\circ,57.5^\circ,75^\circ$ are evaluated on
$U=[-0.4,0.4]$ rad for both $e_{\rm map}$ and force $e_{\rm dyn}$.
Additional renderings vary noise and brightness; the trajectory and
training references for affine calibration remain fixed.

\begin{table}[H]
\centering
\caption{\textbf{Trajectory coverage and repeated observations.}
Errors are median [Q25,Q75] over five fits per row; all 25 maps are defined.
Three-angle rows use one passage per video. Velocity counts give the number
of distinct velocities per state on $U=[-0.4,0.4]$ rad.
$\times2$ denotes two renderings per angle with unchanged trajectories.}
\label{tab:norm-velocity-controls}
\setlength{\tabcolsep}{3pt}
{\small
\begin{tabularx}{\linewidth}{@{}Xccccc@{}}
\toprule
Training observations & Videos & \shortstack{Distinct\\velocities} & Raw $e_{\rm map}$ (\%) & Canonical $e_{\rm map}$ (\%) & Force $e_{\rm dyn}$ (\%)\\
\midrule
$60^\circ$; 1 passage & 1 & 1 & 3.137 [3.092,3.140] & 0.898 [0.886,0.899] & 19.21 [19.19,20.02] \\
$60^\circ$; 2 passages & 1 & 2 & 8.441 [8.434,8.725] & 0.823 [0.560,1.192] & 3.49 [3.38,4.61] \\
$60^\circ$; 3 passages & 1 & 3 & 8.129 [8.114,8.196] & 0.691 [0.657,0.719] & 2.65 [1.91,3.48] \\
\midrule
$50^\circ,60^\circ,70^\circ$ & 3 & 3 & 2.853 [2.839,2.943] & 0.748 [0.672,0.762] & 9.55 [8.59,10.05] \\
$(50^\circ,60^\circ,70^\circ)\times2$ & 6 & 3 & 3.007 [2.956,3.038] & 0.539 [0.527,0.546] & 3.83 [3.22,5.43] \\
\bottomrule
\end{tabularx}}
\end{table}

Extending the single video from one to two and three passages reduces
force error from $19.21\%$ to $3.49\%$ and $2.65\%$, respectively.
Three first-passage videos at $50^\circ,60^\circ,70^\circ$ still give
$9.55\%$ force error. Doubling the renderings from three to six videos
reduces this error to $3.83\%$ with unchanged velocity coverage.
Thus, three distinct velocities alone do not guarantee accurate
finite-data recovery; repeated observations also affect force accuracy.
The passage comparison additionally changes duration and state coverage.

\flushbottom

\raggedbottom
\section{Real-video methods and additional results}
\label{app:exp-real}
We study three real-video experiments: pendulum recordings from the IRIS
benchmark~\citep{khanbayov2026iris}, and side-view and overhead free-fall
recordings collected by us. Figure~\ref{fig:exp-real-acquisition-comparison}
compares the falling-object settings in IRIS with our recordings.
Although IRIS also provides similar side-view and overhead free-fall videos,
these collections do not strictly match the shared-observation setting of
our theory.
The IRIS side-view settings use balls of different sizes and appearances,
and their camera framing also differs. In the overhead recordings, the
viewpoint and framing change across release-height settings. Consequently,
pooling these settings changes the physical-state-to-image relation as
well as the trajectories: the same physical position or height need not
produce the same visual observation. Applying our shared-map analysis to
such a collection would require additional treatment of the object and
camera differences. For the two fall experiments, we therefore record
multiple releases while keeping the ball and camera configuration fixed
within each view and varying the release height.

Real recordings still depart from ideal observations through finite frame
rates, image noise, motion blur, and imperfect projection and calibration.
The three cases examine whether the parameter information characterized
by our theory, combined with physical anchors, supports accurate parameter
estimates under these conditions. Appendix~\ref{app:exp-real-pendulum}
studies pendulum invariants and length recovery, including a matched LTI
comparison; Appendix~\ref{app:exp-real-side} studies the affine acceleration
orbit and diameter calibration in side-view fall; and
Appendix~\ref{app:exp-real-overhead} studies canonical normalization and
height calibration in overhead fall.
Each case presents observations and data, theoretical predictions,
model and training, evaluation, and results and physical calibration,
in that order. Additional analyses follow the main results where applicable.

\begin{figure}[H]
\centering
\includegraphics[width=\linewidth]{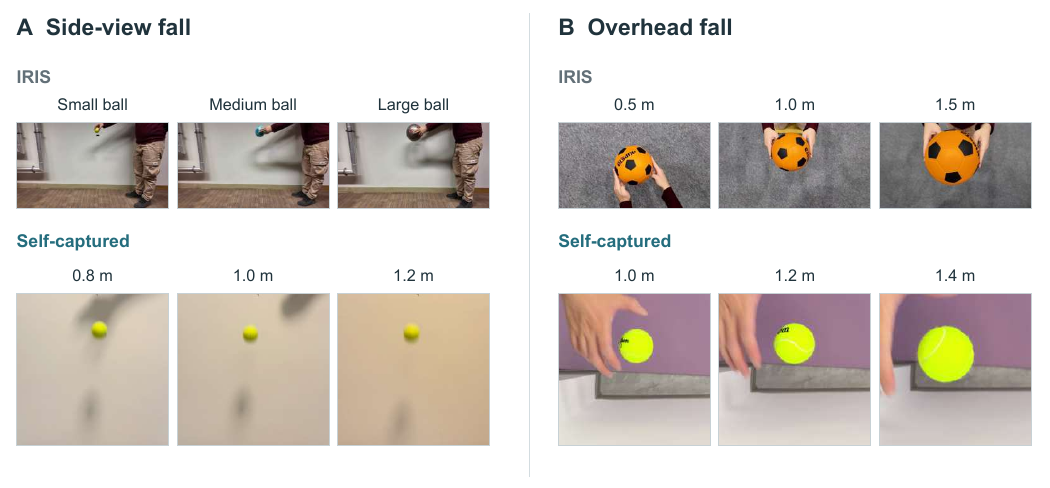}
\caption{\textbf{Falling-object recording settings in IRIS and our collection.}
IRIS examples are from \citet{khanbayov2026iris}.
Left: side view; right: overhead view. Each half contains three IRIS
settings above three self-captured release heights, with one recorded
frame per setting. IRIS side-view examples vary the ball and framing;
the overhead examples show different viewpoints and framing. Our examples
use the same ball and camera configuration within each view.
Images preserve aspect ratio; self-captured images use a common fixed crop
within each view. These are acquisition examples, not matched physical
states or performance comparisons.}
\label{fig:exp-real-acquisition-comparison}
\end{figure}

\subsection{Pendulum: invariant coefficients and length calibration}
\label{app:exp-real-pendulum}

\subsubsection{Observations and data}
\label{app:exp-real-pendulum-data}
We use 30 IRIS recordings: ten repeated trials at each release angle
$20^\circ$, $45^\circ$, and $90^\circ$, with reference pendulum length
$L=0.50$ m~\citep{khanbayov2026iris}.
Matching recording indices 01--05 across angles define five training
groups, each containing one video at each angle. A model trains on one
three-video group and is evaluated on the other 27 recordings.
Native indices $60,62,\ldots,538$ provide 240 frames at 30 Hz per video;
encoder inputs are grayscale images resized to $96\times54$.
Figure~\ref{fig:exp-pendulum-observations} illustrates the three
observation settings.

\begin{figure}[H]
\centering
\includegraphics[width=\linewidth]{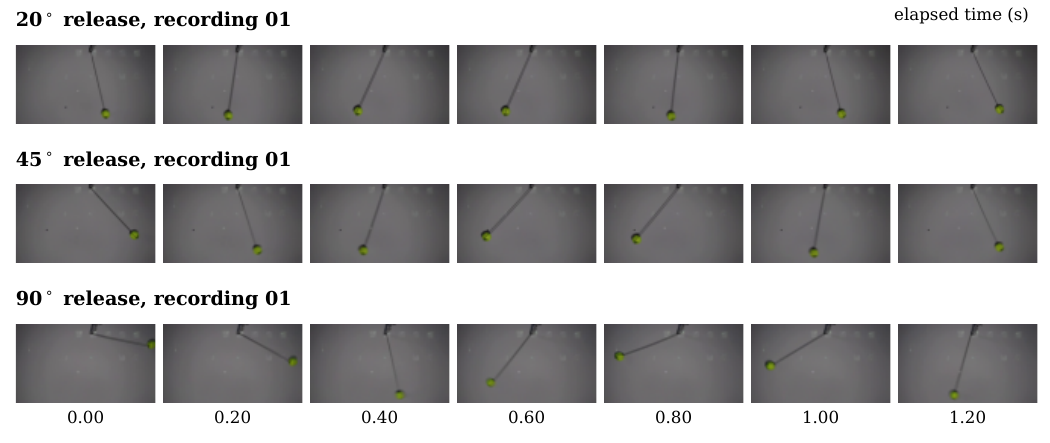}
\caption{\textbf{Pendulum observations on IRIS}~\citep{khanbayov2026iris}.
Each row shows recording 01 from one release-angle setting. The displayed
RGB frames illustrate the observed motion; elapsed time is measured from
the first selected frame.}
\label{fig:exp-pendulum-observations}
\end{figure}

\subsubsection{Theoretical predictions}
\label{app:exp-real-pendulum-theory}
For physical angle $q$, the damped pendulum obeys
\begin{equation}
 q''+\delta q'+\kappa\sin q=0,\qquad \kappa=g/L.
 \label{eq:exp-real-pendulum-law}
\end{equation}
Under the shared-state and coverage assumptions of
Theorem~\ref{thm:semilinear-affine-collapse}, the compatibility identities
in \eqref{eq:compat-a-main} restrict the angular coordinate to
$z=sq+2\pi n$, with $s\in\{-1,+1\}$ and $n\in\mathbb Z$.
Both $\delta$ and $\kappa$ are invariant under this coordinate orbit
(Table~\ref{tab:regimes}). Thus the theoretical prediction is that
compatible fits preserve these coefficients despite sign/period ambiguity,
and that supplying gravity determines length through $L=g/\kappa$.
For small angles, $\sin q\simeq q$ gives an LTI approximation; the
approximation becomes inaccurate at the larger angles in this collection.
We therefore also train a matched LTI model to examine the effect of
ODE-family misspecification on parameter estimates from real videos.

\subsubsection{Model and training}
\label{app:exp-real-pendulum-training}
A shared per-frame CNN with group normalization and an unbounded scalar
head produces $z$. We jointly fit the encoder and coefficients of
$z''+\delta z'+\kappa\sin z=0$; the matched LTI model replaces
$\kappa\sin z$ by $\alpha z$. Each training group has five paired
nonlinear/LTI fits, giving 25 fits per model. Each pair starts from the
same network initialization and uses the same training triplet.
Damping is nonnegative and restoring coefficients are positive.
Both models use 8,000 Adam steps, encoder/ODE learning rates
$10^{-3}/10^{-2}$, and gradient clipping at 1. The objective uses local
standard-deviation normalization and a non-collapse penalty with
standard-deviation floor $s_{\mathrm{floor}}=0.10$ and weight
$\lambda_{\mathrm{var}}=100$.

\subsubsection{Evaluation}
\label{app:exp-real-pendulum-evaluation}
We evaluate coordinate NRMSE $e_{\rm map}$ from
\eqref{eq:common-coordinate-error} under two alignment rules, each fitted
on the three training recordings and frozen on the other 27 videos.
The reference $q$ is the optically measured angle. For the \emph{affine}
comparison, fit $z\simeq\lambda q+\tau$ and use
$\widetilde q=(z-\tau)/\lambda$. This gives both models the same scale
and offset freedom and measures coordinate transfer after those
corrections. For the \emph{sign/period} comparison, choose
$s\in\{-1,+1\}$ and $n\in\mathbb Z$ using
$z\simeq sq+2\pi n$, and use $\widetilde q=(z-2\pi n)/s$.
This stricter rule tests angular recovery within the nonlinear family's
theoretical coordinate orbit, without freely correcting scale or offset.
For LTI, it is a fixed-angle diagnostic because LTI permits arbitrary
coordinate scale. Reporting both rules separates agreement with the
physical angular scale from transfer under a common affine readout.

\subsubsection{Results and physical calibration}
\label{app:exp-real-pendulum-results}
The nonlinear model has lower median test coordinate NRMSE under both
rules (Table~\ref{tab:exp-pendulum-alignments}): $5.04\%$ versus
$7.13\%$ after affine alignment, and $13.22\%$ versus $39.04\%$
after sign/period alignment. The affine comparison favors the nonlinear
model in 20 of the 25 paired fits.
Figure~\ref{fig:exp-pendulum-results}A--B shows the individual fits and
median/IQR summaries within each training group. The remaining
sign/period error shows that the nonlinear encoder only approximately
recovers the theoretical angular gauge; its smaller affine error alone
would not establish recovery of the physical angular scale.

\begin{table}[H]
\centering\small
\caption{\textbf{Coordinate accuracy under two common alignment rules.}
Coordinate NRMSE $e_{\rm map}$ from \eqref{eq:common-coordinate-error}
(\%), median [Q25, Q75] over 25 paired fits per model. Both alignments
are fitted on training recordings and evaluated on the same 27 held-out
recordings per pair.}
\label{tab:exp-pendulum-alignments}
\begin{tabular}{@{}lcc@{}}
\toprule
Model & Affine: $z\simeq\lambda q+\tau$ & Sign/period: $z\simeq sq+2\pi n$\\
\midrule
Nonlinear pendulum & 5.04 [4.38, 5.83] & 13.22 [12.20, 13.97]\\
LTI & 7.13 [5.46, 7.45] & 39.04 [28.65, 53.30]\\
\bottomrule
\end{tabular}
\end{table}

The learned nonlinear restoring coefficient is
$\hat\kappa=19.050\pm0.044\,\mathrm{s^{-2}}$, while LTI gives
$\hat\alpha=17.376\pm0.015\,\mathrm{s^{-2}}$; the corresponding
damping estimates are $0.07805\pm0.00114$ and
$0.08270\pm0.00240\,\mathrm{s^{-1}}$.
Here $\kappa=g/L$ in \eqref{eq:exp-real-pendulum-law}; it does not
determine $g$ and $L$ separately. 

\textbf{Anchor.}
Using the gravity anchor
$g=9.81\,\mathrm{m/s^2}$ after fitting gives
\begin{equation}
 \hat L=\frac{9.81}{\hat\kappa},\qquad
 \hat L_{\mathrm{eff}}=\frac{9.81}{\hat\alpha}.
 \label{eq:exp-real-pendulum-anchor}
\end{equation}
The nonlinear estimate is $0.5150\pm0.0012$ m, compared with
$0.5646\pm0.0005$ m for the LTI effective length. Against the IRIS
reference $L=0.50$ m, median parameter relative errors $e_\theta$
from \eqref{eq:common-parameter-error} are $2.92\%$ [2.82, 3.17]
and $12.92\%$ [12.83, 12.97], respectively.
Figure~\ref{fig:exp-pendulum-results}C displays the calibrated lengths
and their mean/SD by training group, with the reference length dashed.
The more accurate nonlinear length estimates show why retaining the
nonlinear restoring law matters when calibrating large-angle motion:
the same gravity anchor cannot compensate for the LTI approximation.

\begin{figure}[H]
\centering
\includegraphics[width=\linewidth]{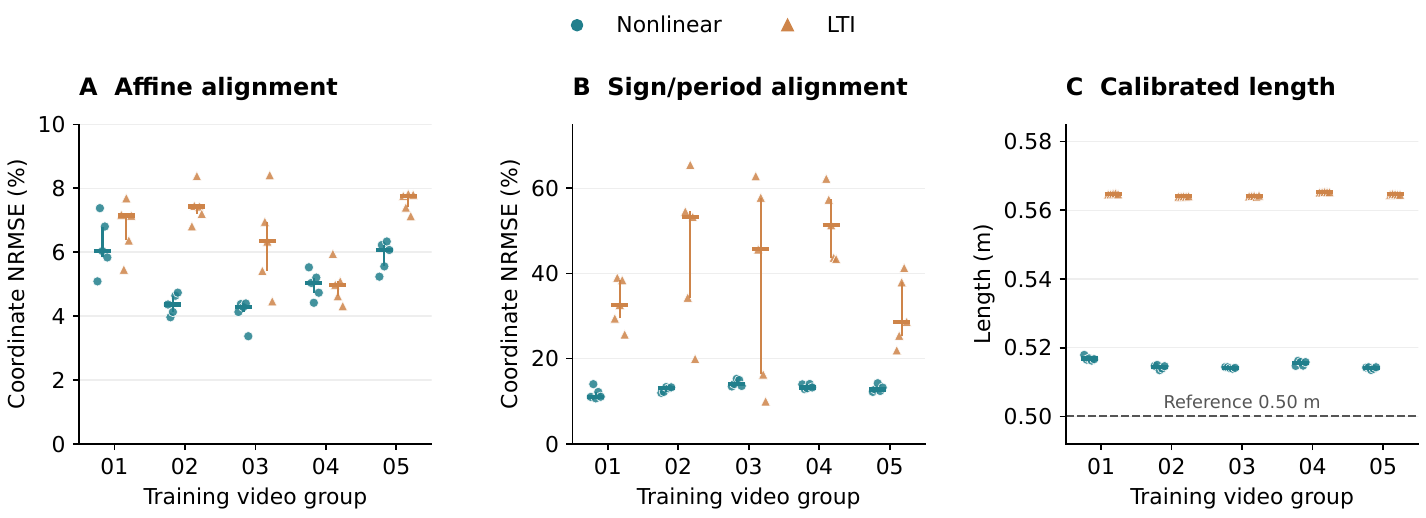}
\caption{\textbf{Coordinate recovery and calibrated length by training group.}
Groups 01--05 each contain five nonlinear fits (teal circles) and five
matched LTI fits (orange triangles). A--B: test coordinate NRMSE
$e_{\rm map}$ from \eqref{eq:common-coordinate-error} after affine and
sign/period alignment, respectively; bars show median and IQR.
C: length after gravity calibration in
\eqref{eq:exp-real-pendulum-anchor}; bars show mean $\pm$ sample SD,
and the dashed line marks the IRIS reference $0.50$ m.
All vertical axes are linear.}
\label{fig:exp-pendulum-results}
\end{figure}

\FloatBarrier
\subsection{Side-view free fall: affine ambiguity and acceleration calibration}
\label{app:exp-real-side}

\subsubsection{Observations and data}
\label{app:exp-real-side-data}
We record a falling ball with the same camera position, camera settings,
and ball across all release heights. A fixed native crop
$(x,y,w,h)=(390,760,300,800)$ is resized to $72\times192$; each clip
contains 5--11 frames at approximately 60 Hz, with recorded timestamps
retained. The camera captures a fixed vertical interval, so the observed
windows need not include the release instant. Figure~\ref{fig:exp-side-observations}
shows three example windows and the motion trails present in these recordings.

\begin{figure}[H]
\centering
\includegraphics[width=.95\linewidth]{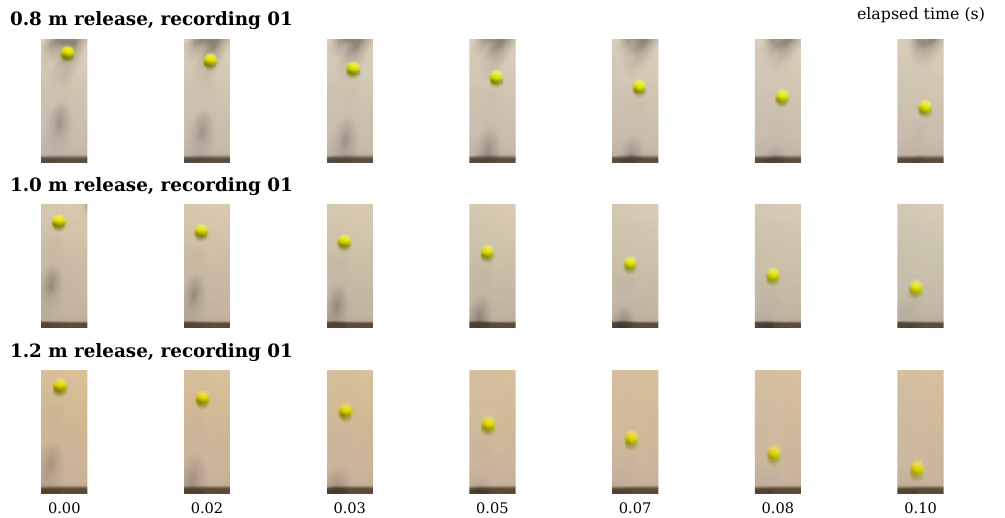}
\caption{\textbf{Side-view observations.} Three release heights (0.8, 1.0,
and 1.2 m), one row each. The fixed camera captures an intermediate
vertical interval; the release instant is unavailable in these windows.
The first seven frames of recording 01 are shown from each selected
window; shared elapsed times are relative to the start of each window.
Fixed-crop RGB images are $72\times192$.}
\label{fig:exp-side-observations}
\end{figure}

We form five training IC collections, IC1--IC5, using different
combinations of repeated recordings. Each contains two videos at each
of 0.8, 1.0, and 1.2 m, giving six training videos per collection.
The same five encoder initializations are used for each collection,
giving 25 fits. All collections are evaluated on the same 14 test
videos, none of which appears in any training collection.

\subsubsection{Theoretical predictions}
\label{app:exp-real-side-theory}
For downward physical displacement $q$, ideal free fall obeys $q''=g$.
Under the shared-state and three-slope coverage assumptions,
Theorem~\ref{thm:semilinear-affine-collapse} restricts a compatible learned
coordinate to $z=\lambda q+\tau$, with $z''=\lambda g$. Different
release heights provide different velocities through the same observed
height interval. With a locally constant image scale, the theory therefore
predicts the coordinate and parameter relations
\begin{equation}
 z\simeq\lambda_{\rm px}y+\tau,\qquad
 A\simeq\lambda_{\rm px}a_{\rm px},
 \label{eq:exp-real-side-orbit}
\end{equation}
where $y$ is vertical pixel position and $a_{\rm px}$ its acceleration.
Thus fitted $A$ can vary with $\lambda_{\rm px}$ while the ratio
$A/\lambda_{\rm px}$ remains approximately invariant. We test this
affine orbit across fits and use an external length anchor to convert
the scale-corrected acceleration to physical gravity.

\subsubsection{Model and training}
\label{app:exp-real-side-training}
Four convolutions with 16/24/32/48 channels, GroupNorm and SiLU, followed
by fixed spatial pooling and a 128-unit head, produce scalar $z$ from
saturation-based images. At every training step, least squares profiles
clip-specific intercepts and velocities and a common acceleration $A$
in $z''=A$. Gradients through this profiled fit update the encoder;
$A$ has no separate optimizer learning rate.
The objective uses dynamics weight 10 and standard-deviation floor
$s_{\mathrm{floor}}=0.10$. Its non-collapse term is
$\lambda_{\mathrm{var}}M^{-1}\sum_m(s_{\mathrm{floor}}-s_m)_+^2$
with $\lambda_{\mathrm{var}}=100$, where $s_m$ is the within-clip
latent standard deviation and $(u)_+=\max(u,0)$.
Horizontal-shift and brightness consistency have weight 0.1;
weak upper-scale and centering penalties have weights 0.01 and 0.001.
AdamW updates the encoder for 8,000 steps at learning rate $10^{-3}$,
with gradient clipping at 1 and zero weight decay. Random convolutional
and linear weights and biases are scaled by 0.1 at initialization;
normalization parameters are unchanged.

\subsubsection{Evaluation}
\label{app:exp-real-side-evaluation}
The post-fit reference pool contains eight videos, four each at 0.8
and 1.2 m; it can overlap training and is disjoint from the test set.
On this pool, fit one global
$z\simeq\lambda_{\rm px}y+\tau$ and common pixel acceleration $a_{\rm px}$
with clip-specific initial conditions, then freeze these quantities.
Coordinate NRMSE $e_{\rm map}$ in \eqref{eq:common-coordinate-error}
uses $x=y$ and $\widetilde x=(z-\tau)/\lambda_{\rm px}$ on the 14
held-out videos, including all seven releases at the unseen training
height 1.4 m. We compute each fit's median error over these videos, then
summarize across fits. Parameter error $e_\theta$ in
\eqref{eq:common-parameter-error} compares $A/\lambda_{\rm px}$ with
$a_{\rm px}$ to assess the affine acceleration orbit; after diameter
calibration, it compares $\hat g$ with $9.81\,\mathrm{m/s^2}$.
The reference position $y$, pixel acceleration $a_{\rm px}$, and
calibration width $d_{\rm px}$ are expressed in native-image pixel units.

\subsubsection{Results and physical calibration}
\label{app:exp-real-side-results}
The fitted $(\lambda_{\rm px},A)$ pairs follow the predicted affine
orbit in \eqref{eq:exp-real-side-orbit}
(Figure~\ref{fig:exp-side-calibrated-results}A), with median
scale-corrected acceleration error $8.69\%$ [3.68, 11.63].
The same frozen affine readout gives median test coordinate NRMSE
$3.75\%$ [3.22, 5.14]; panel B shows its variation across the five
IC collections. These results support approximate agreement with the
orbit while retaining measurable acceleration error.

\textbf{Anchor.} To set the physical length scale, the stationary-ball image in
Figure~\ref{fig:exp-side-calibration} provides
horizontal ball width $d_{\rm px}=88$ native pixels. With diameter reference
$D=0.067$ m, the pixel-to-metre conversion gives
\begin{equation}
 \hat g=\frac{A}{\lambda_{\rm px}}\frac{D}{d_{\rm px}}.
 \label{eq:exp-real-side-anchor}
\end{equation}

\begin{figure}[H]
\centering
\includegraphics[width=.50\linewidth]{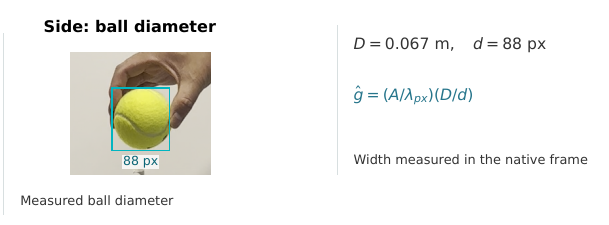}
\caption{\textbf{Side-view physical calibration.} The measured ball width and diameter reference define the spatial conversion used to report gravity.}
\label{fig:exp-side-calibration}
\end{figure}

The calibrated gravity estimate is
$9.370\pm0.771\,\mathrm{m/s^2}$, with parameter relative error
$5.10\%$ [3.21, 7.42] against $9.81\,\mathrm{m/s^2}$ (median [Q25, Q75]).
Figure~\ref{fig:exp-side-calibrated-results}C reports the five gravity
estimates and their mean/SD within each IC collection. The diameter anchor
therefore turns the scale-corrected pixel acceleration into an estimate
in physical units despite variation in the learned coordinate scale.

\begin{figure}[H]
\centering
\includegraphics[width=\linewidth]{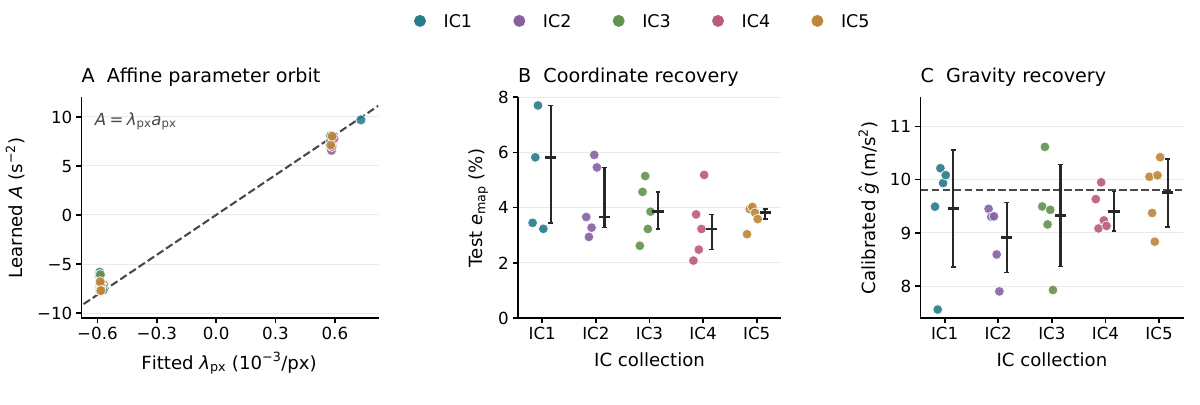}
\caption{\textbf{Affine parameter orbit, coordinate recovery, and calibrated gravity.}
Colors identify the five training IC collections, IC1--IC5;
each point is one of five fits per collection. A: learned acceleration $A$ versus coordinate scale
$\lambda_{\rm px}$; the dashed line is the theoretical relation
$A=\lambda_{\rm px}a_{\rm px}$ from \eqref{eq:exp-real-side-orbit},
using the independently fitted pixel-acceleration reference.
B: held-out coordinate NRMSE $e_{\rm map}$ from
\eqref{eq:common-coordinate-error}; black bars show median and IQR.
C: gravity after diameter calibration in \eqref{eq:exp-real-side-anchor};
black bars show mean $\pm$ sample SD, and the dashed line marks
$9.81\,\mathrm{m/s^2}$.}
\label{fig:exp-side-calibrated-results}
\end{figure}

The small coordinate error does not guarantee an equally accurate
acceleration estimate. Converting pixel acceleration to gravity using
the ball diameter assumes that one pixel represents the same physical
distance in the calibration image and throughout the falling trajectory.
If the ball's distance from the camera changes, or its pixel diameter
is measured inaccurately, the resulting gravity estimate can be biased.
The IC collections and initialization scale were selected during development
using this dataset. The comparisons therefore show how estimates
vary with the training videos and encoder initialization within this dataset.

\FloatBarrier
\subsection{Overhead free fall: connection normalization and physical calibration}
\label{app:exp-real-overhead}
\begingroup
\setlength{\intextsep}{6pt}

\subsubsection{Observations and data}
\label{app:exp-real-overhead-data}
We fix the camera approximately 2 m above the ground, looking downward,
and keep its position and camera settings unchanged throughout recording.
The videos have resolution $1080\times1920$ in portrait orientation and
are recorded at approximately 60 fps. A fixed $960\times960$ crop
with native coordinates $(x,y,w,h)=(60,480,960,960)$ is
downsampled to $64\times64$ for the encoder.
We release the same ball from heights 0.8, 1.0, 1.2, and 1.4 m above
the ground and use five videos at each height, giving 20 dynamic videos in total.
Figure~\ref{fig:exp-overhead-observations} shows example observations
from three release heights.

\begin{figure}[H]
\centering
\includegraphics[width=.9\linewidth]{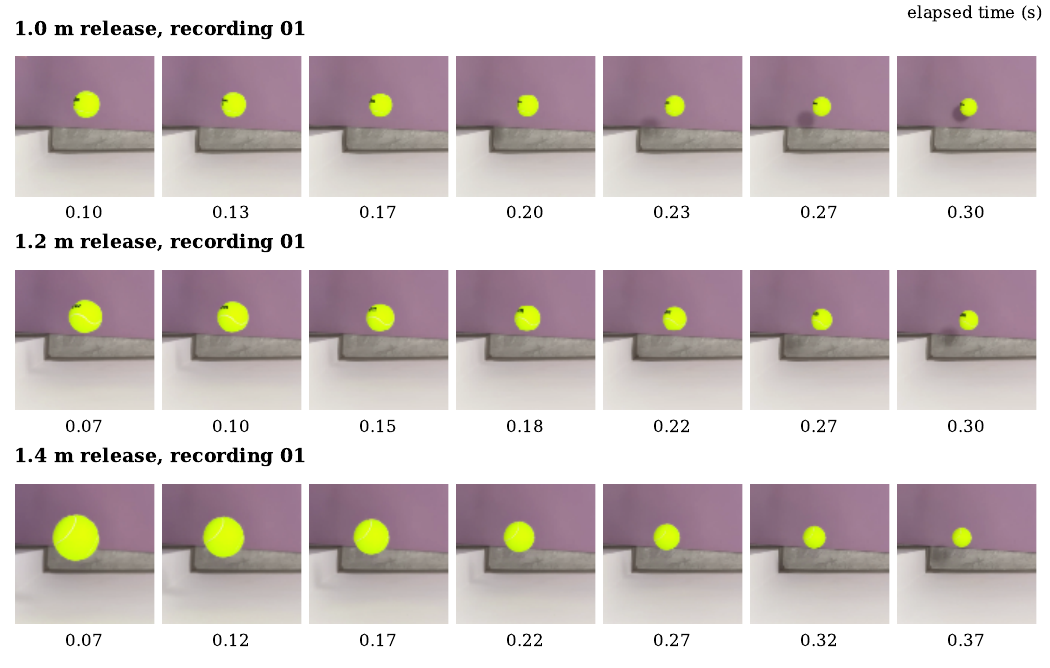}
\caption{\textbf{Overhead observations.} Fixed-crop RGB frames at release
heights 1.0, 1.2, and 1.4 m, one row each. Times are relative to the
start of the corresponding original clip.}
\label{fig:exp-overhead-observations}
\end{figure}

We form five training initial-condition (IC) collections, IC1--IC5,
from different combinations of repeated releases at the four nominal
release heights (Table~\ref{tab:exp-overhead-groups}). These nested
collections contain 5, 7, 9, 11, and 14 videos, respectively, drawn from
the same 14-video training pool; all share the other six videos for
testing. Each collection includes at least one recording from every
release height. Under ideal free fall,
different release heights give distinct velocities at the same lower
height; on their common height interval, this provides the three distinct
velocities required by Theorem~\ref{thm:canonical-report-orbit}.
The retained training segments contain only 13--23 frames each, and
the test windows contain 7--18 frames. Because each clip provides only
a short trajectory, we pool repeated recordings at the same release
heights for joint training and evaluate on a separate pool of held-out
recordings from those heights. This reduces reliance on any single
short clip while keeping the training and test videos distinct.
Recorded timestamps are retained throughout.

\begin{table}[H]
\centering\small
\caption{\textbf{Overhead training and test recordings.}
IC1--IC5 are nested training collections of repeated releases; each
covers all four heights and shares the same six test videos.
Recording IDs are local to each release height. Frames are counted within the
training segments or test windows actually used.}
\label{tab:exp-overhead-groups}
\setlength{\tabcolsep}{6pt}
\begin{tabular}{@{}lccccrr@{}}
\toprule
Collection & 0.8 m & 1.0 m & 1.2 m & 1.4 m & Videos & Frames\\
\midrule
IC1 & 01 & 01 & 01,02 & 01 & 5 & 95\\
IC2 & 01 & 01,02 & 01,02,03 & 01 & 7 & 132\\
IC3 & 01,02 & 01,02 & 01,02,03 & 01,02 & 9 & 166\\
IC4 & 01,02,03 & 01,02,03 & 01,02,03 & 01,02 & 11 & 199\\
IC5 & 01,02,03,04 & 01,02,03,04 & 01,02,03,04 & 01,02 & 14 & 247\\
Test & 05 & 05 & 05 & 03,04,05 & 6 & 78\\
\bottomrule
\end{tabular}
\end{table}

\subsubsection{Theoretical predictions}
\label{app:exp-real-overhead-theory}
The overhead view encodes height through the ball's apparent size.
Let $h$ be upward physical height, $H$ the fixed camera height, and
$R$ the image radius. For a fixed ball under ideal pinhole projection,
$R=C/(H-h)$, where $C>0$ combines focal length and ball radius.
Although physical free fall has constant acceleration $h''=-g$,
differentiating this nonlinear observation map gives
\begin{equation}
 R''=2\frac{(R')^2}{R}-\frac{g}{C}R^2.
 \label{eq:exp-real-overhead-projection}
\end{equation}
The squared-velocity term therefore arises from perspective projection.
This motivates the learned family $z''=\rho(z')^2/z-kz^2$, with $z>0$;
the raw latent coordinate need not be affine in physical height.

To remove this term, we use the law-dependent normalization of
Section~\ref{subsec:canonical-reports}. Substituting $c_2(z)=\rho/z$ into
\eqref{eq:normalizer-ode}, with $\psi_\rho(1)=0$ and
$\psi_\rho'(1)=1$, gives
\begin{equation}
 r=\psi_\rho(z)=
 \begin{cases}(z^{1-\rho}-1)/(1-\rho),&\rho\ne1,\\
 \log z,&\rho=1,\end{cases}
 \qquad r''=-kz^{2-\rho}.
 \label{eq:exp-real-overhead-normalizer}
\end{equation}
Indeed, $\psi_\rho'(z)=z^{-\rho}$, so the chain-rule terms involving
$(z')^2$ cancel in $r''=\psi_\rho'(z)z''+\psi_\rho''(z)(z')^2$.
The map is increasing and regular on $z>0$. At $\rho=2$, it becomes
$\psi_2(z)=1-1/z$. Applied to the ideal image radius, this gives
$\psi_2(R)=h/C+1-H/C$, an affine function of height, with constant
acceleration $-g/C$.

More generally, under the shared, non-collapsed state-map and exact
compatibility assumptions, three distinct velocities at each covered
height allow Theorem~\ref{thm:canonical-report-orbit} to apply.
Since height already has no squared-velocity channel,
\eqref{eq:canonical-encoder-affine} requires $r=\lambda h+\tau$ with
one affine map shared by all clips. Hence $r''=-\lambda g$ is constant.
For $k\ne0$ and nonconstant $z$, the canonical law in
\eqref{eq:exp-real-overhead-normalizer} can satisfy this only when
$\rho=2$. The theory thus predicts recovery of an affine height coordinate
after normalization and an invariant coefficient $\rho=2$ in the ideal
compatible setting.

The coefficient $k$ still depends on coordinate scale. For example,
rescaling the raw output to $\tilde z=cz$, $c>0$, preserves $\rho$ but
changes $k$ to $k/c$. In canonical coordinates, an affine change
$\tilde r=\alpha r+\beta$ rescales acceleration by $\alpha$.
Normalization therefore removes the nonlinear distortion while leaving
the conversion to physical units undetermined. Height anchors determine
$h=ar+b_h$; at $\rho=2$, the calibrated acceleration is $h''=-ak$,
so $g=ak$. For fitted $\rho$ near 2, the corresponding acceleration is
$-akz^{2-\rho}$, whose variation also tests departure from constant
free-fall acceleration. The calibration and error checks below assess
these predictions on the real recordings.

\subsubsection{Model and training}
\label{app:exp-real-overhead-training}
We resize RGB frames to $64\times64$ and apply color-based preprocessing
to enhance the ball's contrast against the background and reduce
background interference.
The encoder uses two $3\times3$ stride-one convolutions, each with two
output channels, GroupNorm and SiLU, followed by global average pooling
and a linear scalar head. Its output $o$ is mapped to the positive latent
$z=0.05+\operatorname{softplus}(o)$.
Network initializations are generated by Gaussian perturbations of one
untrained reference network: each convolutional and linear weight or
bias tensor receives a random perturbation,
while GroupNorm parameters retain their initial values.

The short overhead clips contain only 13--23 training frames, so
numerical second derivatives can amplify frame-level noise.
We therefore replace the finite-difference dynamics term in
\eqref{eq:training_objective} with an integral residual of the same
canonical law in \eqref{eq:exp-real-overhead-normalizer}.
This uses the full clip at its recorded timestamps without numerically
differentiating the encoder outputs. Writing $r_j=\psi_\rho(z_j)$ and
integrating $r_j''=-kz_j^{2-\rho}$ twice from the start of clip $j$ gives
\begin{equation}
 r_j(t)=r_{j,0}+v_{j,0}t
 -k\int_0^t(t-s)z_j(s)^{2-\rho}\,ds.
 \label{eq:exp-real-overhead-integral}
\end{equation}
The clip-specific $(r_{j,0},v_{j,0})$ accommodate different initial
conditions, while the encoder, $\rho$, and $k$ remain shared across clips.

For smooth trajectories with $z_j>0$, satisfying
\eqref{eq:exp-real-overhead-integral} for all $t$ is equivalent to
satisfying the canonical ODE, with $r_{j,0}=r_j(0)$ and
$v_{j,0}=r_j'(0)$. Since $\psi_\rho$ is invertible on this domain,
this is also equivalent to the original learned ODE.
The integral formulation thus preserves the continuous-time dynamics
and parameter-compatibility target of
Theorem~\ref{thm:canonical-report-orbit}.
On sampled, noisy frames, the integral and finite-difference losses
are different numerical objectives; a small integral loss alone does
not establish the theorem's continuous-time compatibility or coverage
assumptions.

Let $\varepsilon_{j,n}$ be the left-hand side minus the right-hand side
of \eqref{eq:exp-real-overhead-integral} at timestamp $t_{j,n}$, and
let $\bar r_j$ be the mean encoded canonical state within clip $j$.
The dynamics loss is
\begin{equation}
 \mathcal L_{\mathrm{int}}(\phi,\rho)
 =\min_{k,\{r_{j,0},v_{j,0}\}}
 \frac{\sum_{j,n}\varepsilon_{j,n}^{2}}
 {\sum_{j,n}\bigl(r_j(t_{j,n})-\bar r_j\bigr)^2}.
 \label{eq:exp-real-overhead-integral-loss}
\end{equation}
The denominator expresses residual error relative to within-clip signal
variation and uses a small numerical floor to prevent division by zero.
We evaluate the integral using piecewise-linear interpolation of
$z^{2-\rho}$ at the recorded timestamps. At every update, linear least
squares computes the shared $k$ and each clip's $(r_{j,0},v_{j,0})$;
these quantities have no separate optimizer learning rate.
Adam jointly updates the encoder and $\rho$ for 4,000 steps, with initial
learning rates $\ell_{{\rm enc},0}=10^{-3}$ and
$\ell_{\rho,0}=6\times10^{-4}$. Both rates follow
\begin{equation}
 \ell_j(t)=\ell_{j,0}\left[0.05+0.475\left(1+\cos\frac{\pi t}{8000}\right)\right],
 \qquad 0\le t\le4000,
 \label{eq:exp-overhead-lr}
\end{equation}
so the final rates are $5.25\times10^{-4}$ and $3.15\times10^{-4}$.
Gradient norms are clipped at 20. Let
$s_0=\max(\operatorname{std}(z_0),0.08)$, fixed from the
initial training outputs. In addition to $\mathcal L_{\mathrm{int}}$,
the objective includes the mean over clips of
$[\max(0,0.6s_0-\operatorname{std}(z_j))/s_0]^2$ and a positivity penalty
below 0.08, each with weight 10. The first discourages near-constant
encodings, serving the same anti-collapse role as
\eqref{eq:AB_variance_floor}; the second adds a margin from the singular
boundary $z=0$ of the law and normalizer. These penalties support the
non-collapse and domain conditions used by the theory without supplying
physical coordinate or parameter labels.

\subsubsection{Evaluation}
\label{app:exp-real-overhead-evaluation}
Let $u=1/R$ denote inverse tracked ball radius. For raw $z$ and canonical
$r=\psi_\rho(z)$, fit separate global affine readouts
$h(z)\simeq\lambda_hu+\tau_h$ on pooled training frames, with $h(z)=z$ or
$\psi_\rho(z)$. Use the frozen inverse $\widetilde u=[h(z)-\tau_h]/\lambda_h$
on the six test clips.
The coordinate reference is $x=u$. Residual plots average test-frame
residuals in twelve state bins to show how the distortion varies with
ball size; error metrics use the original frames.

\subsubsection{Results and physical calibration}
\label{app:exp-real-overhead-results}
The main results in Table~\ref{tab:exp-real-parameters} use IC5, the
largest collection with 14 training videos, and $\rho_0=2.0$.
Table~\ref{tab:exp-overhead-normalization} summarizes the effect of
canonical normalization across its five fits. The coordinate comparison
uses the inverse-radius reference and training-fitted affine readouts;
it requires no physical calibration. Normalization reduces median test
NRMSE from $15.85\%$ to $5.57\%$, while the learned $\rho$ is close to
the ideal value 2.

\begin{table}[H]
\centering\small
\caption{\textbf{Law-derived normalization in overhead fall.}
IC5 with $\rho_0=2.0$, five fits. Coordinate errors $e_{\rm map}$ use
\eqref{eq:common-coordinate-error} and are median [Q25, Q75];
$\rho$ is mean $\pm$ sample SD.}
\label{tab:exp-overhead-normalization}
\setlength{\tabcolsep}{15pt}
\begin{tabular}{@{}ccc@{}}
\toprule
Raw $e_{\rm map}$ (\%) & Canonical $e_{\rm map}$ (\%) & Learned $\rho$\\
\midrule
15.85 [15.74, 16.01] & 5.57 [5.57, 5.60] & $2.072\pm0.024$\\
\bottomrule
\end{tabular}
\end{table}

The predicted canonical acceleration $-kz^{2-\rho}$ has a median
coefficient of variation of $0.73\%$ over test frames: its standard
deviation divided by the absolute value of its mean, in percent.
Multiplication by a constant calibration scale leaves this quantity
unchanged. Together with the coordinate errors, this supports an
approximately affine height coordinate and nearly constant canonical
acceleration. Figure~\ref{fig:exp-overhead-residuals} shows how
normalization reduces the remaining coordinate distortion across ball sizes.

\begin{figure}[H]
\centering
\includegraphics[width=.74\linewidth]{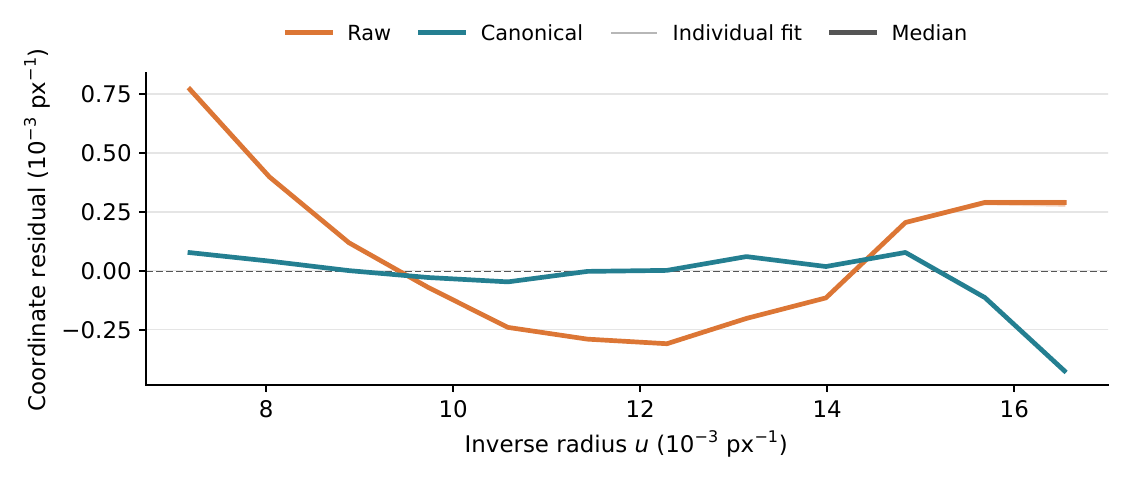}
\caption{\textbf{Coordinate residuals for IC5.}
All five fits use $\rho_0=2.0$. Test-coordinate residuals after the
training-fitted raw or canonical affine readout are averaged in twelve
inverse-radius bins. Thin curves show individual fits; thick curves show
pointwise medians. Zero indicates agreement with the inverse-radius
reference. Physical anchors are not used in these readouts.}
\label{fig:exp-overhead-residuals}
\end{figure}

\textbf{Anchor.} We use the nominal release heights recorded during data acquisition as external physical calibration information to map the learned
canonical coordinate to height in metres.
These height labels are not used to train the encoder or the
dynamical model. After training, we associate each of the seven
original first frames in Figure~\ref{fig:exp-overhead-calibration}
with its nominal release height $h_i\in\{1.0,1.2,1.4\}\,\mathrm{m}$.
The frozen encoder and learned normalizer produce the corresponding
canonical coordinate $r_i=\psi_{\hat\rho}(z_i)$.
Each pair $(r_i,h_i)$ therefore provides a physical anchor linking
the learned coordinate to a height in metric units.
We use these anchors to determine the remaining affine scale and
offset, $h=ar+b_h$:
\begin{equation}
(a,b_h) = \operatorname*{argmin}_{\alpha,\beta}
\sum_i
\frac{[h_i-\alpha r_i-\beta]^2}{n_{h_i}},
\qquad
\hat g = a\hat k
\left\langle z^{2-\hat\rho}\right\rangle_{\rm train}.
\label{eq:exp-real-overhead-anchor}
\end{equation}
Here $n_{h_i}$ is the number of calibration frames assigned to
height $h_i$, so each distinct height receives equal total weight.
The fitted scale $a$ converts canonical acceleration to physical
acceleration; the offset $b_h$ does not affect acceleration.
We report $\hat g$ as the mean over all training frames.

This protocol treats each nominal release height as an approximation
to the ball-center height in the corresponding first frame, rather
than as an independently measured frame-level height.
Any discrepancy therefore contributes to calibration error.
The 1.4 m recording 04 contributes its first frame to calibration
and a later window to coordinate testing. Thus, the two uses involve
different frames, but this recording is not held out from physical
calibration.

\begin{figure}[H]
\centering
\includegraphics[width=.9\linewidth]{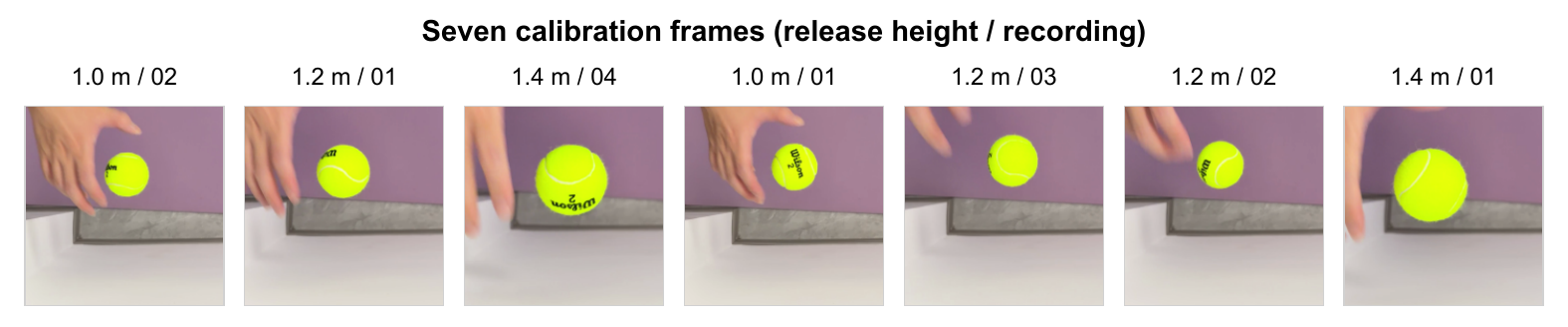}
\caption{\textbf{Release-height anchors.} Original RGB frame zero from
seven recordings, shown before masked-saturation preprocessing. Labels
give nominal release height and recording ID in
Table~\ref{tab:exp-overhead-groups}. Calibration applies the same
preprocessing and frozen encoder used for the dynamic observations.}
\label{fig:exp-overhead-calibration}
\end{figure}

This calibration gives $\hat g=10.146\pm0.029\,\mathrm{m/s^2}$,
with median parameter relative error $e_\theta=3.35\%$ [3.26, 3.39] against
$9.81\,\mathrm{m/s^2}$, as reported in the main
Table~\ref{tab:exp-real-parameters}.
The calibrated acceleration on held-out test frames has dynamics
relative RMSE $e_{\rm dyn}=3.43\%$ [3.33, 3.65].
Gravity is reported as mean $\pm$ sample SD across the five fits;
errors are median [Q25, Q75].
For \eqref{eq:common-dynamics-error}, we use
$\widetilde F_t=-akz_t^{2-\rho}$, $F_t=-9.81$, and equal test-frame
weights. Thus $\hat g$ summarizes training-frame acceleration, whereas
$e_{\rm dyn}$ checks the calibrated law on held-out trajectories.

\subsubsection{Additional analysis: IC collections and coefficient initialization}
\label{app:exp-real-overhead-additional}
We examine sensitivity to the training IC collection and the initial
coefficient $\rho_0$ using IC1--IC5 from
Table~\ref{tab:exp-overhead-groups}. Each collection is fitted with five
initial networks at each $\rho_0\in\{1.0,2.0,3.0\}$, giving
$5\times5\times3=75$ fits. The initial networks are paired across
collections and coefficient initializations; all settings use the same
test videos and release-height anchors.
Figure~\ref{fig:exp-overhead-calibrated-results} compares the IC
collections at $\rho_0=2.0$, and
Table~\ref{tab:exp-overhead-init-results} gives all initialization results.

\begin{figure}[H]
\centering
\includegraphics[width=\linewidth]{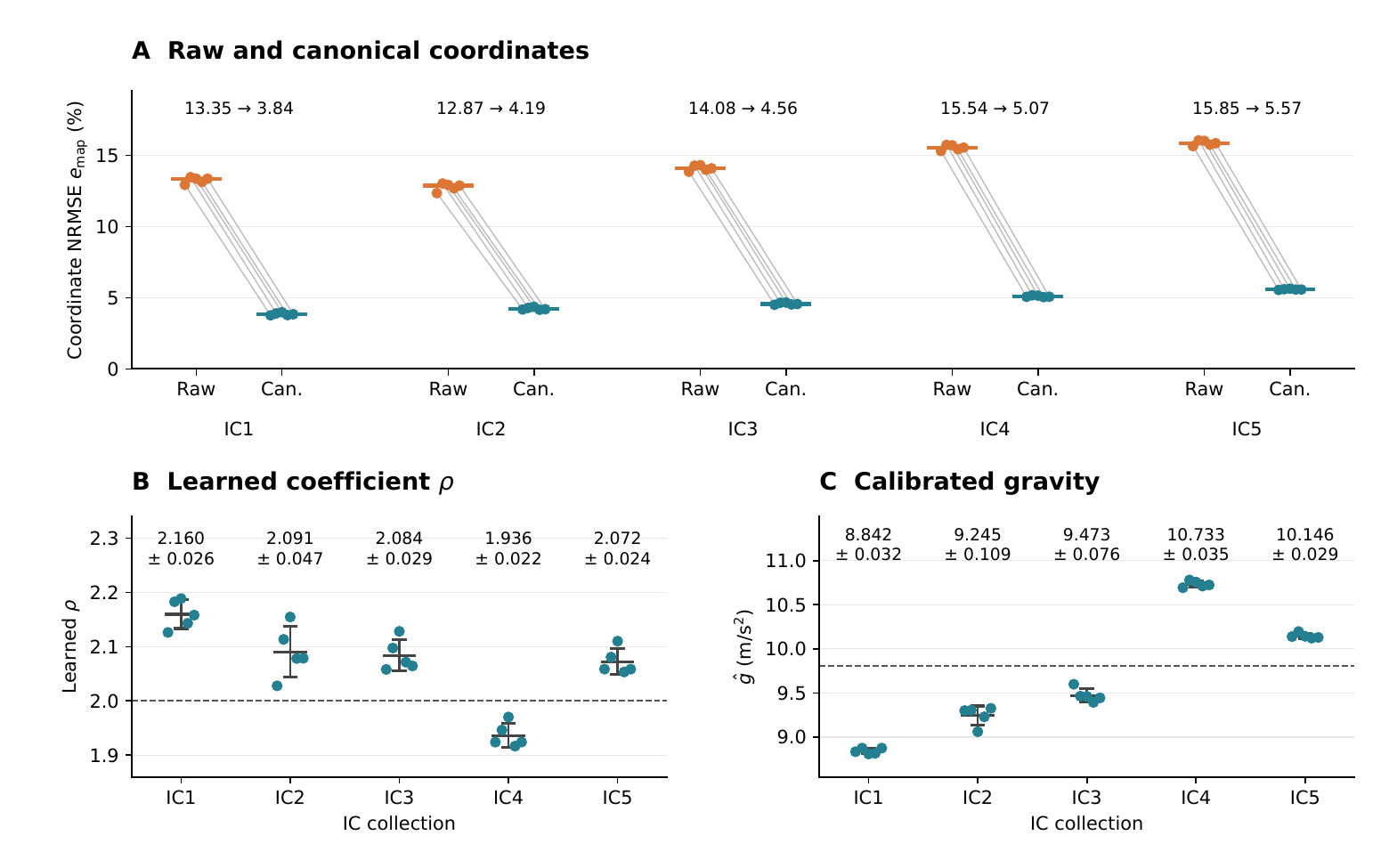}
\caption{\textbf{Sensitivity to the training IC collection.}
IC1--IC5 use the trajectory collections in
Table~\ref{tab:exp-overhead-groups}, with $\rho_0=2.0$ and five paired
local network initializations.
A: paired raw/canonical test NRMSE on the same six test videos;
horizontal bars and labels give medians; vertical bars show the IQR. B: learned $\rho$, with reference 2.
C: gravity from release-height calibration, with reference $9.81\,\mathrm{m/s^2}$.
Bars and labels in B/C show mean $\pm$ sample SD.}
\label{fig:exp-overhead-calibrated-results}
\end{figure}

Canonical normalization reduces coordinate error for every collection
and coefficient initialization. Across collections, however, coordinate
accuracy and calibrated gravity accuracy do not vary together: IC5 has
larger coordinate error than the smaller collections but a gravity
estimate closer to the reference than IC1 and IC2. The estimated
coefficient also varies with the training collection despite the shared
physical setup. Because the nested IC collections differ in both
trajectory composition and clip count, these differences reflect the
two factors jointly.

The initialization comparison examines the estimates reached after
4,000 updates. For IC5, $\rho_0=1.0$ and 2.0 give
$\rho=2.070\pm0.019$ and $2.072\pm0.024$, respectively, whereas
$\rho_0=3.0$ gives $2.259\pm0.013$. The first two starts reach similar
estimates near 2, but the third retains a larger offset; the learned
coefficient therefore remains sensitive to initialization.
For the main paper, we use IC5, which includes all 14 training videos,
with $\rho_0=2.0$. The other collections and initializations provide the
sensitivity analysis reported here.

\begin{table}[H]
\centering\small
\caption{\textbf{IC collections and coefficient initialization in overhead fall.}
Five fits per row, 75 fits in total. Coefficient and gravity estimates are
mean $\pm$ sample SD; coordinate errors are median [Q25, Q75] NRMSE (\%).
IC1--IC5 are defined in Table~\ref{tab:exp-overhead-groups}.
All gravity estimates use release-height calibration. Settings share
six test clips and seven calibration frames.
The main experiment uses IC5 (14 videos) and $\rho_0=2.0$.}
\label{tab:exp-overhead-init-results}
\setlength{\tabcolsep}{5pt}
\resizebox{\linewidth}{!}{\begin{tabular}{@{}ccrrrr@{}}
\toprule
IC collection & $\rho_0$ & Learned $\rho$ & $\hat g$ ($\mathrm{m/s^2}$) & Raw & Canonical\\
\midrule
IC1 & 1.0 & $2.037\pm0.012$ & $9.037\pm0.040$ & 12.81 [12.78, 12.83] & 3.86 [3.70, 3.88]\\
IC1 & 2.0 & $2.160\pm0.026$ & $8.842\pm0.032$ & 13.35 [13.13, 13.35] & 3.84 [3.78, 3.90]\\
IC1 & 3.0 & $2.233\pm0.079$ & $8.674\pm0.054$ & 13.42 [13.29, 13.70] & 3.85 [3.82, 3.94]\\
\addlinespace[3pt]
IC2 & 1.0 & $2.040\pm0.025$ & $9.380\pm0.128$ & 12.58 [12.57, 12.80] & 4.17 [4.15, 4.24]\\
IC2 & 2.0 & $2.091\pm0.047$ & $9.245\pm0.109$ & 12.87 [12.68, 12.91] & 4.19 [4.17, 4.28]\\
IC2 & 3.0 & $2.196\pm0.095$ & $8.960\pm0.067$ & 12.95 [12.84, 13.29] & 4.26 [4.25, 4.39]\\
\addlinespace[3pt]
IC3 & 1.0 & $2.096\pm0.019$ & $9.423\pm0.039$ & 14.09 [14.01, 14.24] & 4.57 [4.54, 4.64]\\
IC3 & 2.0 & $2.084\pm0.029$ & $9.473\pm0.076$ & 14.08 [13.97, 14.27] & 4.56 [4.52, 4.64]\\
IC3 & 3.0 & $2.293\pm0.027$ & $9.177\pm0.075$ & 15.05 [14.92, 15.19] & 4.69 [4.67, 4.77]\\
\addlinespace[3pt]
IC4 & 1.0 & $1.935\pm0.019$ & $10.718\pm0.040$ & 15.52 [15.39, 15.65] & 5.06 [5.06, 5.13]\\
IC4 & 2.0 & $1.936\pm0.022$ & $10.733\pm0.035$ & 15.54 [15.42, 15.69] & 5.07 [5.06, 5.14]\\
IC4 & 3.0 & $2.171\pm0.011$ & $10.292\pm0.058$ & 16.59 [16.48, 16.63] & 5.17 [5.16, 5.24]\\
\addlinespace[3pt]
IC5 & 1.0 & $2.070\pm0.019$ & $10.142\pm0.035$ & 15.83 [15.72, 15.95] & 5.58 [5.57, 5.60]\\
IC5 & 2.0 & $2.072\pm0.024$ & $10.146\pm0.029$ & 15.85 [15.74, 16.01] & 5.57 [5.57, 5.60]\\
IC5 & 3.0 & $2.259\pm0.013$ & $9.834\pm0.048$ & 16.69 [16.60, 16.75] & 5.69 [5.68, 5.69]\\
\bottomrule
\end{tabular}}
\end{table}

\endgroup
\FloatBarrier

\section{Supplementary identifiability tool}
\label{app:web-tool}

The supplementary offline browser tool takes a scalar ODE and parameter
restrictions as input. For a supported theorem route, it reports parameter
invariants under the admissible coordinate maps, calibration requirements,
and the velocity feature rank required at each physical state. For
squared-velocity models, it indicates when canonical normalization must
precede parameter comparison. Figures~\ref{fig:web-cubic}--\ref{fig:web-overhead}
show three paper examples.
The supplementary material includes the offline tool and a README with
setup and usage instructions.

\begin{figure}[H]
\centering
\includegraphics[width=\linewidth]{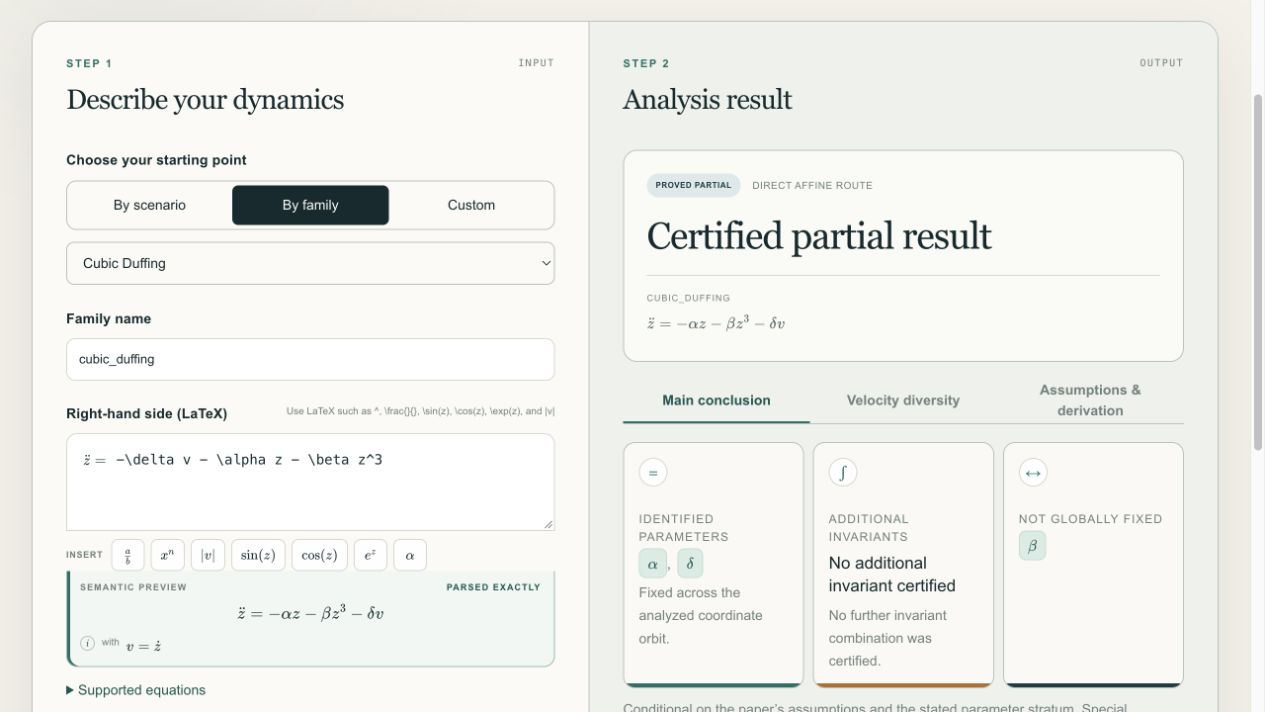}
\caption{\textbf{Cubic Duffing example in the supplementary tool.}
For the displayed nonzero-$\beta$ family, the tool reports $\alpha$ and
$\delta$ as fixed across the analyzed coordinate orbit, while $\beta$
varies with coordinate scale.}
\label{fig:web-cubic}
\end{figure}

\begin{figure}[H]
\centering
\includegraphics[width=\linewidth]{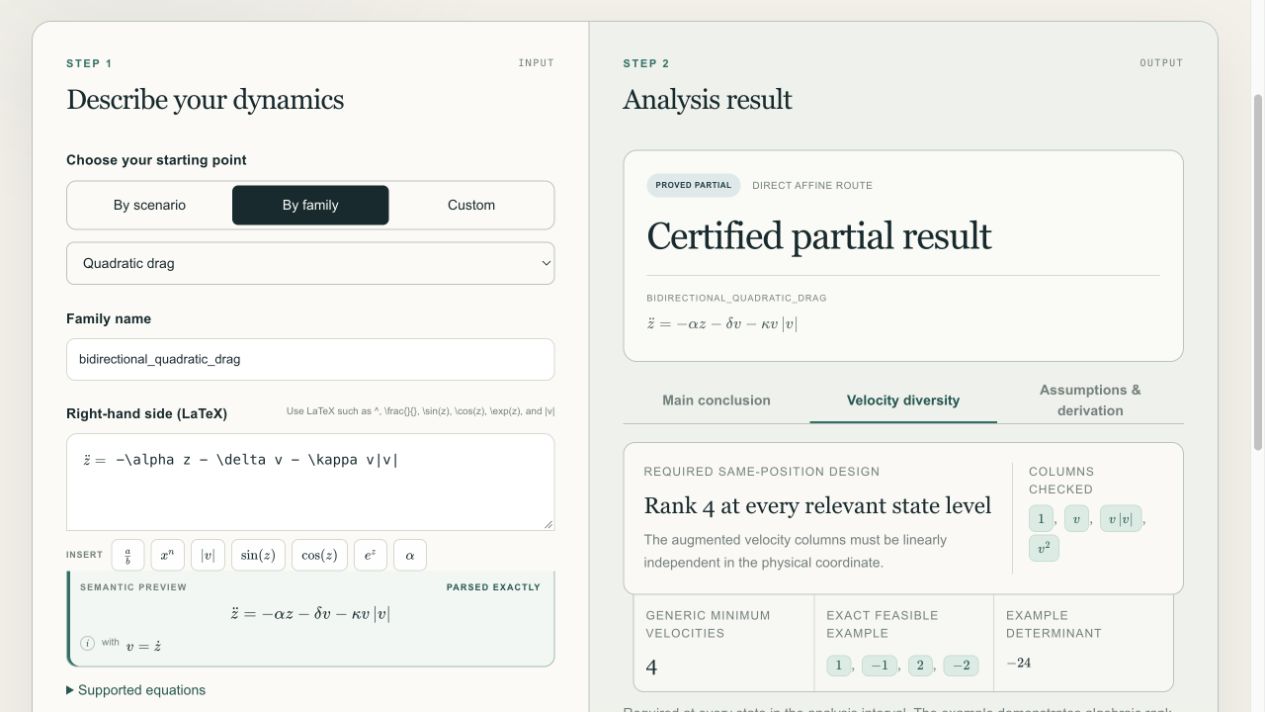}
\caption{\textbf{Velocity coverage for quadratic drag.}
The tool lists the required rank-four feature set
$[1,v,v|v|,v^2]$ at each physical state and an algebraically feasible
velocity example $\{1,-1,2,-2\}$.}
\label{fig:web-drag}
\end{figure}

\begin{figure}[H]
\centering
\includegraphics[width=\linewidth]{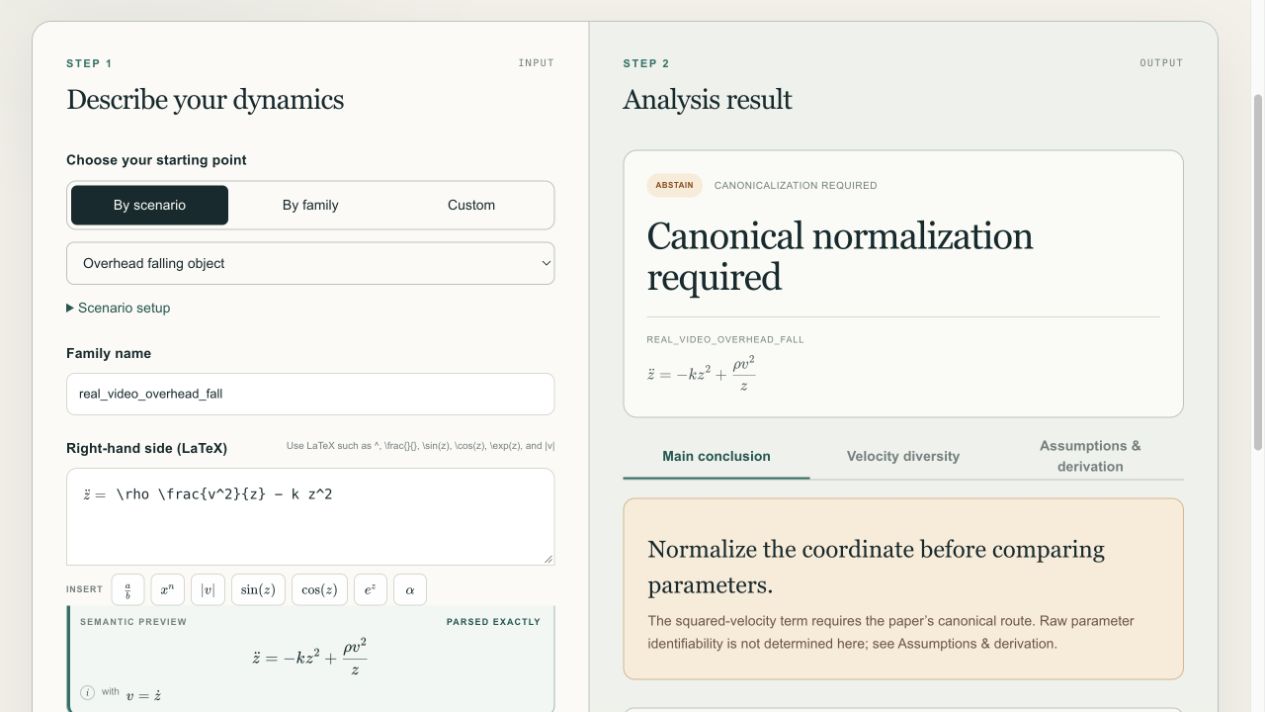}
\caption{\textbf{Canonical route for overhead fall.}
For $z''=\rho(z')^2/z-kz^2$, the tool directs the comparison to the
canonical coordinate before assessing parameter compatibility.}
\label{fig:web-overhead}
\end{figure}

\FloatBarrier

\end{document}